\theoremstyle{plain}
\newtheorem{theorem}{Theorem}[section]
\newtheorem{lemma}[theorem]{Lemma}
\newtheorem{corollary}[theorem]{Corollary}
\theoremstyle{definition}
\theoremstyle{remark}
\definecolor{LightCyan}{rgb}{0.6,1,1}
\newcommand{\mtx}{\bm} 
\newcommand{\vct}{\bm} 
\newcommand{\bias}{\textbf{Bias}^2}
\newcommand{\var}{\textbf{Variance}}
\newcommand{\varclean}{\textbf{Variance}_{\text{clean}}}
\newcommand{\varnoise}{\textbf{Variance}_{\text{noise}}}
\DeclareMathOperator*{\argmin}{arg\,min}
\DeclareMathOperator{\Tr}{Tr}
\title{Investigating the Impact of Model Width and Density on Generalization in Presence of Label Noise}
\author[1]{Yihao Xue}
\author[2]{Kyle Whitecross}
\author[1]{Baharan Mirzasoleiman}
\affil[1]{%
    Computer Science Department\\
    University of California
}
\affil[2]{%
College of Information and Computer Sciences\\
University of Massachusetts Amherst
}
\begin{document}

\maketitle

\begin{abstract}
 Increasing the size of overparameterized neural networks has been a key in achieving state-of-the-art performance. This is captured by the double descent phenomenon, where the test loss follows a decreasing-increasing-decreasing pattern (or sometimes monotonically decreasing) as model width increases. However, the effect of label noise on the test loss curve has not been fully explored. In this work, we uncover an intriguing phenomenon where label noise leads to a \textit{final ascent} in the originally observed double descent curve. Specifically, under a sufficiently large noise-to-sample-size ratio, optimal generalization is achieved at intermediate widths. Through theoretical analysis, we attribute this phenomenon to the shape transition of test loss variance induced by label noise. Furthermore, we extend the final ascent phenomenon to model density and provide the first theoretical characterization showing that reducing density by randomly dropping trainable parameters improves generalization under label noise. We also thoroughly examine the roles of regularization and sample size. Surprisingly, we find that larger $\ell_2$ regularization and robust learning methods against label noise exacerbate the final ascent. We confirm the validity of our findings through extensive experiments on ReLu networks trained on MNIST, ResNets/ViTs trained on CIFAR-10/100, and InceptionResNet-v2 trained on Stanford Cars with real-world noisy labels.\looseness=-1
\end{abstract}

\section{Introduction}
Training neural networks of ever-increasing size on large datasets has played a pivotal role in achieving state-of-the-art generalization performance across various tasks \citep{devlin2018bert,dosovitskiy2020image,brown2020language,radford2021learning}. 
While large unlabeled data can often be easily collected, obtaining high-quality labels for these datasets is prohibitively expensive, leading to the use of labeling techniques, such as crowd-sourcing and automatic labeling, that introduce a significant amount of label noise \citep{krishna2016embracing}. Unfortunately, the impact of label noise on the generalization behavior of neural networks in relation to model size has not been comprehensively examined. \looseness=-1

Recent research has aimed to reconcile the generalization benefits of increasing the size of overparameterized neural networks with the classical bias-variance trade-off that advocates for intermediate model size. To this end, the double descent phenomenon \citep{belkin2019reconciling,spigler2019jamming} has been proposed, suggesting that the test loss initially follows a U-shape but then descends again once the model is overparameterized. At times, the U-shaped behavior may not be evident, and the curve may seem to be monotonically decreasing\citep{nakkiran2021deep}. The works of \citep{yang2020rethinking,adlam2020understanding} attribute the double descent to the combination of decreasing bias and the unimodal \textit{variance} of the test loss w.r.t. model width. However, the effect of label noise on the double descent phenomenon has not been fully explored.\looseness=-1

In this work, we reveal that label noise can significantly alter the shape of the loss curve. Our study uncovers an intriguing phenomenon, which we refer to as the \textit{final ascent}, manifesting either a \textit{U-shape or a decreasing-increasing-decreasing-increasing} pattern. In a wide range of settings, label noise leads to an eventual ascent in the original loss curve, distinguishing it from both the double descent or monotonically decreasing curve.
Through theoretical analysis, we show that this occurs because sufficiently large \textit{noise-to-sample size ratio} transforms the variance of the test loss from unimodal to an increasing-decreasing-increasing shape. We also provide further insights into the final ascent. Firstly, perhaps surprisingly, applying stronger $\ell_2$ regularization only exacerbates the phenomenon as it enables intermediate model sizes to achieve lower test loss. Secondly, increasing the sample size can alleviate the final ascent.\looseness=-1

Furthermore, we add \emph{model density}, the fraction of weights that are trainable, as a new dimension to the discussion on generalization. 
We provide the first theoretical characterization of final ascent w.r.t. model density, showing that optimal generalization occurs at intermediate density levels under label noise. We also make two significant findings. Firstly, as density decreases, the optimal width increases and achieves lower test loss, highlighting the advantage of wider but sparser models. Secondly, while reducing density has a similar effect to stronger $\ell_2$ regularization in theory, in practice, adjusting the density of neural networks achieves even lower density compared to adjusting the regularization.\looseness=-1

We also empirically examine the final ascent phenomenon when models are trained with robust learning algorithms that are designed to counteract the effect of label noise. Our results demonstrate that, similar to the impact of $\ell_2$ regularization, SOTA robust algorithms \citep{liu2020early, li2020dividemix} amplify the final ascent phenomenon. Notably, reducing model density can further improve the SOTA performance for robust training methods, such as DivideMix \citep{li2020dividemix} and ELR \citep{liu2020early}, as shown in Figures \ref{fig:elr_apdx_density} and \ref{fig:stanford_car_elr}. This suggests that models with intermediate width or density can be even more beneficial when these algorithms are used, which is typically the case in practical scenarios.

\looseness=-1

Our findings are supported by 
extensive experiments. We confirm the validity of our results across various settings, including training two-layer networks on MNIST \citep{lecun1998mnist}, ResNets \citep{he2016deep} and ViT \citep{dosovitskiy2020image} on CIFAR-10 and CIFAR-100 \citep{krizhevsky2009learning}, and InceptionResNet-v2 
 \citep{szegedy2017inception} on Stanford Cars with real label noise\citep{jiang2020beyond}. We provide an in-depth discussion of how different factors affect the final ascent. Notably, the final ascent can even occur with only 20\% label noise. 

Our results highlight the following important messages: 
\begin{itemize}
\vspace{-1mm}
\itemsep0pt
    \item It is important to use large models with care in presence of label noise,
    \item {Wider but sparser models contribute to improved generalization under label noise},
    \item Above considerations are of increased significance when data is limited, or when strong regularization or robust methods are used.
\end{itemize}\vspace{-2mm}



\section{Related Work}\label{sec:related}\vspace{-1mm}
\textbf{Double descent.} Contrary to the classical learning theory that advocates for intermediate model size, recent works have shown that increasing the size of overparameterized neural networks only improves generalization \citep{neyshabur2014search}. This is explained by the double descent phenomenon \citep{belkin2019reconciling,spigler2019jamming}, which posits that the test loss initially follows a U-shape and then descends again once the model is overparameterized. This phenomenon has been theoretically investigated in various regression settings, including one-layer linear \citep{belkin2020two,derezinski2020exact,kuzborskij2021role}, random feature \citep{hastie2019surprises,mei2019generalization,adlam2020understanding,yang2020rethinking,d2020double}, and NTK \citep{pmlr-v119-adlam20a} regressions. While the primary focus has been on the behavior of the test loss \citep{hastie2019surprises,belkin2020two,derezinski2020exact,pmlr-v119-adlam20a}, some works have decomposed it into bias and variance \citep{mei2019generalization,yang2020rethinking}, and a few have further decomposed the variance into multiple sources, including label noise \citep{adlam2020understanding,d2020double}. However, prior studies primarily focused on how label noise intensifies the double descent curve's peak \citep{adlam2020understanding,nakkiran2021deep}. 
We discover the novel concept of \emph{final ascent}, for the first time, and show that increased regularization---via $\ell_2$ or robust methods---that is essential to train robustly in presence of label noise, exacerbates the final ascent. This makes our study distinct from prior work \citep{adlam2020understanding,nakkiran2021deep,d2020double}, which did not account for the role of regularization/robust methods, and hence failed to observe the final ascent.
\citep{nakkiran2020optimal} considered regularization, but explored a different phenomenon than us, showing that optimal regularization tuned for each width can eliminate the peak in a double descent curve. 
In summary, our finding challenges the prevailing view that larger models always perform better, showing that increasing model size  can have detrimental effects under noise and regularization. Moreover, our study is the first to assess model density's effect, showing its potential to achieve SOTA with robust methods (Fig \ref{fig:elr_apdx_density} \ref{fig:dividemix_apdx_density} \ref{fig:stanford_car_elr}). 
\looseness=-1

\textbf{Neural Network Density.}  \citep{frankle2018lottery,lee2018snip,frankle2019stabilizing,wang2020picking,frankle2020pruning,tanaka2020pruning} proposed methods to reduce model density (pruning) for improved training and inference efficiency. \citep{jin2022pruning} empirically investigated the impact of density on generalization using a pruning technique \citep{renda2020comparing} involving training and rewinding. However, the effect of density as a factor of \emph{model size} has remained poorly understood, as pruning techniques incorporate additional information from training, initialization, or gradients. In contrast, we randomly drop weights thus isolating the effect of model size from other factors. We provide theoretical analysis of random feature regression, complementing the empirical nature of the aforementioned works. Exploring the effects of specific pruning techniques on the final ascent phenomenon uncovered in this paper is an interesting future work.\looseness=-1

\textbf{Robust Methods.} 
Extensive efforts have been made to develop methods  for robust learning against noisy labels  \citep{zhang2018generalized,jiang2018mentornet,han2018co,mirzasoleiman2020coresets,liu2020early,li2020dividemix,xia2019anchor}. These methods serve as a regularization to mitigate label noise. We will demonstrate that robust methods exacerbate the final ascent, i.e., reducing the model width or density can yield even greater benefits when employing robust methods.\looseness=-1

\section{Theoretical Analysis of Random Feature Ridge Regression}\label{sec: theory}


We conduct a theoretical analysis of label noise's effect on the test loss curve in a linear neural network with a random first layer. Random feature regression has been the go-to model in studying the double descent phenomenon \citep{hastie2019surprises,mei2019generalization,yang2020rethinking,adlam2020understanding,d2020double}, owing to its theoretical tractability. We note that studying regression tasks  is a widely adopted approach for understanding neural networks' generalization behavior  \citep{hastie2019surprises,mei2019generalization,advani2020high,bartlett2020benign} and 
yields meaningful conclusions that extend to \emph{classification} tasks, as is confirmed by prior work \cite{yang2020rethinking,d2020double,nakkiran2020optimal} and we will also show experimentally. 

We start by showing that sufficiently large noise-to-sample-size ratio introduces a final ascent to the loss curve, and explain it through the shape of variance. Then, we provide the first theoretical study on the effect of {model density}, showing the benefit of reducing density.
These findings will be validated in various neural network classification tasks in Sec \ref{sec: exp}.\looseness=-1
\vspace{-3mm}


\subsection{Effect of Width: the Final Ascent}\label{sec:theory_width}

\begin{figure}
    \centering
    \includegraphics[width=.8\columnwidth]{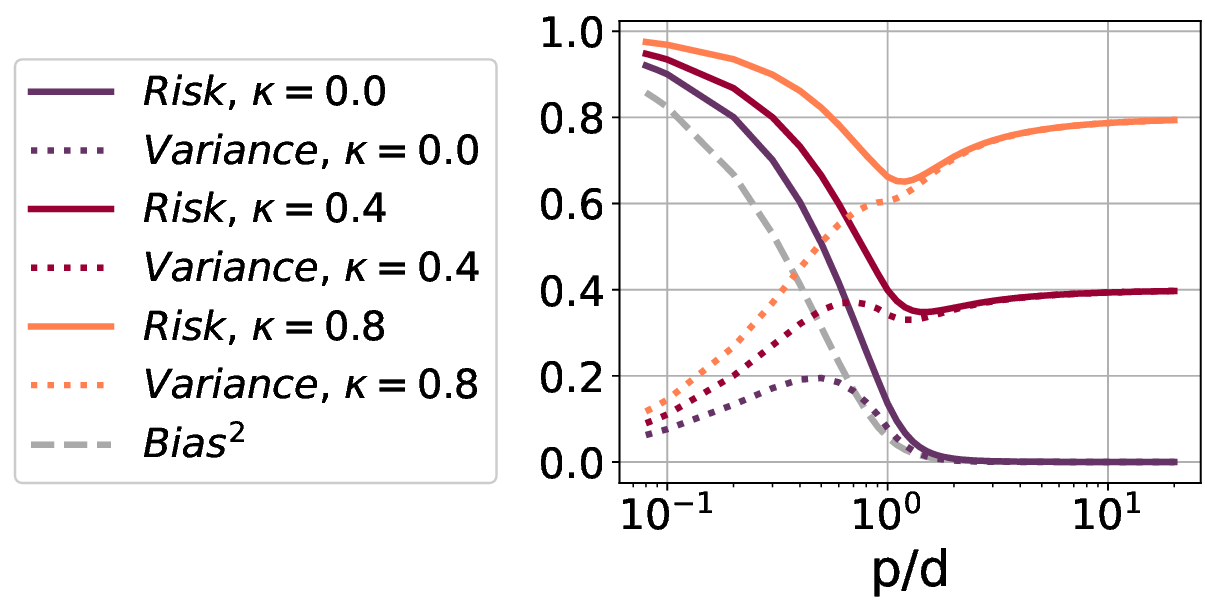}
    \vspace{-0mm}
    \caption{ \small Decomposition of test loss. $\textbf{Risk}= \bias +\var $. $\bias$ always monotonically decreases. $\var$ exhibits a transition from a unimodal shape to an increasing-decreasing-increasing pattern as noise increases, leading to the final ascent in test loss.\looseness=-1}
    \label{subfig:theo_risk}
    \vspace{-1mm}
\end{figure}

\begin{figure}
    \centering
\includegraphics[width=.9\columnwidth]{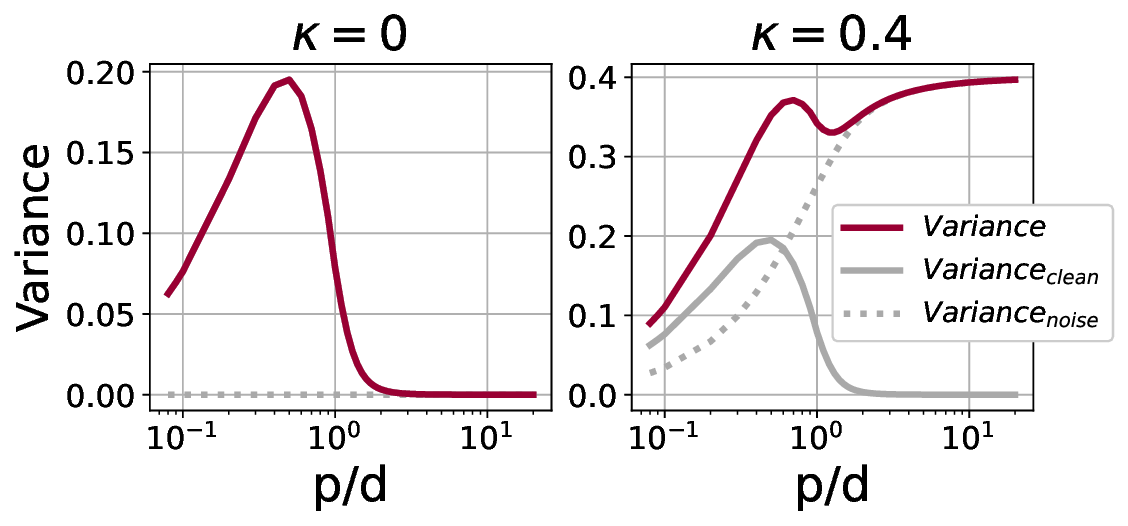}
\vspace{-2mm}
    \caption{\small Decomposition of variance. $\var = \varclean + \varnoise$. $\varclean$ is always unimodal. $\varnoise$ monotonically increases with width, and its scale grows with noise level, leading to the increasing-decreasing-increasing pattern of $\var$ at sufficient noise. \looseness=-1 }
    \label{subfig:theo_variance}
\end{figure}


Suppose we have a training dataset $(\mtx{X},\vct{y})$ where $\mtx{X}=[\vct{x}_1, \vct{x}_2,\dots, \vct{x}_n]$ and $\vct{y}=[y_1, y_2,\dots, y_n]^{\top}$. Each input $\vct{x}_i\in \mathbb{R}^d$ is independently drawn from a Gaussian distribution $\mathcal{N}(0, \mathbf{I}_d/d)$. Each label $y_i\in \mathbb{R}$ is generated as $y_i=\vct{x}_i^{\top}\vct{\theta}+\epsilon_i$. Here, $\vct{\theta}\in \mathbb{R}^d$ has entries independently drawn from $\mathcal{N}(0,1)$, and $\epsilon_i\in\mathbb{R}$ is the label noise drawn from $\mathcal{N}(0,\sigma^2)$ for each $\vct{x}_i$. We learn a two-layer linear network where the first layer $\mtx{W}\in\mathbb{R}^{p\times d}$ has entries randomly drawn from $\mathcal{N}(0,1/d)$ and the second layer is given by 
\begin{align}
\vspace{-7mm}
    \nonumber
    \hat{\vct{\beta}} = &\argmin_{\vct{\beta}\in\mathbb{R}^p} \| (\mtx{W}\mtx{X})^{\top}\vct{\beta}-\vct{y} \|^2 + \lambda\|\vct{\beta}\|^2 \\
    \nonumber
    =&(\mtx{W}\mtx{X}\mtx{X}^{\top}\mtx{W}^{\top}+\lambda \mathbf{I})^{-1}\mtx{W}\mtx{X}(\mtx{X}^{\top}\vct{\theta}+\vct{\epsilon}),
    \vspace{-6mm}
\end{align}
where $\vct{\epsilon}=[\epsilon_1,\epsilon_2,\dots,\epsilon_n]^{\top}$. Given a test example with non-noisy label $(\vct{x},y)$ where $\vct{x}\sim \mathcal{N}(0,\mathbf{I}_d/d)$ and $y\!=\!\vct{x}^{\top} \vct{\theta}$, the prediction of the learned model is given by $f(\vct{x}) \!=\! (\mtx{W}\vct{x})^{\top}\hat{\vct{\beta}}$. The expected risk (test loss) can be written as 
\begin{align}
    \label{eq:decomposition_mse}
    \textbf{Risk} = &\mathbb{E}_{\vct{\theta}}\mathbb{E}_{\vct{x}} \mathbb{E}_{\mtx{X},\mtx{W},\vct{\epsilon}}(f(\vct{x})-y)^2 \\
    \nonumber
    = & \underbrace{\mathbb{E}_{\vct{\theta}}\mathbb{E}_{\vct{x}}( \mathbb{E}_{\mtx{X},\mtx{W},\vct{\epsilon}}f(\vct{x})-y)^2}_{\bias}+ \underbrace{\mathbb{E}_{\vct{\theta}}\mathbb{E}_{\vct{x}} \mathbb{V}_{\mtx{X},\mtx{W},\vct{\epsilon}}f(\vct{x})}_{\var}\\
   \nonumber
    =& \underbrace{\frac{1}{d}\|\mathbb{E}_{\mtx{X},\mtx{W}}\mtx{B}-\mathbf{I}\|_F^2}_{\bias}+\underbrace{\frac{1}{d}\mathbb{E}_{\mtx{X},\mtx{W}}\|\mtx{B}-\mathbb{E}_{\mtx{X},\mtx{W}}\mtx{B}\|_F^2}_{\varclean}\\
    \vspace{-3mm}
     \label{eq:further_decomp}
    +&\underbrace{\frac{\sigma^2}{d}\mathbb{E}_{\mtx{X},\mtx{W}}\|\mtx{A}\|_F^2}_{\varnoise},
\end{align}
where $\mtx{A}\coloneqq \mtx{W}^{\top}(\mtx{W}\mtx{X}\mtx{X}^{\top}\mtx{W}^{\top}+\lambda \pmb{I})^{-1}\mtx{W}\mtx{X}$ and $\mtx{B}\coloneqq \mtx{A}\mtx{X}^{\top}$ (see the derivation in Appendix \ref{apdx:derivation_decomposition}). Eq. \ref{eq:decomposition_mse} decomposes the risk into bias and variance, and Eq. \eqref{eq:further_decomp} subsequently breaks down the variance into two terms. The second term, $\varnoise$, captures the impact of label noise.

Our analysis is conducted under the high-dimensional asymptotic limit where $n$, $d$, and $p$ tend to infinity, while maintaining the ratios $\frac{n}{d}=\psi$, $\frac{p}{d}=\gamma$, and $\frac{\sigma^2}{(n/d)}=\kappa$ constant. To simplify the analysis further, we set $\psi=\infty$. This limit is consistent with the one used in \citep{yang2020rethinking}, which has been shown to capture important features of the double descent. 
Our numerical experiments in Sec \ref{subsec: numerical} confirms that our conclusions hold in a broader range of settings outside of this regime.
We note that the noise-to-sample size ratio $\kappa$ remains finite, despite the infinite value of $\psi$. The explicit expression of $\varnoise$ is shown below.\looseness=-1


\begin{theorem}\label{theo:vnoise}
For a 2-layer linear network with $p$ hidden neurons and a random first layer, consider learning the second layer by ridge regression with regularizer $\lambda$ on $n$ training examples with feature dimension $d$, and label noise with variance $\sigma$. Let $\lambda = \frac{n}{d}\lambda_0$ and $\sigma^2 = \frac{n}{d} \kappa$ for some fixed $\lambda_0$ and $\kappa$. The asymptotic expression (where $n, d, p\rightarrow \infty$ with $\frac{n}{d}= \infty$ and $\frac{p}{d}=\gamma$) of $\varnoise$ is given by 
\begin{align}
    \nonumber
\frac{\kappa}{2}\left(\gamma+2\lambda_0+1- \frac{\gamma^2+(3\lambda_0-2)\gamma+2\lambda_0^2+3\lambda_0+1}{\sqrt{\gamma^2+(2\lambda_0-2)\gamma+\lambda_0^2+2\lambda_0+1}} \right) 
\end{align}
\end{theorem}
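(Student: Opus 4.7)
The plan is to use the assumption $n/d\to\infty$ to replace the sample Gram matrix $\mtx{X}\mtx{X}^{\top}$ by its mean, reducing $\varnoise$ to a spectral functional of $\mtx{S}\coloneqq\mtx{W}\mtx{W}^{\top}$, and then evaluate the resulting integral against the Marchenko--Pastur (MP) law via its Stieltjes transform.

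First I would expand $\|\mtx{A}\|_F^2=\Tr(\mtx{A}\mtx{A}^{\top})$ and apply cyclicity of trace to get
\begin{align*}
    \|\mtx{A}\|_F^2 = \Tr\bigl(\mtx{W}\mtx{W}^{\top}\,(\mtx{W}\mtx{X}\mtx{X}^{\top}\mtx{W}^{\top}+\lambda\mathbf{I})^{-1}\,\mtx{W}\mtx{X}\mtx{X}^{\top}\mtx{W}^{\top}\,(\mtx{W}\mtx{X}\mtx{X}^{\top}\mtx{W}^{\top}+\lambda\mathbf{I})^{-1}\bigr).
\end{align*}
Conditional on $\mtx{W}$, concentration of Gaussian sample covariance gives $\mtx{X}\mtx{X}^{\top}=(n/d)\mathbf{I}_d+o(1)$ as $n/d\to\infty$ (this is the force of the assumption $n/d=\infty$ in the theorem). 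Substituting and using $\lambda=(n/d)\lambda_0$, every power of $n/d$ cancels, leaving
\begin{align*}
    \varnoise \;\longrightarrow\; \frac{\sigma_0^2}{d}\,\Tr\!\bigl(\mtx{S}^2(\mtx{S}+\lambda_0\mathbf{I})^{-2}\bigr) \;=\; \gamma\sigma_0^2\cdot\frac{1}{p}\sum_{i=1}^{p}\frac{\mu_i^2}{(\mu_i+\lambda_0)^2},
\end{align*}
where $\mu_1,\ldots,\mu_p$ are the eigenvalues of $\mtx{S}$ and $p/d=\gamma$.

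Next I would invoke the MP theorem: since $\mtx{W}$ has i.i.d.\ $\mathcal{N}(0,1/d)$ entries with $p/d\to\gamma$, the empirical spectral distribution of $\mtx{S}$ converges weakly to the MP law $\mu_{MP}$ of aspect ratio $\gamma$. Decomposing the integrand via $\tfrac{\mu^2}{(\mu+\lambda_0)^2}=1-\tfrac{2\lambda_0}{\mu+\lambda_0}+\tfrac{\lambda_0^2}{(\mu+\lambda_0)^2}$ converts the limiting trace into values of the Stieltjes transform $m(z)\coloneqq\int(\mu-z)^{-1}d\mu_{MP}(\mu)$ and its derivative, yielding
\begin{align*}
    \varnoise \;=\; \gamma\sigma_0^2\Bigl(1-2\lambda_0\,m(-\lambda_0)+\lambda_0^2\,m'(-\lambda_0)\Bigr).
\end{align*}
Both quantities admit standard closed forms: $m(-\lambda_0)$ is the positive root of the MP fixed-point equation $\gamma z m^2+(z-1+\gamma)m+1=0$ at $z=-\lambda_0$, and $m'(-\lambda_0)$ follows by implicit differentiation. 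The discriminant of this quadratic is exactly $(\gamma+\lambda_0+1)^2-4\gamma=\gamma^2+(2\lambda_0-2)\gamma+\lambda_0^2+2\lambda_0+1$, matching the radicand in the target expression, which is a strong indication the calculation is on track.

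The main obstacle is the final algebraic simplification. After substituting the closed forms for $m(-\lambda_0)$ and $m'(-\lambda_0)$, one obtains a bulky rational expression in $\gamma,\lambda_0$ and $\sqrt{(\gamma+\lambda_0+1)^2-4\gamma}$ that must be collapsed into the compact form $\tfrac{\sigma_0^2}{2}\bigl(A-B/\sqrt{\cdot}\bigr)$. I would handle this by repeatedly using the fixed-point identity to eliminate $\gamma\lambda_0\,m(-\lambda_0)^2$ in favor of lower-order terms, then group everything multiplying the square root into one fraction and everything polynomial in $\gamma,\lambda_0$ into the other. As sanity checks I would verify $\lambda_0\to 0$, where the formula must reduce to $\sigma_0^2\min(\gamma,1)$ (correctly accounting for the MP point mass at $0$ when $\gamma>1$), and $\lambda_0\to\infty$, where $\varnoise\to 0$; both correspond to transparent limits of the $m$-transform and provide tight constraints on the final rearrangement.
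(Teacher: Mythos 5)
Your proposal is correct and follows essentially the same route as the paper: both reduce $\varnoise$ to the deterministic spectral functional $\frac{\sigma_0^2}{d}\Tr\bigl(\mtx{S}^2(\mtx{S}+\lambda_0\mathbf{I})^{-2}\bigr)$ by exploiting the concentration of $\frac{d}{n}\mtx{X}\mtx{X}^{\top}$ around the identity (which the paper makes rigorous via a resolvent-perturbation lemma), and then evaluate that quantity against the Marchenko--Pastur law. The only difference is computational rather than conceptual --- the paper integrates directly against the MP density after a Sherman--Morrison rewriting, whereas you express the same integral through the Stieltjes transform and its derivative at $-\lambda_0$ --- and your sanity checks (the discriminant matching the radicand, the $\lambda_0\to 0$ limit $\sigma_0^2\min(\gamma,1)$, and $\varnoise\to 0$ as $\lambda_0\to\infty$) are all consistent with the stated formula.
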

We provide the derivation of $\varnoise$ in Appendix \ref{apdx:proof_width}. Note that $\bias$ monotonically decreases and $\varclean$ is unimodal \citep{yang2020rethinking} (see Appendix \ref{apdx:yangsresult} for expressions of $\bias$ and $\varclean$). By plotting the theoretical expressions, we make the following key observations. 

\textbf{The Noise-dependent Variance Shapes the Final Ascent.} 
Fig \ref{subfig:theo_variance} shows $\var$ by solid purple, $\varclean$ in solid gray, and $\varnoise$ in dashed gray. We see that $\varclean$ is unimodal, while $\varnoise$ monotonically increases with width. Thm \ref{theo:vnoise} tells that $\varnoise$ scales with noise-to-sample size ratio ($\kappa$). Intuitively, variance measures the sensitivity to fluctuations in data and training. As the noise level increases, it has the potential to introduce greater inconsistency among the outputs of models, and this inconsistency grows as the model size increases. We see that as $\kappa$ increases, $\varnoise$ becomes more pronounced in the total variance, resulting in an \emph{increasing-decreasing-increasing} trend. Therefore, the risk, which is the sum of $\var$ and the monotonically increasing $\bias$ (Fig \ref{subfig:theo_risk}), finds its minimum at an intermediate width under sufficiently large $\kappa$. 
\begin{wrapfigure}{L}{0.2\textwidth}
\vspace{-4mm}
\includegraphics[width=0.22\textwidth]{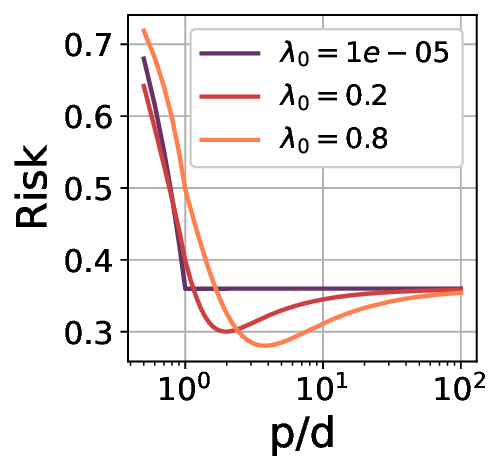}
\vspace{-7mm}
    \caption{\small With stronger regularization, the optimal width increases and achieves lower loss, making the final ascent more pronounced.\looseness=-1}
    \label{fig:theory_regularization}
    \vspace{-4mm}
\end{wrapfigure}

\textbf{Regularization Exacerbates Final Ascent.}  
Regularization is often used to improve robustness to noise, leading to the expectation that it would mitigate the impact of label noise and alleviate the final ascent. However, 
Fig.\ref{fig:theory_regularization}
shows that stronger $\ell_2$ regularization actually exacerbates the final ascent. We see that the final ascent is hardly visible with very small regularization, but becomes more pronounced as regularization increases. 
Specifically, larger regularization amplifies the advantage of intermediate widths, allowing them to achieve lower test loss. In Section \ref{sec:robust_methods}, we show a similar observation for robust learning algorithms, which can be viewed as using very strong  regularization. 

\textbf{Increasing Sample Size Alleviates Final Ascent.} In our setting, although $\frac{n}{d}\rightarrow \infty$ , the impact of sample size is still evident through the constant noise-to-sample-size ratio $\kappa$. Theorem \ref{theo:vnoise} reveals that $\varnoise$ scales with $\kappa$, rather than solely with the noise $\sigma^2$. Thus, increasing the sample size reduces the scaling of $\varnoise$, mitigating the final ascent. 
We will confirm this empirically in Section \ref{subsec:exp_width_nn}.

\begin{figure*}[!t]
    \centering
    \begin{subfigure}{0.247\linewidth}
    \includegraphics[width=\textwidth]{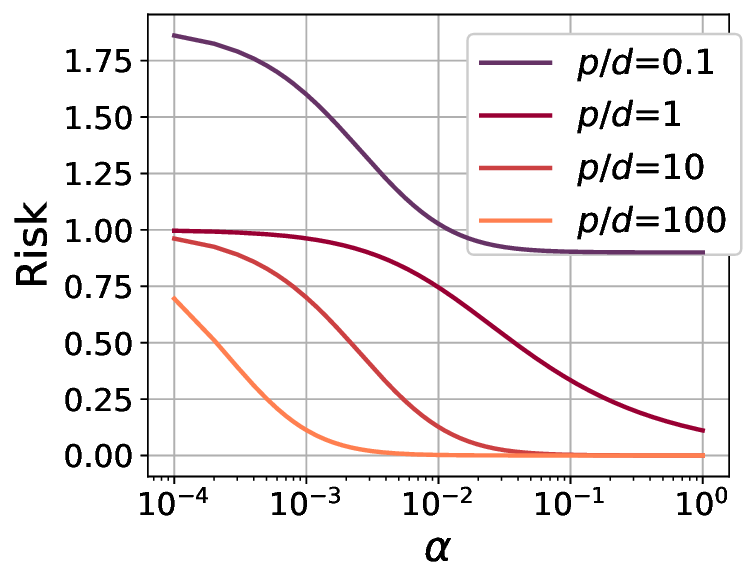}
    \caption{$\kappa=0$, Risk vs. $\alpha$}
    \label{subfig:theo_density_risk_0}
    \end{subfigure}
    \begin{subfigure}{0.24\textwidth}
    \includegraphics[width=\textwidth]{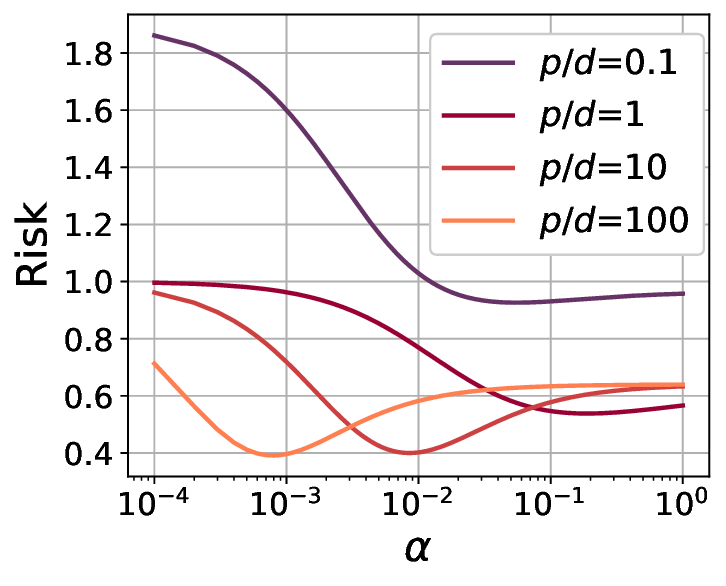}
    \caption{$\kappa=0.64$, Risk vs. $\alpha$}
    \label{subfig:theo_density_risk_0.64}
    \end{subfigure}
    \begin{subfigure}{0.245\textwidth}
    \includegraphics[width=\textwidth]{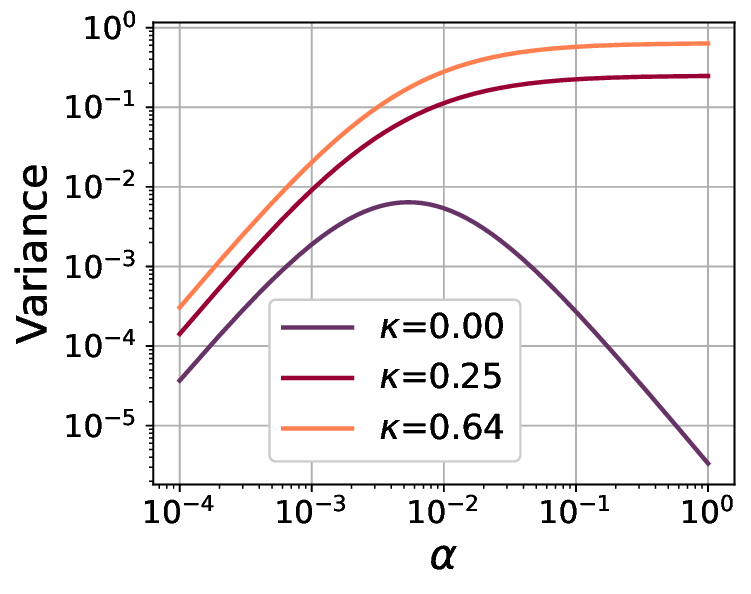}
    \caption{$\frac{p}{d}=10$, $\var$ vs. $\alpha$}
    \label{subfig:theo_density_var}
    \end{subfigure}
     \begin{subfigure}{0.245\textwidth}
    \includegraphics[width=\textwidth]{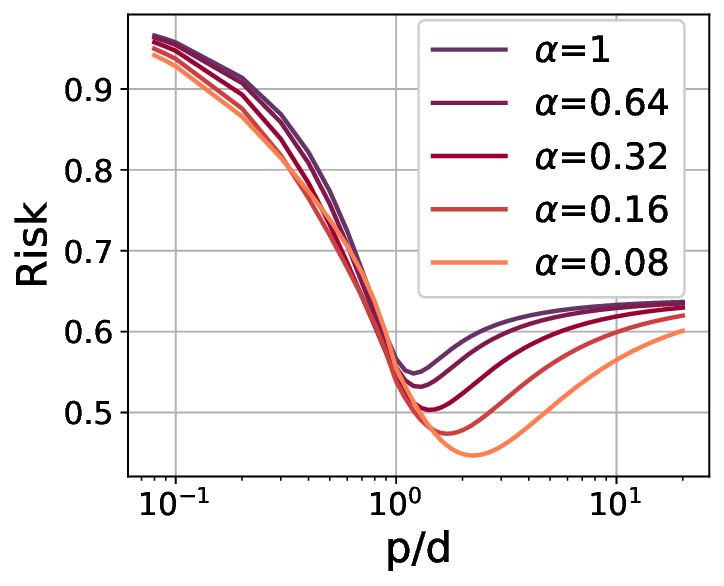}
    \caption{$\kappa=0.64$, Risk vs. $p/d$}
    \label{subfig:fix_alpha}
    \end{subfigure}
    \caption{\small (a), (b): The risk curve changes from decreasing to U-shaped as the noise-to-sample-size ratio ($\kappa$) increases, for different values of width ($p$). (c) The total variance changes from unimodal to increasing as $\kappa$ increases. (d) Under lower density, the optimal width tends to be larger, and achieves lower test loss compared to the optimal width at higher density. }
    \label{fig:theory_pruning}
    \vspace{-2mm}
\end{figure*}

In Section \ref{subsec: numerical}, we will conduct numerical experiments confirming that the final ascent is not limited to finite $\frac{n}{d}$. It can occur in many other settings, even when $n<d$.\looseness=-1

\subsection{Beyond Width: Effect of Density} \label{sec: theory_density}

Next, we investigate the scenario where the model width is fixed, but the model density---fraction of weights that are trainable---is decreased by masking a predetermined set of weights during training.  Studying density holds importance for several reasons: 
Firstly, it provides the flexibility to achieve any capacity, unlike changing width which only results in $ (\text{width} \times m)$ parameters, where $m$ is the number of parameters at width $1$. Secondly, changing density is less constrained by the model architecture, which is particularly relevant for complex architectures like InceptionResNet-v2 (which we use in our experiments in Section \ref{sec:exp_density}), where reducing width is difficult. Most importantly, as we show theoretically and empirically, density has a distinct effect on generalization compared to width. Lower density enables us to employ wider models, leading to improved generalization even under label noise.

We randomly drop a fraction of trainable parameters instead of following certain criteria as in \citep{jin2022pruning}, thus isolating the effect of model size from any other factors.\looseness=-1

The setting in Theorem \ref{theo:vnoise}, where the trainable layer has a scalar output, cannot differentiate between reducing density and reducing width. To address this, we consider a three-layer linear network in which the first layer yields random features, the second layer is randomly masked and trained, and the last layer is also random. The function represented by the network is formally defined as $f(\vct{x}) = \left((\mtx{V}\odot \mtx{M})\mtx{W}\vct{x}\right)^{\top}\vct{\mu}$. The first layer parameters $\mtx{W}$ and input data $\mtx{X}$ are the same as in Theorem \ref{theo:vnoise}. $\mtx{V}\in\mathbb{R}^{q\times p}$ represents the second-layer parameters, and $\mtx{M}\in\{0,1\}^{q\times p}$ is the mask applied to $\mtx{V}$. The entries in $\mtx{M}$ are independently drawn from a Bernoulli distribution with parameter $\alpha\in[0,1]$, where $\alpha$ represents the density. $\vct{\mu}\in \mathbb{R}^{q}$ represents the last-layer parameter, and its entries are independently drawn from $\mathcal{N}(0, 1/q)$ (the variance is set to $1/q$ so that $q$ does not appear in the final expressions). We study other variants, such as setting it to $1/d$ and letting $q=p$ in Appendix \ref{apdx:variants}. We show that the risk and its decomposition in this setting can be derived by replacing $\lambda_0$ in Theorem \ref{theo:vnoise} with $\lambda_0/\alpha$. See the proof in Appendix \ref{apdx:proof_prune}.\looseness=-1
\begin{theorem}\label{theo:pruning}
For a 3-layer linear network with $p$ hidden neurons in the first layer and random first and last layers, consider learning the second layer by ridge regression with random masks drawn from $Bernoulli(\alpha)$. Let $\lambda$, $n$ and $\sigma$ be the ridge regression parameter, number of training examples and noise level, respectively. Let $\lambda = \frac{n}{d}\lambda_0$ and $\sigma^2 = \frac{n}{d} \kappa$ for some fixed $\lambda_0$ and $\kappa$. The asymptotic expressions (where $n, d, p\rightarrow \infty$ with $\frac{n}{d}= \infty$ and $\frac{p}{d}=\gamma$) of $\textbf{Risk}$, $\textbf{Bias}^2$, $\varclean$ and $\varnoise$ are given by their counterparts in the setting of Theorem \ref{theo:vnoise} with $\lambda_0$ substituted with $\lambda_0/\alpha$. \looseness=-1
\end{theorem}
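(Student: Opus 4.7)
The plan is to show that the three-layer pruned network is asymptotically equivalent to the two-layer random-feature model of Theorem \ref{theo:vnoise} with an effective ridge parameter $\lambda/\alpha$; plugging this into the expressions of Theorem \ref{theo:vnoise} then substitutes $\lambda_0\mapsto\lambda_0/\alpha$ in $\bias$, $\varclean$ and $\varnoise$. The key observation is that
\begin{align}
    \nonumber
    f(\vct{x}) \;=\; \vct{\mu}^{\top}(\mtx{V}\odot\mtx{M})\mtx{W}\vct{x} \;=\; \vct{u}^{\top}\mtx{W}\vct{x},\qquad \vct{u} := (\mtx{V}\odot\mtx{M})^{\top}\vct{\mu}\in\mathbb{R}^p,
\end{align}
so as a function of inputs the network is linear in a $p$-dimensional vector $\vct{u}$, exactly as in the setup of Theorem \ref{theo:vnoise}. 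Only the effective regularizer on $\vct{u}$ changes.

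Next, since the data-fit term depends on $\mtx{V}$ only through $\vct{u}$, I would replace the outer minimization over $\mtx{V}$ by a joint minimization over $\vct{u}$ and the data-dependent penalty $\min_{\mtx{V}: (\mtx{V}\odot\mtx{M})^{\top}\vct{\mu}=\vct{u}} \|\mtx{V}\|_F^2$. This inner problem decouples across the columns of $\mtx{V}$: for each $j$ the only constraint is the scalar equation $u_j=\sum_i M_{ij}\mu_i V_{ij}$, whose minimum-norm solution is $V_{ij}=u_j M_{ij}\mu_i/a_j$ with $a_j := \sum_i M_{ij}\mu_i^2$. Therefore the inner minimum equals $\sum_{j=1}^p u_j^2/a_j$, and the effective ridge problem reads
\begin{align}
    \nonumber
    \min_{\vct{u}\in\mathbb{R}^p}\; \|(\mtx{W}\mtx{X})^{\top}\vct{u}-\vct{y}\|^2 + \lambda \sum_{j=1}^p u_j^2/a_j.
\end{align}
Each $a_j$ is a sum of $q$ independent variables with $\mathbb{E}[M_{ij}\mu_i^2]=\alpha/q$ and $\mathrm{Var}(M_{ij}\mu_i^2)=O(q^{-2})$, so by Chebyshev plus a union bound $a_j=\alpha+o_p(1)$ uniformly in $j$. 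As $q\to\infty$ the penalty therefore collapses to $(\lambda/\alpha)\|\vct{u}\|^2$, which is precisely the ridge problem of Theorem \ref{theo:vnoise} with $\lambda$ replaced by $\lambda/\alpha$, i.e.\ $\lambda_0\mapsto\lambda_0/\alpha$ in the stated scaling.

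The main technical obstacle is propagating the concentration $a_j\to\alpha$ through the resolvent $(\mtx{W}\mtx{X}\mtx{X}^{\top}\mtx{W}^{\top}+\lambda\,\mathrm{diag}(1/a_j))^{-1}$ that defines $\mtx{A}$ and $\mtx{B}$, since $\bias$, $\varclean$, $\varnoise$ are nonlinear functions of this resolvent. The standard fix is a resolvent-perturbation argument: write $\mathrm{diag}(1/a_j)=\alpha^{-1}\mathbf{I}+\mtx{E}$ with $\|\mtx{E}\|_{\mathrm{op}}=o_p(1)$, expand the resolvent as a Neumann series around the unperturbed one, and bound higher-order terms using the fact that the base resolvent has operator norm $O(1)$ under the asymptotic scaling. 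Dominated convergence then recovers the closed-form expressions of Theorem \ref{theo:vnoise} (and those of \cite{yang2020rethinking} for $\bias$ and $\varclean$) verbatim with $\lambda_0\mapsto\lambda_0/\alpha$, which is precisely the claim.
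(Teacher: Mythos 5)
Your argument is correct and is essentially the paper's own proof in a slightly different dress: your per-column weights $a_j=\sum_i M_{ij}\mu_i^2$ are exactly the diagonal entries of the matrix $\mtx{\Sigma}=\sum_i\mu_i^2\mtx{D}_i$ that the paper extracts from the explicit ridge solution, and both arguments reduce to the same concentration $a_j\to\alpha$, after which the effective penalty becomes $(\lambda/\alpha)\|\vct{u}\|^2$ (the paper reaches the same endpoint by substituting $\mtx{\Sigma}\mapsto\alpha\mathbf{I}_p$ and invoking the Woodbury identity). Your min-norm variable-splitting reformulation of the penalty and the resolvent-perturbation remark are, if anything, a cleaner and more careful presentation of the step the paper performs by direct computation.
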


\textbf{Reducing Density Can Improve Generalization Under Label Noise.} In Fig \ref{fig:theory_pruning}, we set $\lambda_0$ to $0.05$ and plot the risk and variance based on Theorem \ref{theo:pruning}. When there is no noise, we observe a decrease in risk as density ($\alpha$) increases (Fig \ref{subfig:theo_density_risk_0}), accompanied by a unimodal behavior of the variance (Fig \ref{subfig:theo_density_var}). However, when the noise-to-sample size ratio ($\kappa$) is sufficiently large, the risk exhibits a U-shape (Fig \ref{subfig:theo_density_risk_0.64}), while the variance monotonically increases (Fig \ref{subfig:fix_alpha}).



\vspace{-2mm}
\subsection{Advantage of Wider but Sparser Models}
Theorem \ref{theo:pruning} demonstrates that 
reducing density is equivalent to a stronger $\ell_2$ regularization. Although in Section \ref{sec:exp_density} we will empirically demonstrate that reducing density has effects beyond $\ell_2$ regularization for neural networks, it is already evident here that reducing density has a different effect than reducing width, even though both control the number of parameters. Furthermore, we observe from Figure \ref{subfig:theo_density_risk_0.64} that the optimal density at a fixed width results in lower test loss as the width increases. Additionally, Figure \ref{subfig:fix_alpha} shows that the optimal width for a given density increases as the density decreases, yielding lower test loss. This highlights the advantages of wider but sparser networks.
\vspace{-2mm}

\vspace{-2mm}
\section{Experiments 
}\label{sec: exp}
\vspace{-1mm}


\subsection{Final Ascent in Random Feature Ridge Regression with Different $\frac{n}{d}$'s }\label{subsec: numerical}

\begin{figure*}[!t]
    \centering
\includegraphics[width=.19\textwidth]{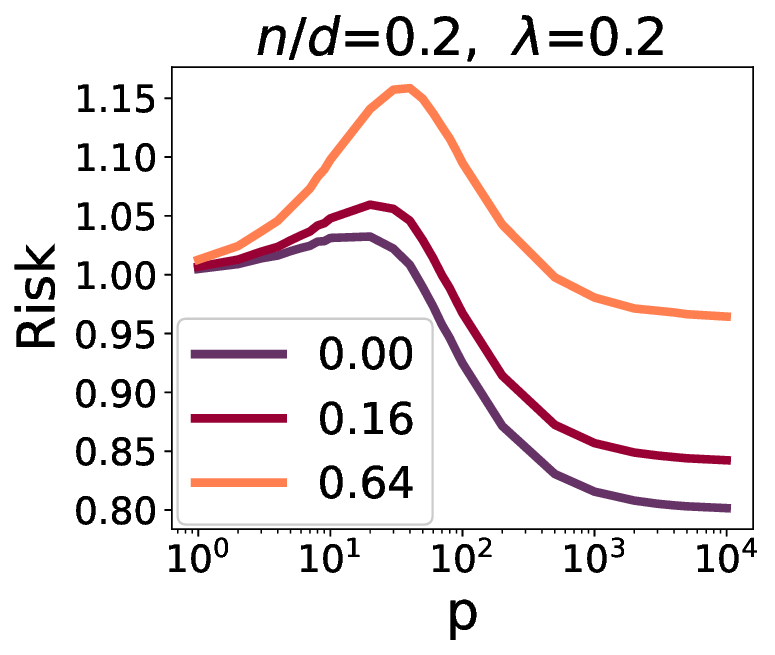}
\includegraphics[width=.19\textwidth]{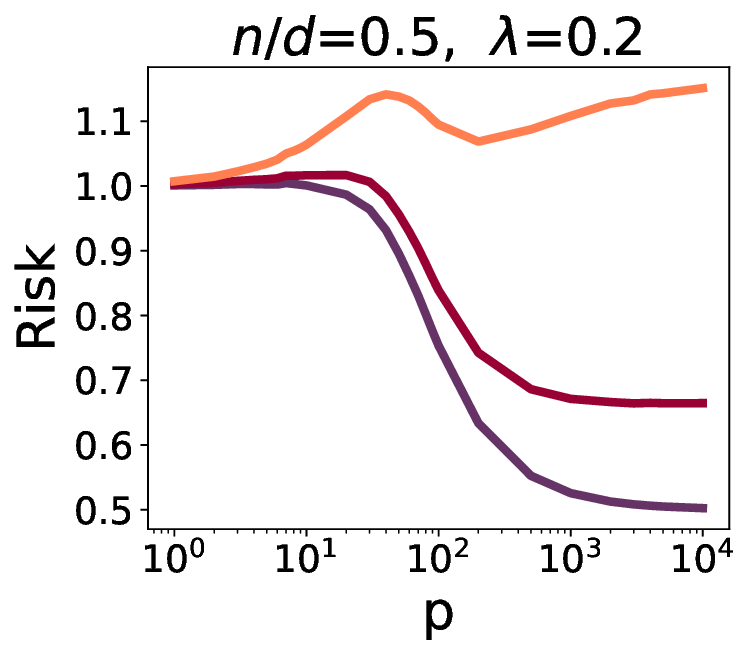}
\includegraphics[width=.19\textwidth]{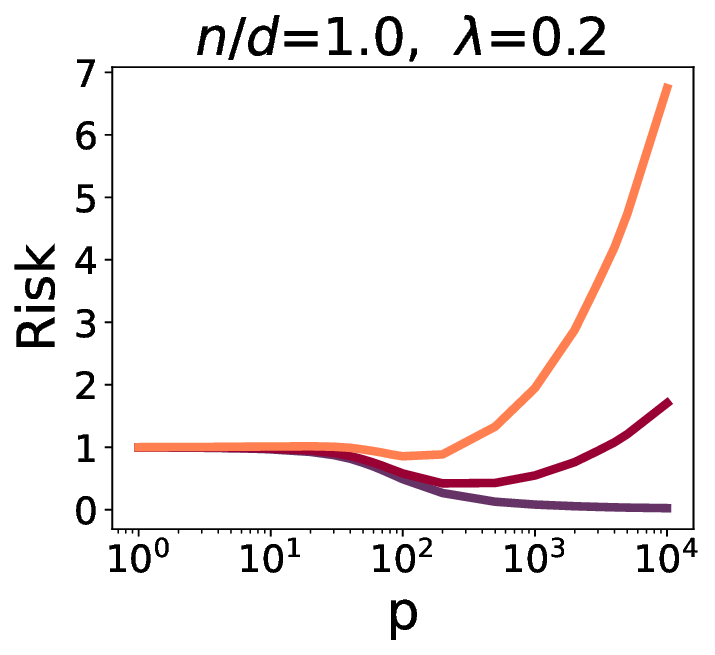}
\includegraphics[width=.19\textwidth]{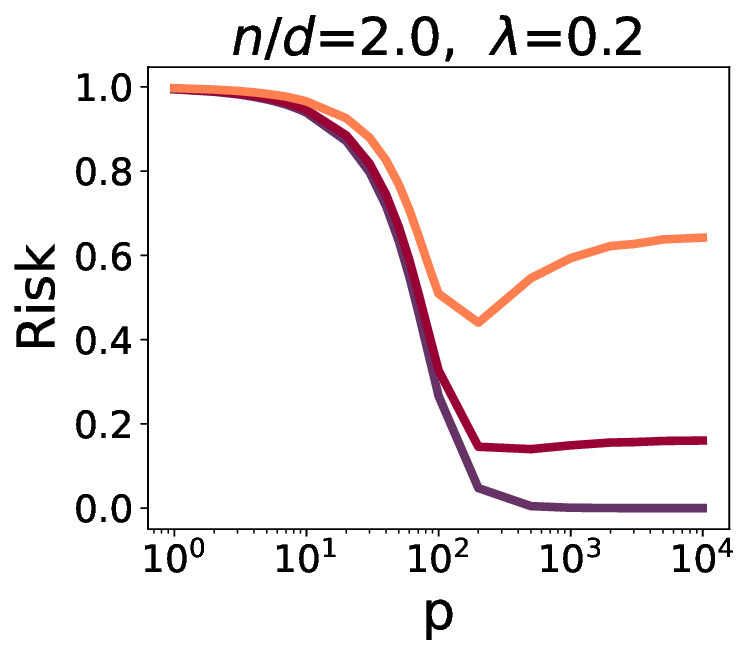}
\includegraphics[width=.19\textwidth]{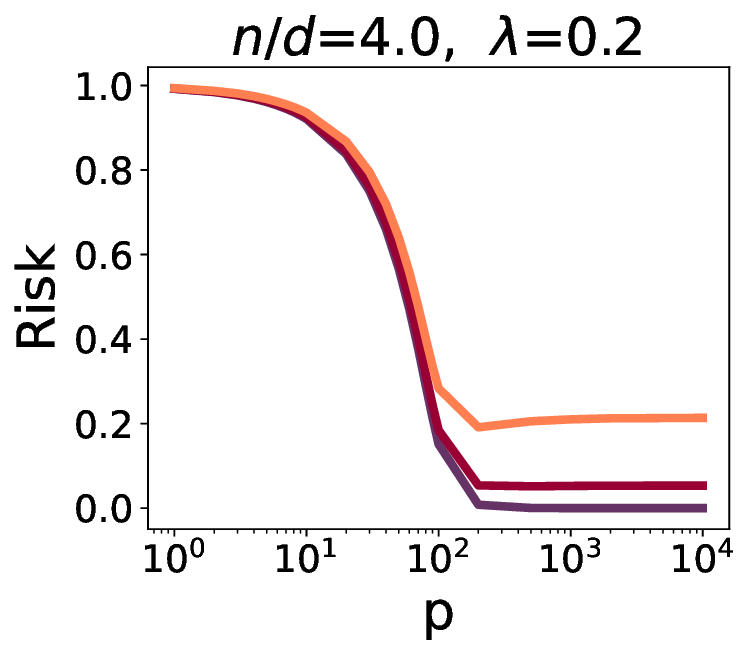}
    \caption{\small Final ascent in random feature ridge regression with Different $\frac{n}{d}$ Ratios. We plot the test loss while fixing $d=100$ and $\lambda=0.2$, and varying $\sigma^2$. Legends show the values of $\sigma^2$, and titles show the values of $n/d$.
    }
    \label{fig: numerical_loss_lmbd_0.2}
\end{figure*}

In section \ref{sec: theory_density}, we theoretically showed that the final ascent can occur in the limit where $\frac{n}{d} \rightarrow \infty$, while $\frac{p}{d}$ remains constant. However, a natural question arises regarding whether the same conclusion holds when $\frac{n}{d}$ is finite. Characterizing the variance exactly is extremely troublesome when $\lambda > 0$, as it involves finding solutions for a complicated fourth-degree equation when and taking derivatives, as shown by \citep{adlam2020understanding}.
Hence, we conduct numerical experiments with finite values of $d$ and $n$ with varying $\frac{n}{d}$.  Our results indicate that {the final ascent can occur in many other settings, even when $n<d$.} We plot the test loss (Figure \ref{fig: numerical_loss_lmbd_0.2}) and total variance (Figure \ref{fig: numerical_vartotal_lmbd_0.2}) against $p$. Legends show the values of $\sigma^2$, and titles show the values of $n/d$. We can clearly observe the final ascent when $\frac{n}{d}=$ $0.5, 1, 2,$ $4$. \looseness=-1
\vspace{-0mm}

\begin{table}[!t]
\small
    \caption{Shape and scale of $\frac{\varnoise}{\sigma^2}$ for different $n/d$ and $\sigma$. Scale is the value of $\frac{\varnoise}{\sigma^2}$ at $p=10^4$.}
    \label{tab:numerical_shape}
    \vspace{-1mm}
    \centering
    \begin{tabular}{|c|c|c|c|c|c|c|}
    \hline
         & $n/d$ & 0.2 & 0.5 & 1 & 2 & 4 \\
        \hline
        \multirow{2}{*}{$\lambda$=0.01} & 
 Shape &$\nearrow\searrow$ & $\nearrow\searrow$ & $\nearrow$ & $\nearrow$ & $\nearrow$ \\
& Scale & 0.26 & 1.0 & 50.2 & 1.0 & 0.33\\
\hline
        \multirow{2}{*}{$\lambda$=0.2} & 
 Shape &$\nearrow\searrow$ & $\nearrow$ & $\nearrow$ & $\nearrow$ & $\nearrow$\\
& Scale & 0.25 & 1.0  & 10.5 & 1.0 & 0.33 \\
\hline
    \end{tabular}
\end{table}

\textbf{Effects of $n/d$ and $\lambda$ on the shape of $\varnoise$.} 
Table \ref{tab:numerical_shape} summarizes the shape and scale of $\frac{1}{\sigma^2}\varnoise$ (i.e., $\frac{1}{d}\mathbbm{E}_{\mtx{X}, \mtx{W}}\| \mtx{A} \|_F^2$  according to Eq. \ref{eq:further_decomp}) w.r.t. $p$ for different values of $\frac{n}{d}$ and $\lambda$ (plots are in Figures \ref{fig: numerical_varnoise_lmbd_0.2} and \ref{fig: numerical_varnoise_lmbd_0.01}). Note that according to the decomposition in Eq. \ref{eq:further_decomp}, 
whether the final ascent can possibly occur depends solely on if $\frac{1}{\sigma^2}\varnoise$ increases in the end. We make the following observations: (1) Neither $n/d$ nor $\lambda$ alone can determine the shape of $\frac{1}{\sigma^2}\varnoise$; (2) When both $\frac{n}{d}$ and $\lambda$ are small, $\frac{1}{\sigma^2}\varnoise$ is unimodal, and the final ascent never occurs; (3) The effect of $n$ on the occurrence of the final ascent is non-monotonic: when $\frac{n}{d}\leq 1$, increasing $n$ turns $\frac{1}{\sigma^2}\varnoise$ from unimodal to monotonically increasing, leading to the final ascent. However, when $n/d>1$, increasing $n$ scales down $\frac{1}{\sigma^2}\varnoise$ while preserving its monotonically increasing shape, resulting in the final ascent being less pronounced. This second half of the observation is captured by Theorem \ref{theo:vnoise} showing that $\varnoise$ scales with $\frac{\sigma^2}{n/d}$. These observations reveal a complex behavior of the variance and highlight the need for future theoretical research to fully explain it. 


\begin{figure*}[!t]
    \centering
    \subfloat[Test loss,  $\lambda=0$ \label{subfig:mnist_risk}]{
    \centering
    \includegraphics[width=.39\textwidth]{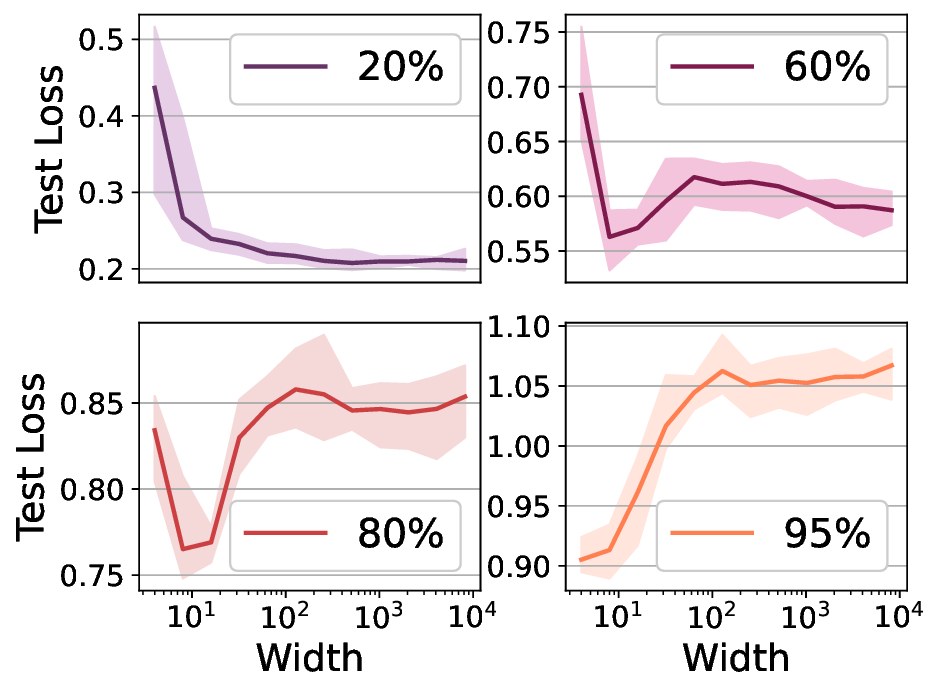}}
    \subfloat[Test loss,  $\lambda=0.001$ \label{subfig:mnist_risk_Wd}]{
    \centering
    \includegraphics[width=.39\textwidth]{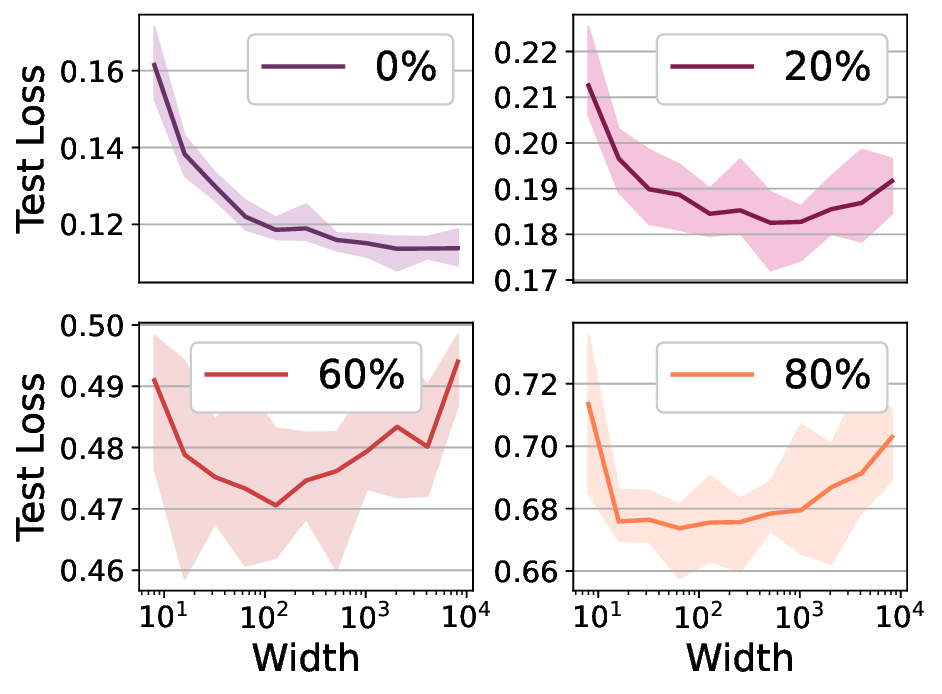}}
    \subfloat[$\var$, $\lambda=0$ \label{subfig:mnist_var}]{
    \centering
    \includegraphics[width=.21\textwidth]{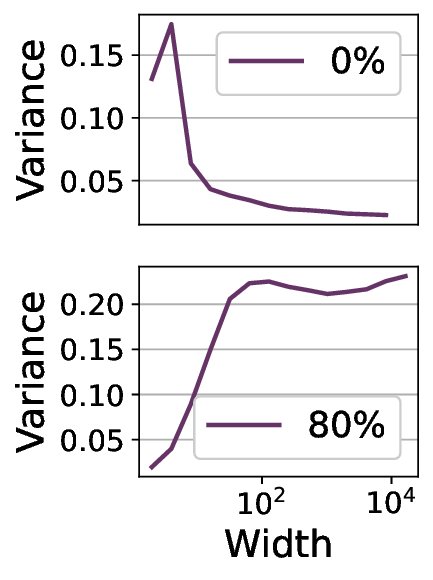}}
    \caption{\small Test loss and total variance on MNIST using MSE loss with (a) $\lambda=0$ and (b) $\lambda=0.001$ ($l_2$ regularization). We split the original dataset into 20 subsets to measure bias and variance (see details Appendix \ref{apdx: measure_variance}).
    Final ascent occurs when the noise level reaches a certain threshold. Stronger regularization exacerbates the final ascent phenomenon and reduces the required noise level. (c) As the noise level increases, the variance transitions from a unimodal shape to an increasing-decreasing-increasing pattern. \looseness=-1  }
    \label{fig:mnist_mse}
    \vspace{2mm}
\end{figure*}

\begin{figure*}[!t]
    \centering
    \subfloat[Test loss, $\lambda=$5e$^{-4}$ \label{subfig:cifar10_risk}]{
    \centering
    \includegraphics[width=.39\textwidth]{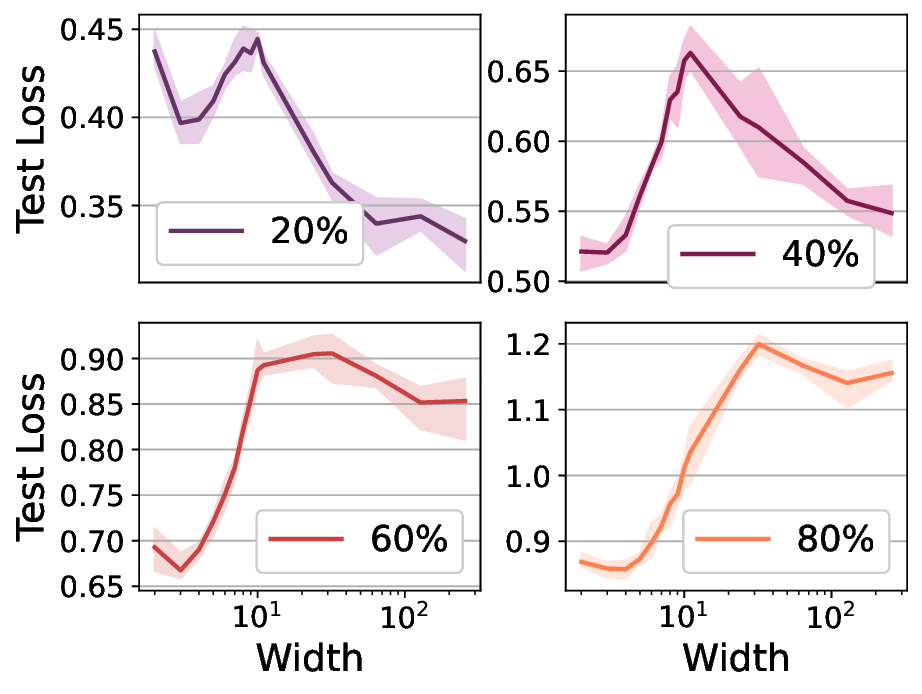}}
    \subfloat[Test loss, $\lambda=0.001$ \label{subfig:cifar10_risk_wd}]{
    \centering
    \includegraphics[width=.39\textwidth]{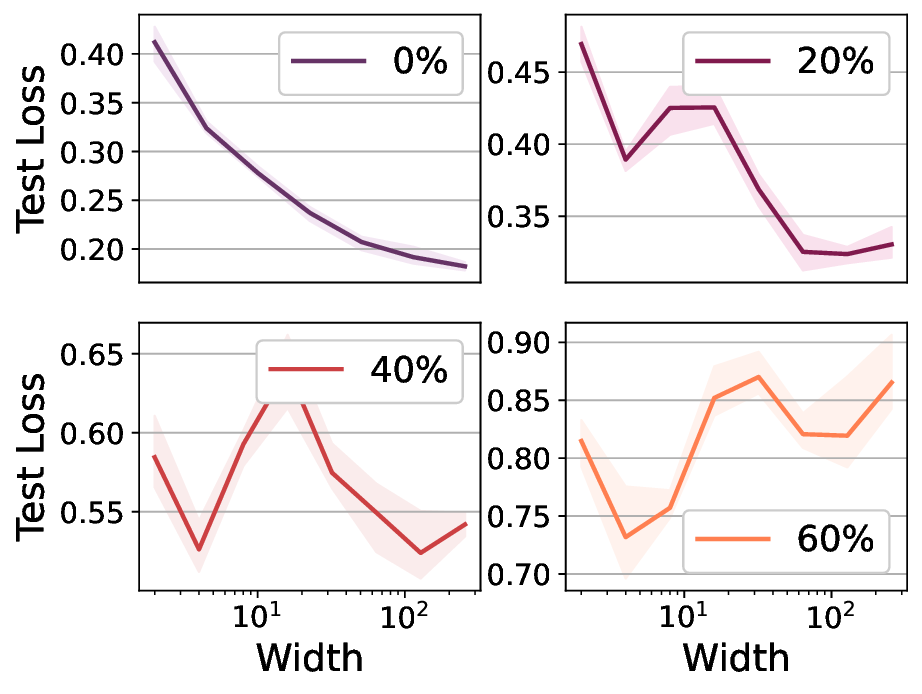}}
    \subfloat[$\var$, $\lambda=$5e$^{-4}$\label{subfig:cifar10_var}]{
    \centering
    \includegraphics[width=.21\textwidth]{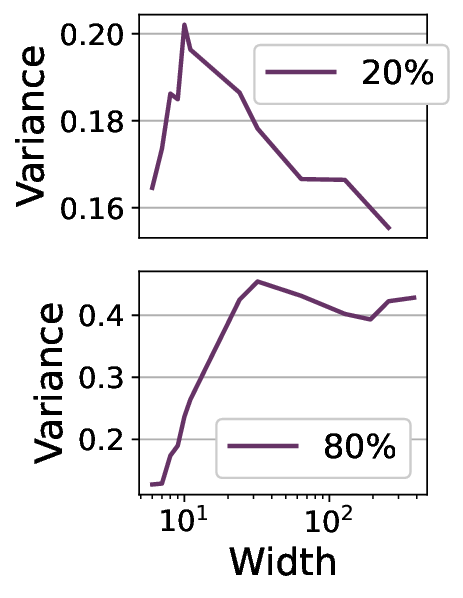}}
    \caption{\small Test loss and total variance on CIFAR-10 using MSE loss with (a) $\lambda=0.0005$ and (b) $\lambda=0.001$. We split the original dataset into 5 subsets to measure bias and variance (see details Appendix \ref{apdx: measure_variance}). The observed patterns are consistent with 
    Figure \ref{fig:mnist_mse}, confirming that stronger regularization exacerbates the final ascent and that increasing noise causes the transition in the variance shape. \looseness=-1}
    \label{fig:cifar10_mse}
    \vspace{2mm}
\end{figure*}

\subsection{Final Ascent in NNs: Effect of Width}\label{subsec:exp_width_nn}

Next, we empirically demonstrate the occurrence of the final ascent in neural networks trained with label noise across various settings. Although our theoretical results (Section \ref{sec:theory_width}) are based on regression, our empirical findings confirm that the theoretical insights hold for {classification} with neural networks.

We conduct a comprehensive set of experiments to thoroughly investigate the factors influencing the final ascent. Our study involve 3 datasets: MNIST \citep{lecun1998mnist}, CIFAR-10, and CIFAR-100 \citep{krizhevsky2009learning}. We train two-layer ReLu networks on MNIST, and train ResNet34 \citep{he2016deep} and ViT \citep{dosovitskiy2020image} on CIFAR-10/100. We examine two types of loss functions: mean squared error (MSE) loss and cross-entropy (CE) loss. MSE loss enables more accurate measurement of the bias and variance in test loss \citep{yang2020rethinking} (see Appendix \ref{apdx: measure_variance} ), allowing us to empirically observe the transition in the shape of variance shown in Thm \ref{theo:vnoise}. We consider two types of noise: symmetric noise generated through random label flipping and asymmetric noise generated in a class-dependent manner (see Appendix \ref{apdx:noise_gen}). Asymmetric noise better resembles real-world noise distributions \citep{patrini2017making}. Further experimental details can be found in Appendix \ref{apdx:details_exp}.


\begin{figure}[!t]
\centering
\includegraphics[width=0.33\columnwidth]{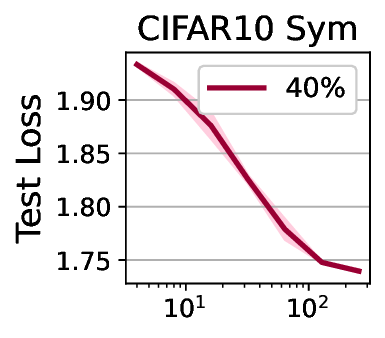}
\includegraphics[width=0.31\columnwidth]{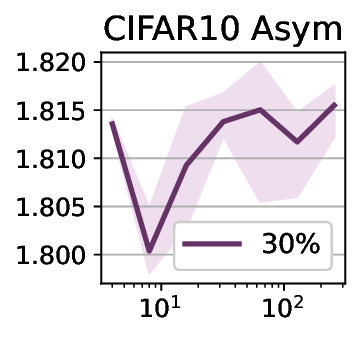}
\includegraphics[width=0.3\columnwidth]{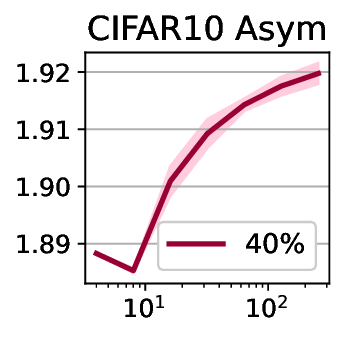}

\includegraphics[width=0.33\columnwidth]{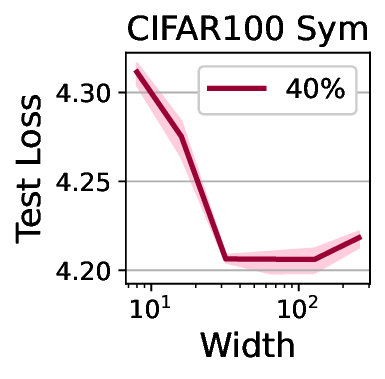}
\includegraphics[width=0.31\columnwidth]{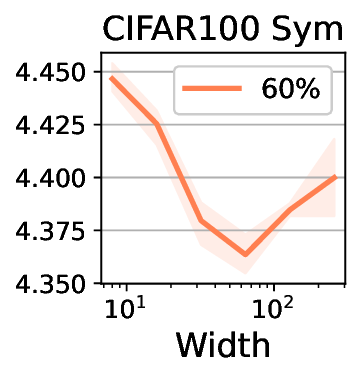}
\includegraphics[width=0.31\columnwidth]{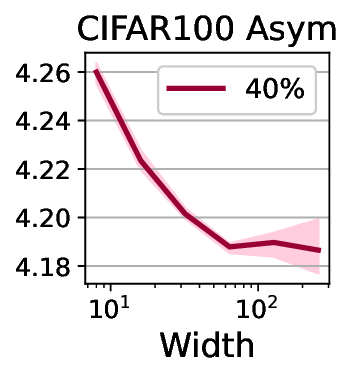}
    \caption{\small Results under symmetric and asymmetric noise on full CIFAR-10/100 using CE loss. $\lambda=0.0005$.}
    \label{subfig:sym_asym}
\end{figure}

\begin{figure}[!t]
\centering
\includegraphics[width=0.88\columnwidth]{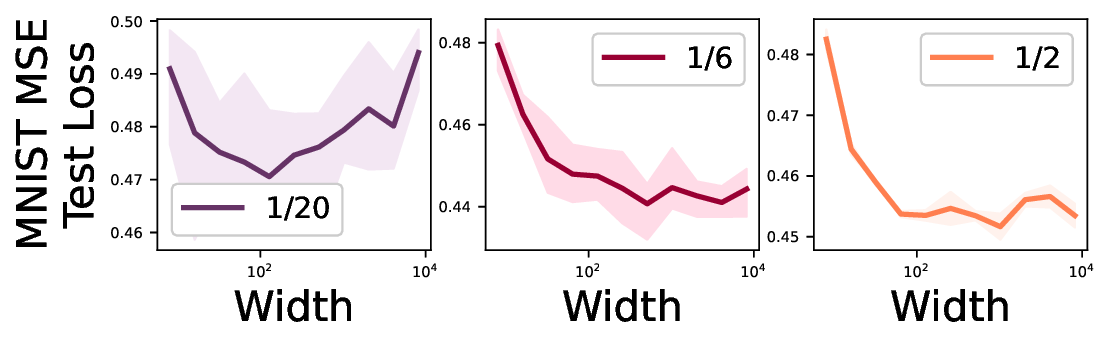}
\includegraphics[width=0.88\columnwidth]{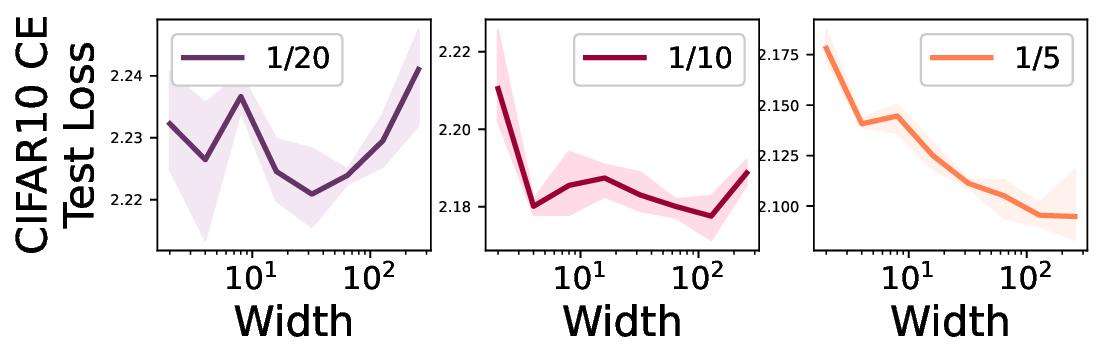}
    \caption{\small Test loss obtained with different sample sizes for MNIST using MSE loss (top) and CIFAR-10 using CE loss (bottom). We use $\lambda$=0.001 and 60\% noise for both. Legends indicate the fraction of data used for training. \looseness=-1 }
    \label{fig:sample_size}
\end{figure}

\textbf{Final Ascent Occurs in Various Settings.} Fig \ref{fig:mnist_mse} to \ref{fig:sample_size} demonstrate the presence of final ascent across architectures, loss functions, and noise types. E.g., in Fig \ref{fig:mnist_mse} and \ref{fig:cifar10_mse}, as the noise level increases, we observe a transition in the loss curve from double descent or monotonically decreasing to a curve with final ascent. The corresponding test accuracy plots can be found in Appendix \ref{apdx:varying_width}.  Results for ViT are in Fig \ref{fig:vit}. 


\textbf{Transition of the Variance Shape as Label Noise Increases.} As discussed in Sec \ref{sec:theory_width}, the final ascent in the test loss is attributed to the increasing-decreasing-increasing pattern of the variance. The results in Fig \ref{subfig:mnist_var} and \ref{subfig:cifar10_var} confirm that this holds true for NNs. The shape of the variance curve turns from unimodal to increasing-decreasing-increasing as the noise increases, matching our theoretical results.

\textbf{Stronger $\ell_2$ Regularization Exacerbates Final Ascent.} 
Our theoretical analysis in Sec \ref{sec:theory_width} demonstrates that stronger regularization exacerbates final ascent. This finding is further corroborated by our experimental results, as shown by comparing Fig \ref{subfig:mnist_risk} and \ref{subfig:mnist_risk_Wd}, as well as Fig \ref{subfig:cifar10_risk} and \ref{subfig:cifar10_risk_wd}. For example, on CIFAR-10 with 20\% noise, the test loss exhibits the well-known double descent when $\lambda=0.0005$ (the parameter of $l_2$), while it demonstrates the final ascent phenomenon when $\lambda=0.001$. Note that using regularization often yields a better generalization.
 
\textbf{Asymmetric Noise May Exacerbate Final Ascent.} Fig \ref{subfig:sym_asym} shows that asymmetric noise significantly exacerbates the final ascent on CIFAR-10. Under 40\% noise, the final ascent is absent for symmetric noise but present for asymmetric noise. However, this trend is not obvious on CIFAR-100, possibly due to the different noise generation process (Appendix \ref{apdx:noise_gen}) where the noise on CIFAR-100 is less skewed.


\begin{figure*}[!t]
\centering
\begin{subfigure}[b]{0.4\textwidth}
\centering
    \includegraphics[width=\textwidth]{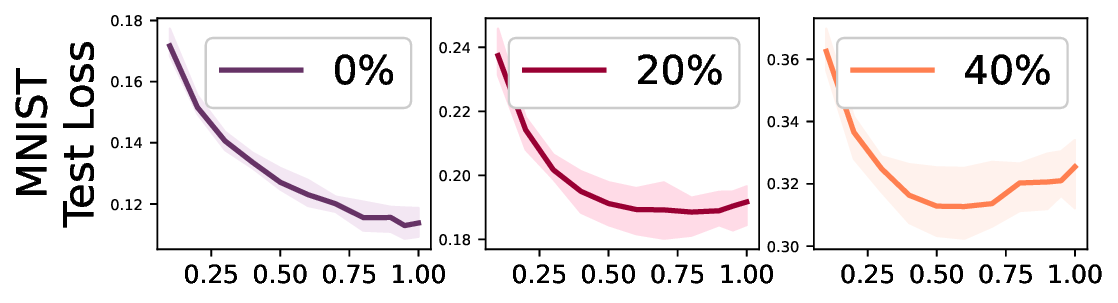}
    \includegraphics[width=\textwidth]{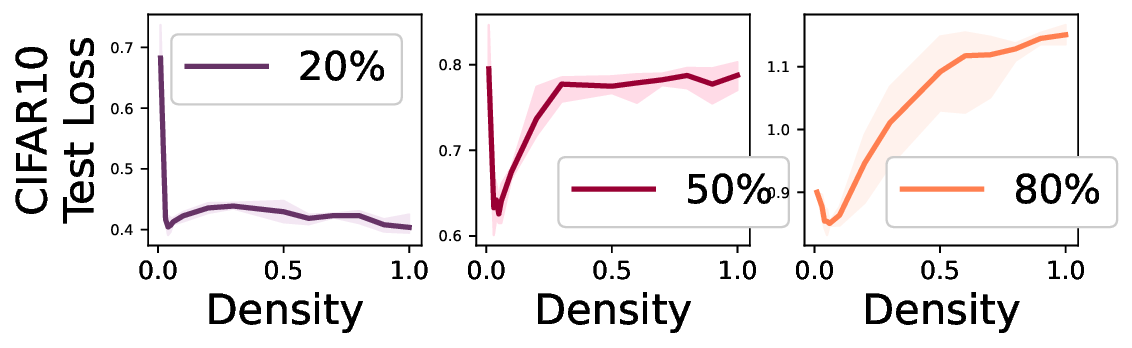}
    \caption{MNIST and CIFAR-10}
\label{subfig:mnist_prune_loss}
\end{subfigure}\hfill
\begin{subfigure}[b]{0.25\textwidth}
\centering
    \includegraphics[width=\textwidth]{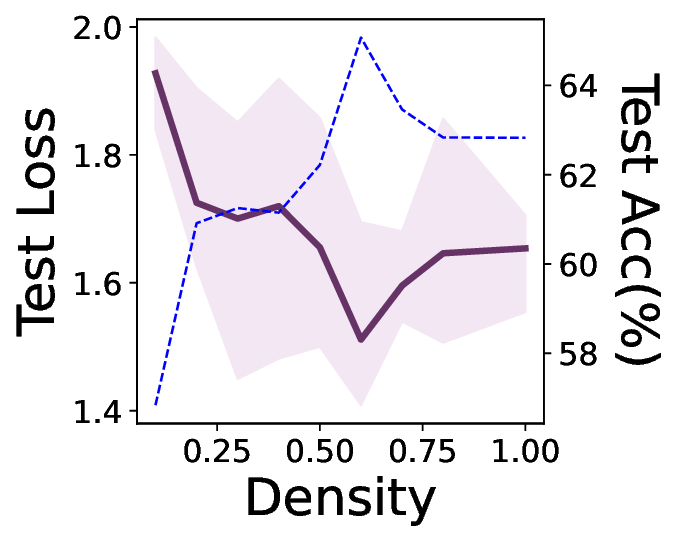}
    \caption{Cars, 30\% noise}
    \label{subfig:pruning_caltech}
\end{subfigure}
\begin{subfigure}[b]{0.25\textwidth}
\centering
    \includegraphics[width=\textwidth]{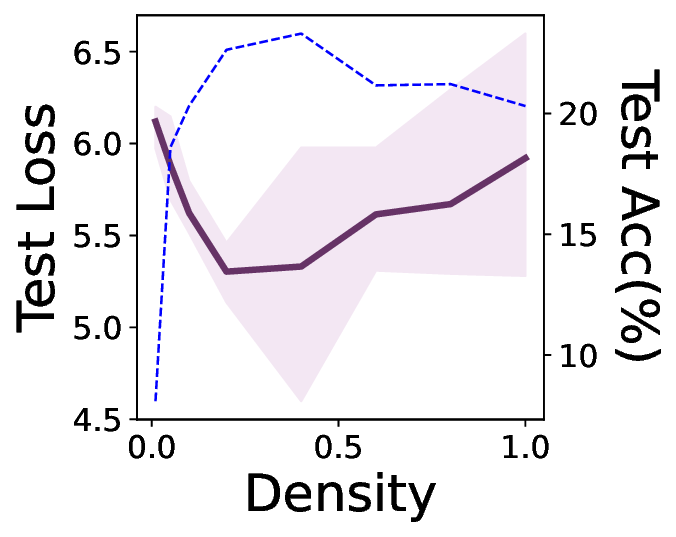}
    \caption{Cars, 70\% noise}
    \label{subfig:pruning_car}
\end{subfigure}
\caption{\small Test performance models with varied densities on (a) MNIST and CIFAR-10 and (b)(c) Red Stanford Car. In (a) legends indicate noise ratios. We observe that the test loss curve changes from a decreasing trend to a U-shape as noise increases. In (b)(c) the purple solid line shows test loss and the blue dashed line shows accuracy.  \looseness=-1}
\label{fig:density_main}
\end{figure*}

\begin{figure*}[!t]
    \centering
        \begin{subfigure}[b]{0.26\textwidth}
    \centering
    \includegraphics[width=\textwidth]{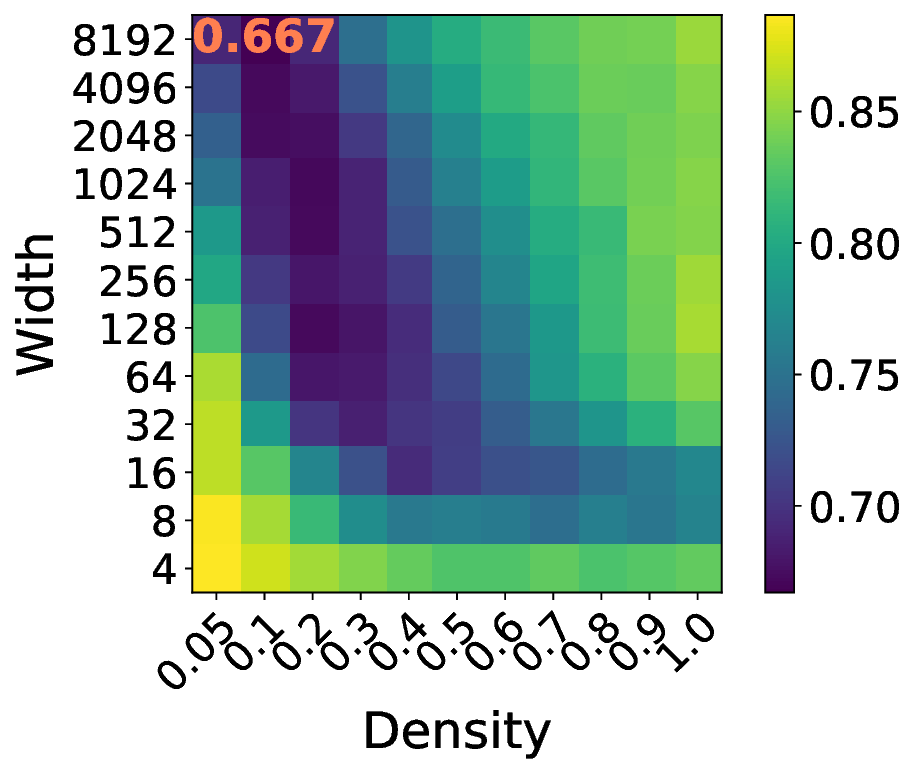}
    \caption{Test Loss (MNIST)}
    \label{fig:heatmap_mnist_loss}
    \end{subfigure}
    \hspace{-2.2mm}
    \begin{subfigure}[b]{0.25\textwidth}
    \centering
    \includegraphics[width=\textwidth]{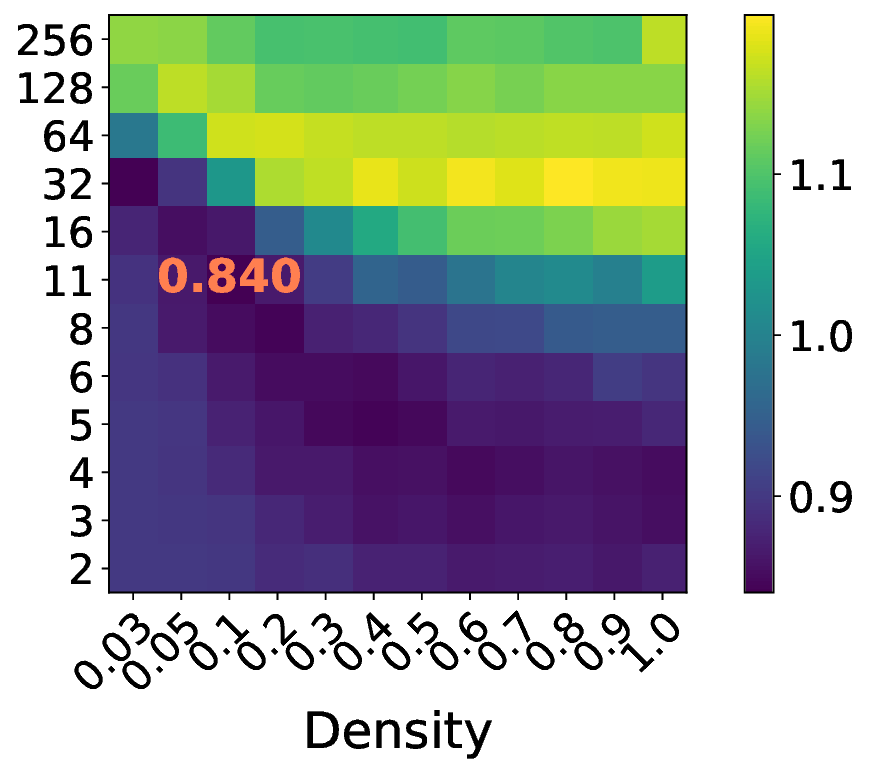}
    \caption{Test Loss (CIFAR-10)}
    \label{fig:heatmap_cifar10_loss}
    \end{subfigure}
    \hspace{-2.2mm}
    \begin{subfigure}[b]{0.222\textwidth}
    \centering
    \includegraphics[width=\textwidth]{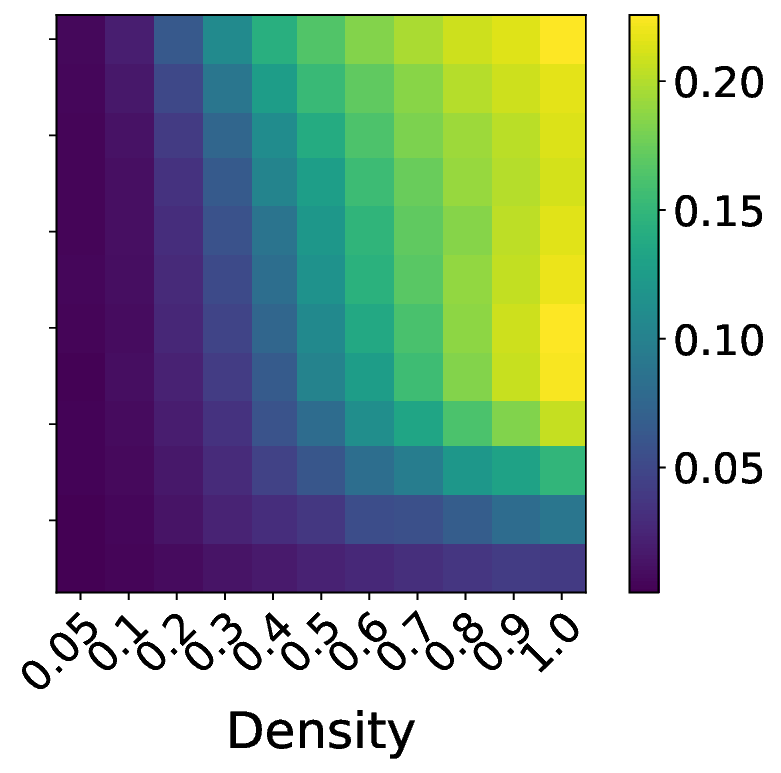}
    \caption{$\var$ (MNIST)}
    \label{fig:heatmap_mnist_variance}
    \end{subfigure}
    \hspace{-2.2mm}
    \begin{subfigure}[b]{0.24\textwidth}
    \centering
    \includegraphics[width=.9\textwidth]{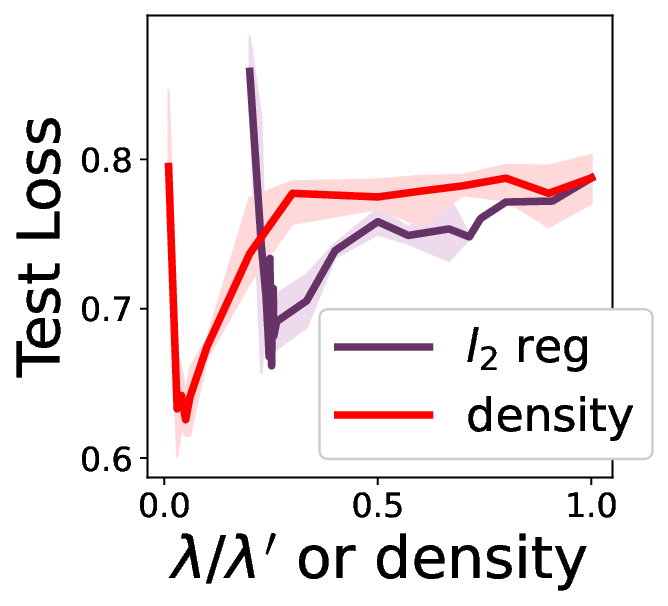}
    \caption{Density vs. $l_2$ reg}
    \label{fig:density_vs_l2}
    \end{subfigure}
    \caption{\small Test loss (a)(b) and variance (c) of models at varied widths and densities. (d) compares the effects density and $l_2$ regularization. When varying $\lambda$, density is fixed to 1. When varying density, $\lambda$ is fixed to $\lambda'=0.0005$. X-axis represents the inverse of the scale-up of $\lambda$, i.e., $\lambda/\lambda'$, for the regularization curve (purple), and density for the density curve (red).\looseness=-1}
    \label{fig:heatmap_main}
\end{figure*}

\textbf{Larger Sample Size Alleviates Final Ascent.} As shown in Thm \ref{theo:vnoise}, the scale of $\varnoise$ is determined by the noise-to-sample-size ratio rather than the noise level itself, implying that increasing the sample size can counteract the impact of noise and alleviate final ascent. Our experiments further confirm this. Fig \ref{fig:sample_size} shows the test loss with varying sample sizes on MNIST (MSE loss) and CIFAR-10 (CE loss). The final ascent occurs when only $1/20$ of the original data are used, but not when 
$1/2$ are used on MNIST and $1/5$ on CIFAR-10.  This is also observed with ViTs (Fig \ref{fig:vit}). 

\textbf{Required Noise Level for Final Ascent.} Based on the preceding discussions, it is evident that the required noise level depends on several factors, including the degree of regularization, the sample size, and the nature of the noise distribution. In general, a lower noise level is required with strong regularization, smaller sample sizes, and strong noise types. 
\textbf{Notably, the required noise level can be as low as 20\% (Fig \ref{fig:mnist_mse}, \ref{fig:cifar10_mse}). On CIFAR-10, even with full data and the commonly used small regularization ($5$e$^{-4}$), final ascent occurs under only 30\% asymmetric noise (Fig \ref{subfig:sym_asym}).} In practice, where the noise typically resembles asymptotic noise and strong regularization is applied in the presence of noise, final ascent is highly likely to occur under low noise levels.

\vspace{2mm}
\subsection{Final Ascent in NNs: Effect of Density }\label{sec:exp_density}\vspace{-1mm}\vspace{1mm}
Next, we study the effect of density through experiments on three datasets: MNIST, CIFAR-10, and Red Stanford Cars \citep{jiang2020beyond} 
containing real-world label noise (see Appendix \ref{apdx:noise_gen}). We train an InceptionResNet-v2 on Red Stanford Cars . 
Since the InceptionResNet-v2 architecture is intricate, there is no straightforward way to vary its width, 
and demands reducing
density. Other details are in Appendix \ref{apdx:details_exp}. \looseness=-1

\textbf{Reducing Density Improves Generalization.} Fig \ref{fig:density_main} shows that when the label noise is small (e.g., 0\% on MNIST, 20\% on CIFAR-10), test loss improves as density increases. In contrast, when the label noise is sufficiently large (e.g., 40\% on CIFAR-10, 50\% on MNIST, 30\%/70\% on Red Stanford Car), lowest test loss is achieved at an intermediate density.

\textbf{Reducing Density has an Effect Beyond $\ell_2$ Regularization for NNs.} 
Thm \ref{theo:pruning} shows that reducing density is equivalent to increasing the $\ell_2$ regularization by the inverse factor for random feature ridge regression. However, Fig \ref{fig:density_vs_l2} (test accuracy is in Appendix \ref{apdx:density}) shows that reducing density of neural networks has an effect beyond $\ell_2$ regularization. Although both loss curves exhibit a U shape, bottom of the U shape for density is lower than that of varying $\ell_2$ regularization. In other words, adjusting density can achieve a lower loss than adjusting $\ell_2$ regularization.


\begin{figure}[!t]
\centering
\begin{subfigure}[b]{0.22\textwidth}
\centering
    \includegraphics[width=.85\textwidth]{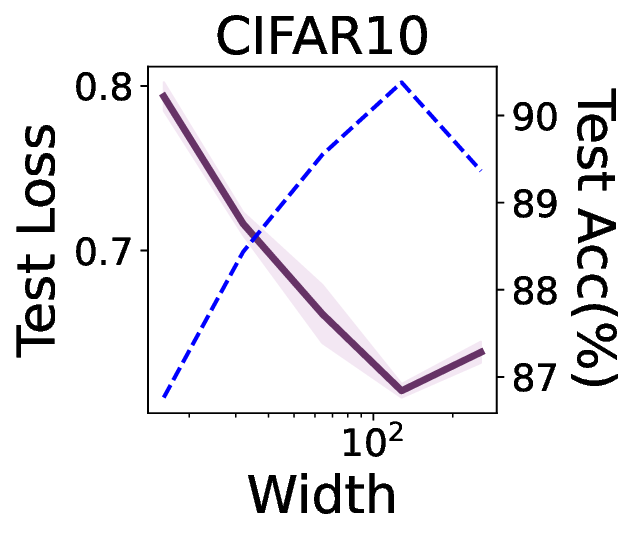}
    \caption{\small ELR, Sym, 40\%}
\end{subfigure}
\begin{subfigure}[b]{0.22\textwidth}
\centering
    \includegraphics[width=.85\textwidth]{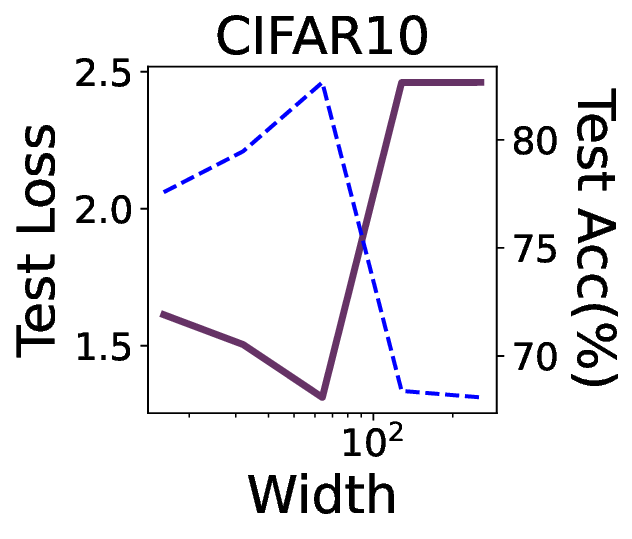}
    \caption{\small ELR, Asym, 40\%}
\end{subfigure}
\begin{subfigure}[b]{0.22\textwidth}
\centering
    \includegraphics[width=.85\textwidth]{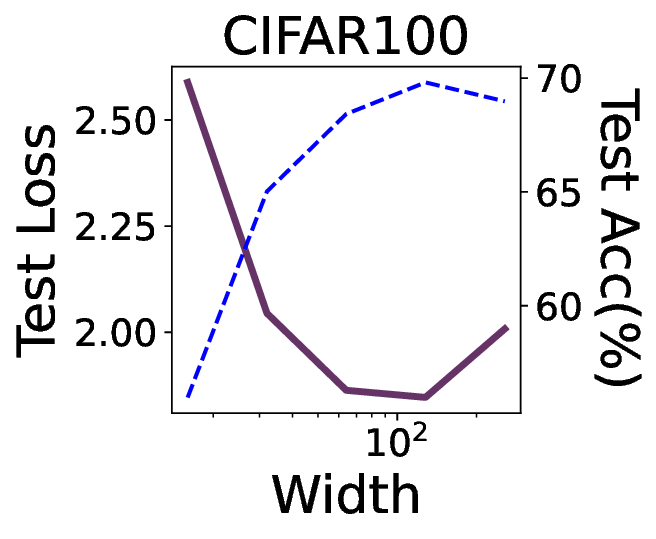}
    \caption{\small ELR, Sym, 40\%}
\end{subfigure}
\begin{subfigure}[b]{0.22\textwidth}
\centering
\includegraphics[width=.85\textwidth]{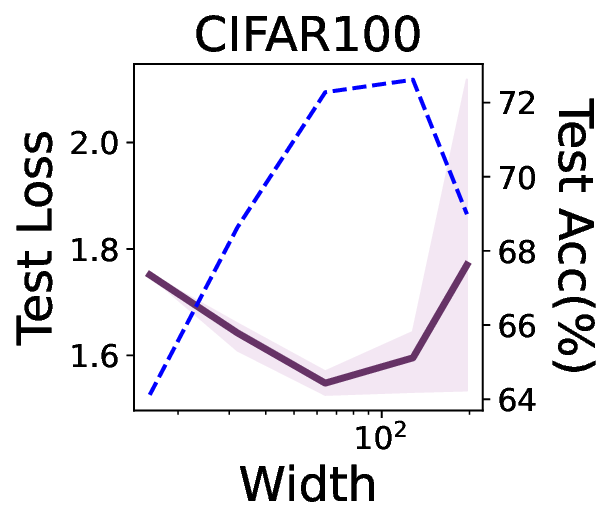}
    \caption{\small DivideMix, Sym, 60\%}
\end{subfigure}
\caption{\small Test loss (solid) and accuracy (dashed) of models trained with ELR and DivideMix on CIFAR-10/100. Comparing (a) with Fig \ref{fig:sample_size},  we see that ELR even exacerbates the final ascent.\looseness=-1}
\label{fig:robust_method_main}
\end{figure}

\subsection{Advantage of Wider but Sparser Models}
Fig \ref{fig:heatmap_mnist_variance} 
shows an increasing trend in variance towards the top-right corner. Fig \ref{fig:heatmap_mnist_loss} and \ref{fig:heatmap_cifar10_loss} show that models with low loss (darkest blue) are distributed along the diagonal and the lowest loss (annotated in red) is achieved with models with very small density. This confirms our theoretical results in Section \ref{sec: theory_density} that models of optimal width under smaller density exhibit better generalization, highlighting the advantage of employing wider yet sparser models to address label noise. Test accuracy and bias are shown in Appendix \ref{apdx:density}.
\subsection{Final Ascent in Neural Networks: Robust Algorithms}\label{sec:robust_methods}


We study the effect of robust algorithms, which are typically employed in the presence of label noise. We consider two SOTA algorithms, ELR \citep{liu2020early} and DivideMix \citep{li2020dividemix} on CIFAR-10/100 (details in Appendix \ref{apdx:robust_alg_setting}). Interestingly, we see that the final ascent can still be observed 
 (Fig \ref{fig:robust_method_main} and \ref{fig:elr_apdx_width}), and in some cases is even exacerbated. For example, without ELR, the final ascent is not observed on CIFAR-10 under 60\% symmetric noise with $1/5$ of the data (Fig \ref{fig:sample_size}). In contrast, when ELR is applied, the final ascent occurs under only 40\% symmetric noise on the full data. Additional experiments  are in Appendix \ref{apdx:robust_alg}. The final ascent regarding model density is also observed in Fig \ref{fig:elr_apdx_density} to \ref{fig:stanford_car_elr}. 
 \looseness=-1

\textbf{Connection to $\ell_2$ Regularization.} The presence of the final ascent under robust algorithms aligns with our theoretical and empirical findings that stronger $\ell_2$ regularization exacerbates this phenomenon. As robust algorithms act as a form of stronger regularization compared to $\ell_2$, it is logical that they amplify the final ascent rather than mitigating it.\looseness=-1

{\textbf{Other Robust Algorithms against Noisy Labels.}  In general, all robust methods can be thought of as a form of implicit regularization at a very high level, since they share the common intuition of preventing the model from fitting certain data too closely. For example, \citep{zhang2018generalized, jiang2018mentornet, xia2019anchor, mirzasoleiman2020coresets} either explicitly or implicitly put more weight on clean data, making the model fit the noisy data less. Therefore, we believe the same conclusions would likely apply to these other methods as well.  \looseness=-1}

{Another robust algorithm, particularly related to density, is \citep{xia2020robust}, which divides the model weights into two sets, updating one while shrinking the other. The division is dynamically adjusted during training to combat label noise. This implies that learning fewer weights can benefit robustness. Our findings make an even stronger statement: robustness can be enhanced by simply removing some weights randomly from the very beginning of training, provided the weight removal ratio is appropriate. The implication of our research is that naively dropping weights is a baseline worth considering in this context. Additionally, we provide theoretical groundwork for understanding the effects of methods similar to \citep{xia2020robust}, showing that much of the benefit can be viewed as simply reducing the hypothesis space. From the bias-variance perspective, reducing density can lead to a smaller noise-dependent variance.}

\section{Conclusion and Discussion}

We present the \emph{final ascent}, an intriguing phenomenon, w.r.t. both model width and density. Our comprehensive theoretical and empirical investigations yield crucial insights, including the transition in variance shape, the interplay between width and density, and the roles of sample size, regularization, and robust methods. Our results highlight (1) the need for caution when using large models, (2) lower density combined with wider models improves generalization even under noise, and (3) the amplified significance of these considerations in scenarios with limited data, or under strong regularization/robust methods.

\section*{Acknowledgments}
This research was partially supported by the National
Science Foundation CAREER Award 2146492.

\bibliography{reference}

\newpage
\appendix
\onecolumn

\newpage

\section{Theoretical Results}\label{apdx:proof}
\textbf{Notations:} We use bold-faced letter for matrix and vectors.  The training dataset is denoted by $(\mtx{X},\vct{y})$ where $\mtx{X}=[\vct{x}_1, \vct{x}_2,\dots, \vct{x}_n]$ and $\vct{y}=[y_1, y_2,\dots, y_n]^{\top}$. Each input vector $\vct{x}_i\in \mathbb{R}^d$ is independently drawn from a Gaussian distribution $\mathcal{N}(0, \mathbf{I}_d/d)$. Each label $y_i\in \mathbb{R}$ is generated by $y_i=\vct{x}_i^{\top}\vct{\theta}+\epsilon_i$. 
We assume $\vct{\theta}\in \mathbb{R}^d$ has its entries independently drawn from $N(0,1)$. $\epsilon_i\in\mathbb{R}$ is the label noise drawn from $N(0,\sigma^2)$ for each $\vct{x}_i$. We assume each test example $(\vct{x},y)$ is clean, i.e., $y=\vct{x}^{\top}\vct{\theta}$.
\subsection{Bias-Variance Decomposition of the MSE Loss in Section \ref{sec:theory_width}}\label{apdx:derivation_decomposition}
Here we show the derivation of Equation \ref{eq:decomposition_mse}.
\begin{align}
    \nonumber
    \bias =& \mathbb{E}_{\vct{\theta}}\mathbb{E}_{\vct{x}}( \mathbb{E}_{\mtx{X},\mtx{W},\vct{\epsilon}}f(\vct{x})-y)^2 \\
    \nonumber
    = & \mathbb{E}_{\vct{\theta}}\mathbb{E}_{\vct{x}}( \mathbb{E}_{\mtx{X},\mtx{W},\vct{\epsilon}}((\mtx{W}\vct{x})^{\top}\hat{\vct{\beta}})-y)^2 \\
    \nonumber
    = & \mathbb{E}_{\vct{\theta}}\mathbb{E}_{\vct{x}}[\vct{x}^{\top}( \mathbb{E}_{\mtx{X},\mtx{W},\vct{\epsilon}}(\mtx{B}\vct{\theta}+\mtx{A}\vct{\epsilon})-\vct{\theta})]^2 \\
    \nonumber
    = & \mathbb{E}_{\vct{\theta}}\mathbb{E}_{\vct{x}}\Tr[( \mathbb{E}_{\mtx{X},\mtx{W},\vct{\epsilon}}(\mtx{B}\vct{\theta})-\vct{\theta})^{\top}\vct{x}\vct{x}^{\top}( \mathbb{E}_{\mtx{X},\mtx{W},\vct{\epsilon}}(\mtx{B}\vct{\theta})-\vct{\theta})] \\
    \nonumber
    = & \mathbb{E}_{\vct{\theta}}\Tr[( \mathbb{E}_{\mtx{X},\mtx{W},\vct{\epsilon}}(\mtx{B}\vct{\theta})-\vct{\theta})^{\top}\mathbb{E}_{\vct{x}}(\vct{x}\vct{x}^{\top})( \mathbb{E}_{\mtx{X},\mtx{W},\vct{\epsilon}}(\mtx{B}\vct{\theta})-\vct{\theta})] \\
    \nonumber
    = & \frac{1}{d}\mathbb{E}_{\vct{\theta}}\| \mathbb{E}_{\mtx{X},\mtx{W},\vct{\epsilon}}(\mtx{B}\vct{\theta})-\vct{\theta}\|_F^2 \\
    \nonumber
    = & \frac{1}{d}\mathbb{E}_{\vct{\theta}}\Tr[(\mathbb{E}_{\mtx{X},\mtx{W}}\mtx{B}-\mathbf{I})\vct{\theta}\vct{\theta}^{\top}(\mathbb{E}_{\mtx{X},\mtx{W}}\mtx{B}-\mathbf{I})] \\
    \nonumber
    = & \frac{1}{d}\Tr[(\mathbb{E}_{\mtx{X},\mtx{W}}\mtx{B}-\mathbf{I})\mathbb{E}_{\vct{\theta}}(\vct{\theta}\vct{\theta}^{\top})(\mathbb{E}_{\mtx{X},\mtx{W}}\mtx{B}-\mathbf{I})] \\
    \nonumber
    = & \frac{1}{d}\|\mathbb{E}_{\mtx{X},\mtx{W}}\mtx{B}-\mathbf{I}\|_F^2
\end{align}
\begin{align}
    \nonumber
    \var =& \mathbb{E}_{\vct{\theta}}\mathbb{E}_{\vct{x}} \mathbb{V}_{\mtx{X},\mtx{W},\vct{\epsilon}}f(\vct{x}) \\
    \nonumber
    =& \mathbb{E}_{\vct{\theta}}\mathbb{E}_{\vct{x}} \mathbb{E}_{\mtx{X},\mtx{W},\vct{\epsilon}}[\vct{x}^{\top}(\mtx{B}\vct{\theta}+\mtx{A}\vct{\epsilon})- \mathbb{E}_{\mtx{X},\mtx{W},\vct{\epsilon}} \vct{x}^{\top}(\mtx{B}\vct{\theta}+\mtx{A}\vct{\epsilon}) ]^2\\
    \nonumber
    =& \frac{1}{d} \mathbb{E}_{\vct{\theta}} \mathbb{E}_{\mtx{X},\mtx{W},\vct{\epsilon}}\|(\mtx{B}\vct{\theta}+\mtx{A}\vct{\epsilon})- \mathbb{E}_{\mtx{X},\mtx{W}}\mtx{B}\vct{\theta} \|_F^2 \\
    \nonumber
    =& \frac{1}{d} \mathbb{E}_{\vct{\theta}} \mathbb{E}_{\mtx{X},\mtx{W},\vct{\epsilon}}\|(\mtx{B}- \mathbb{E}_{\mtx{X},\mtx{W}}\mtx{B})\vct{\theta}+\mtx{A}\vct{\epsilon} \|_F^2 \\
    \nonumber
    =& \frac{1}{d} \mathbb{E}_{\vct{\theta}} \mathbb{E}_{\mtx{X},\mtx{W}}\|(\mtx{B}- \mathbb{E}_{\mtx{X},\mtx{W}}\mtx{B})\vct{\theta}\|_F^2 +\frac{1}{d}\mathbb{E}_{\mtx{X},\mtx{W},\vct{\epsilon}}\|\mtx{A}\vct{\epsilon}\|_F^2\\
    \nonumber
    =& \underbrace{\frac{1}{d}\mathbb{E}_{\mtx{X},\mtx{W}}\|\mtx{B}-\mathbb{E}_{\mtx{X},\mtx{W}}\mtx{B}\|_F^2}_{\varclean} +\underbrace{\frac{\sigma^2}{d}\mathbb{E}_{\mtx{X},\mtx{W}}\|\mtx{A}\|_F^2}_{\varnoise}
\end{align}

\subsection{Expressions of Bias and $\varclean$ }\label{apdx:yangsresult}
\cite{yang2020rethinking} analyzed the bias-variance decomposition without considering label noise. Hence the variance in their analysis corresponds to $\varclean$ in ours. We show their expressions of $\bias$ and $\varclean$ below, based on which we plot the dashed gray line in Figure \ref{subfig:theo_risk} and the solid gray line in Figure \ref{subfig:theo_variance}.
\begin{align}
    \nonumber
    &\bias = \frac{1}{4}\Phi_3(\lambda_0, \gamma)^2\\
    \nonumber
    &\varclean = \begin{cases}
    \frac{\Phi_1(\lambda_0,\gamma)}{2\Phi_2(\lambda_0, \gamma)} -\frac{(1-\gamma)(1-2\gamma)}{2\gamma}-\frac{1}{4}\Phi_3(\lambda_0, \gamma)^2, \quad \gamma\leq 1, \\
    \frac{\Phi_1(\lambda_0,1/\gamma)}{2\Phi_2(\lambda_0, 1/\gamma)} -\frac{\gamma-1}{2}-\frac{1}{4}\Phi_3(\lambda_0, \gamma)^2, \quad \gamma> 1,
    \end{cases}
\end{align}
where 
\begin{align}
    \nonumber
    \Phi_1(\lambda_0, \gamma) =& \lambda_0 (\gamma+1)+(\gamma-1)^2,\\
    \nonumber
    \Phi_2(\lambda_0, \gamma) =& \sqrt{(\lambda_0+1)^2+2(\lambda_0-1)\gamma+\gamma^2} ,\\
    \nonumber
    \Phi_3(\lambda_0, \gamma) =& \Phi_2(\lambda_0, \gamma)-\lambda_0-\gamma+1.
\end{align}

\subsection{Proof of Theorem \ref{theo:vnoise}}\label{apdx:proof_width}

\begin{lemma}\label{lemma: M_tilde} 
Define $\tilde{\mtx{B}}= \mtx{W}^{\top}(\mtx{W}\mtx{W}^{\top}+\lambda_0 \pmb{I})^{-1}\mtx{W}$. $\|\tilde{\mtx{B}}\tilde{\mtx{B}}^{\top}-\frac{n}{d}\mtx{A}\mtx{A}^{\top}\|_2=0$ almost surely, i.e.,  $\mathbb{P}(\|\tilde{\mtx{B}}\tilde{\mtx{B}}^{\top}-\frac{n}{d}\mtx{A}\mtx{A}^{\top}\|_2 \leq \epsilon) \geq 1-\delta$ where $\epsilon$ and $\delta$ both tend to $0$ under the asymptotics $n\rightarrow \infty$, $d\rightarrow \infty$ and $n/d\rightarrow \infty$.
\end{lemma}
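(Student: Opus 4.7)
The plan is to exploit the assumption $n/d\to\infty$ to replace $\mtx{X}\mtx{X}^\top$ by its population value $(n/d)\mathbf{I}_d$ inside the resolvent defining $\mtx{A}$. With the scaling $\lambda = (n/d)\lambda_0$ in place, this reduction will make $\mtx{A}$ collapse, up to a vanishing error in operator norm, to $(d/n)\tilde{\mtx{B}}\mtx{X}$, after which the identity $(n/d)\mtx{A}\mtx{A}^\top = \tilde{\mtx{B}}\tilde{\mtx{B}}^\top + o(1)$ falls out of a short expansion of $\mtx{A}\mtx{A}^\top$.

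Concretely, I would start with two standard high-probability bounds: the Wishart concentration $\|\tfrac{d}{n}\mtx{X}\mtx{X}^\top - \mathbf{I}_d\|_2 = O(\sqrt{d/n})$ (write $\mtx{Z}$ for this deviation, so $\mtx{X}\mtx{X}^\top = \tfrac{n}{d}(\mathbf{I}_d + \mtx{Z})$ with $\|\mtx{Z}\|_2\to 0$ almost surely), and the Bai--Yin bound $\|\mtx{W}\|_2 \le 1+\sqrt{\gamma} + o(1)$ almost surely. Substituting and using $\lambda = (n/d)\lambda_0$ cancels the prefactor inside the resolvent:
\begin{align*}
\mtx{A} = \tfrac{d}{n}\,\mtx{W}^\top (\mtx{M}+\mtx{E})^{-1} \mtx{W}\mtx{X}, \quad \mtx{M} := \mtx{W}\mtx{W}^\top + \lambda_0\mathbf{I}, \quad \mtx{E} := \mtx{W}\mtx{Z}\mtx{W}^\top.
\end{align*}
Since $\|\mtx{M}^{-1}\|_2 \le 1/\lambda_0$ and $\|\mtx{E}\|_2 \le \|\mtx{W}\|_2^2\|\mtx{Z}\|_2\to 0$, the resolvent identity $(\mtx{M}+\mtx{E})^{-1}-\mtx{M}^{-1} = -(\mtx{M}+\mtx{E})^{-1}\mtx{E}\mtx{M}^{-1}$ gives $\|(\mtx{M}+\mtx{E})^{-1}-\mtx{M}^{-1}\|_2 \to 0$, so $\mtx{A} = (d/n)\tilde{\mtx{B}}\mtx{X} + \mtx{R}$ with $\|\mtx{R}\|_2 = O(\sqrt{d/n}\,\|\mtx{Z}\|_2)$, using $\|\mtx{X}\|_2 = O(\sqrt{n/d})$ and $\|\tilde{\mtx{B}}\|_2 \le 1$ (immediate from the SVD of $\mtx{W}$).

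Expanding $\mtx{A}\mtx{A}^\top$ and multiplying by $n/d$ then produces
\begin{align*}
\tfrac{n}{d}\mtx{A}\mtx{A}^\top = \tilde{\mtx{B}}(\mathbf{I}+\mtx{Z})\tilde{\mtx{B}}^\top + o(1) = \tilde{\mtx{B}}\tilde{\mtx{B}}^\top + o(1)
\end{align*}
in spectral norm, because every cross term involving $\mtx{R}$ is of order $\|\mtx{Z}\|_2$ once the $n/d$ factors cancel, and $\|\tilde{\mtx{B}}\mtx{Z}\tilde{\mtx{B}}^\top\|_2 \le \|\mtx{Z}\|_2 \to 0$. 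The main obstacle is purely bookkeeping: the various factors live on different natural scales in $n/d$ ($\|\mtx{X}\|_2$ grows like $\sqrt{n/d}$, $\|\mtx{W}\|_2$ is $\Theta(1)$, and $\mtx{R}$ carries a compensating $\sqrt{d/n}$), so one must be careful that when these products are recombined with the outer $n/d$ multiplier, nothing blows up. The strict positivity of $\lambda_0$ and the Bai--Yin bound on $\mtx{W}$ are what keep every spectral norm uniformly controlled throughout.
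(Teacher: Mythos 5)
Your proposal is correct and follows essentially the same route as the paper's proof: both replace $\tfrac{d}{n}\mtx{X}\mtx{X}^{\top}$ by $\mathbf{I}$ via Wishart concentration, control the resulting resolvent perturbation (your $(\mtx{M}+\mtx{E})^{-1}-\mtx{M}^{-1}$ is exactly the paper's $\mtx{\Omega}$, bounded the same way using $\lambda_0>0$ and Bai--Yin for $\|\mtx{W}\|_2$), and then expand the product so that every error term carries a vanishing factor in spectral norm. The only difference is cosmetic bookkeeping --- you write $\mtx{A}=(d/n)\tilde{\mtx{B}}\mtx{X}+\mtx{R}$ before squaring, whereas the paper expands $\tilde{\mtx{B}}\tilde{\mtx{B}}^{\top}-\tfrac{n}{d}\mtx{A}\mtx{A}^{\top}$ directly --- and your version is, if anything, slightly cleaner.
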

\begin{proof}
Define
\begin{align}
    \nonumber
    \mtx{\Delta} \coloneqq & \frac{d}{n}\mtx{X}\mtx{X}^{\top}-I, \\
    \nonumber
    \mtx{\Psi} \coloneqq & (\mtx{W}\mtx{W}^{\top}+\lambda_0 I)^{-1}\\
    \nonumber
    \mtx{\Gamma} \coloneqq & (\frac{d}{n}\mtx{W} \mtx{X}\mtx{X}^{\top}\mtx{W}^{\top}+\lambda_0 I)^{-1}\\
    \nonumber
    \mtx{\Omega} \coloneqq & \mtx{\Gamma}-\mtx{\Psi}.
\end{align}
Then we have 
\begin{align}
    \nonumber
    \tilde{\mtx{B}}\tilde{\mtx{B}}^{\top}-\frac{n}{d}\mtx{A}\mtx{A}^{\top} =~ & \mtx{\Omega} \mtx{W} \mtx{\Delta} \mtx{W}^{\top} \mtx{\Omega} +\mtx{\Psi} \mtx{W}\mtx{\Delta} \mtx{W}^{\top} \mtx{\Psi} + \mtx{\Omega} \mtx{W} \mtx{\Delta} \mtx{W}^{\top} \mtx{\Psi} + \mtx{\Psi} \mtx{W}\mtx{\Delta} \mtx{W}^{\top}\mtx{\Omega} \\
    \nonumber
     & ~~~~~~~~~~~~~~ + ~\mtx{\Omega} \mtx{W}\mtx{W}^{\top}\mtx{\Omega} + \mtx{\Omega} \mtx{W}\mtx{W}^{\top}\mtx{\Psi} + \mtx{\Psi} \mtx{W}\mtx{W}^{\top}\mtx{\Omega}.
\end{align}
By triangle inequality and the sub-multiplicative property of spectral norm we have
\begin{align}
    \nonumber
    \| \tilde{\mtx{B}}\tilde{\mtx{B}}^{\top}-\frac{n}{d}\mtx{A}\mtx{A}^{\top} \|_2 \leq ~& \|\mtx{W}\|_2^2\|\mtx{\Omega}\|_2^2\| \mtx{\Delta} \|_2 + \|\mtx{W}\|_2^2\|\mtx{\Psi}\|_2^2\| \mtx{\Delta} \|_2 + 2\|\mtx{W}\|_2^2 \| \mtx{\Psi} \|_2 \|\mtx{\Omega}\|_2 \| \mtx{\Delta} \|_2 \\
    \nonumber
    ~~~~~~~~~~ & + ~ \|\mtx{W}\|_2^2 \|\mtx{\Omega}\|_2^2 + 2\|\mtx{W}\|_2^2 \|\mtx{\Psi}\|_2 \|\mtx{\Omega}\|_2
\end{align}
It remains to show that with the asymptotic assumption $n/d\rightarrow \infty$,  $ \|\mtx{\Omega}\|_2=0$ and $\|\mtx{\Delta}\|_2=0$ almost surely, and $\|\mtx{W}\|_2$ and $\|\mtx{\Psi}\|$ can be bounded from above \cite{yang2020rethinking}:
\begin{align}
    \nonumber
    &\mathbb{P}(\|\mtx{\Delta}\|_2 \leq 4\sqrt{\frac{d}{n}}+4\frac{d}{n})\geq 1-e^{-d/2}\\
    \nonumber
    &\|\mtx{\Omega}\|_2 \leq \| \mtx{\Psi}\|_2^2\|\mtx{W}\|_2^2\|\mtx{\Delta}\|_2 + O(\|\mtx{\Delta}\|_2)\\
    \nonumber
    &\|\mtx{W}\|_2 \overset{a.s.}{=}1+\sqrt{\eta}<\infty\\
    \nonumber
    & \| \mtx{\Psi}\|_2 \leq \frac{1}{\lambda_0}<\infty.
\end{align}
Therefore we have $\| \tilde{\mtx{B}}\tilde{\mtx{B}}^{\top}-\frac{n}{d}\mtx{A}\mtx{A}^{\top} \|_2=0$ almost surely.
\end{proof}

\begin{corollary}\label{coro: A_M}
$\frac{\sigma^2}{d}\|\mtx{A}\|^2_F = \frac{\kappa}{d}\|\tilde{\mtx{B}}\|^2_F$ almost surely.
\end{corollary}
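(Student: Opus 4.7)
The plan is to deduce the corollary directly from Lemma \ref{lemma: M_tilde} by moving from spectral-norm control to Frobenius/trace control, exploiting the fact that everything lives in $\mathbb{R}^{p\times p}$ with $p=\gamma d$. Concretely, write both Frobenius norms as traces: $\|\mtx{A}\|_F^2 = \Tr(\mtx{A}\mtx{A}^{\top})$ and $\|\tilde{\mtx{B}}\|_F^2 = \Tr(\tilde{\mtx{B}}\tilde{\mtx{B}}^{\top})$. Then substitute $\sigma^2 = \frac{n}{d}\sigma_0^2$ so that the quantity we want to show is negligible becomes
\begin{align*}
\frac{\sigma_0^2}{d}\|\tilde{\mtx{B}}\|_F^2 - \frac{\sigma^2}{d}\|\mtx{A}\|_F^2 = \frac{\sigma_0^2}{d}\,\Tr\!\Bigl(\tilde{\mtx{B}}\tilde{\mtx{B}}^{\top} - \tfrac{n}{d}\mtx{A}\mtx{A}^{\top}\Bigr).
\end{align*}
This is the payoff of the specific scaling chosen in the setup: the prefactor $\sigma^2$ absorbs the $n/d$ factor that separates $\mtx{A}\mtx{A}^{\top}$ from $\tilde{\mtx{B}}\tilde{\mtx{B}}^{\top}$, putting us in exactly the matrix pair governed by Lemma \ref{lemma: M_tilde}.

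Next I would bound the trace by spectral norm: for any symmetric $p \times p$ matrix $\mtx{M}$, $|\Tr(\mtx{M})| \leq p\,\|\mtx{M}\|_2$. Applying this and dividing by $d$,
\begin{align*}
\left|\frac{\sigma_0^2}{d}\|\tilde{\mtx{B}}\|_F^2 - \frac{\sigma^2}{d}\|\mtx{A}\|_F^2\right| \leq \frac{\sigma_0^2\, p}{d}\,\bigl\|\tilde{\mtx{B}}\tilde{\mtx{B}}^{\top} - \tfrac{n}{d}\mtx{A}\mtx{A}^{\top}\bigr\|_2.
\end{align*}
In the asymptotic regime $p/d \to \gamma$, so the prefactor $\sigma_0^2 p/d$ stays bounded by a fixed constant. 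Lemma \ref{lemma: M_tilde} gives that the spectral norm on the right tends to zero almost surely, so the left-hand side tends to zero almost surely, which is the asymptotic equality claimed.

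I do not expect a serious obstacle here: the corollary is essentially a bookkeeping consequence of Lemma \ref{lemma: M_tilde}, and the only point that needs a moment of care is making sure the loss of sharpness in the inequality $|\Tr(\mtx{M})| \leq p\|\mtx{M}\|_2$ is harmless. It is, because $p$ only grows like $\gamma d$ while we carry a compensating $1/d$ out front, and $\gamma$ and $\sigma_0^2$ are fixed constants independent of $n, d, p$. The only subtlety worth flagging is interpreting the "almost surely" in the statement as the standard asymptotic notion used in Lemma \ref{lemma: M_tilde} (convergence with probability one along the sequence $n,d,p \to \infty$ with $n/d \to \infty$, $p/d \to \gamma$), which is the same convention inherited from that lemma's proof.
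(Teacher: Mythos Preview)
Your proposal is correct and takes essentially the same route as the paper: write the difference as a trace of $\tilde{\mtx{B}}\tilde{\mtx{B}}^{\top}-\tfrac{n}{d}\mtx{A}\mtx{A}^{\top}$, bound the trace by the matrix dimension times the spectral norm, and invoke Lemma~\ref{lemma: M_tilde}. One small slip: these matrices are $d\times d$, not $p\times p$ (since $\tilde{\mtx{B}}=\mtx{W}^{\top}(\cdot)\mtx{W}$ with $\mtx{W}\in\mathbb{R}^{p\times d}$), so the prefactor you get is $\sigma_0^2$ rather than $\sigma_0^2\gamma$; this is still a fixed constant and the argument goes through unchanged, while the paper factors out $1/n$ to obtain the prefactor $d/n\to 0$ instead.
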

\begin{proof}
By lemma \ref{lemma: M_tilde} we have
\begin{align}
    \nonumber
    |\Tr(\frac{1}{n}\tilde{\mtx{B}}\tilde{\mtx{B}}^{\top}-\frac{1}{d}\mtx{A}\mtx{A}^{\top})| =&~ \frac{1}{n}\Tr(\tilde{\mtx{B}}\tilde{\mtx{B}}^{\top}-\frac{n}{d}\mtx{A}\mtx{A}^{\top}) \\
    \nonumber
    \leq & ~\frac{d}{n}\| \tilde{\mtx{B}}\tilde{\mtx{B}}^{\top}-\frac{n}{d}\mtx{A}\mtx{A}^{\top} \|_2 \\
    \nonumber
    = &~ 0,
\end{align}
which yields $\frac{1}{d}\|\mtx{A}\|^2_F = \frac{1}{n}\|\tilde{\mtx{B}}\|^2_F$ and thus $\frac{\sigma^2}{d}\|\mtx{A}\|^2_F = \frac{\kappa}{d}\|\tilde{\mtx{B}}\|^2_F$.
\end{proof}

Now it only remains to compute $\frac{1}{d}\|\tilde{\mtx{B}}\|^2_F$. By Sherman–Morrison formula,
\begin{align}
    \nonumber
    \tilde{\mtx{B}} = I-(I+\frac{\alpha}{\eta}\mtx{Q})^{-1},
\end{align}
where $\alpha = \lambda_0^{-1}$, $\mtx{Q}=(d/p)\mtx{W}^{\top}\mtx{W}$ and $\eta=d/p=1/\gamma$. Let $F^{\mtx{Q}}$ be the empirical spectral distribution of $\mtx{Q}$, i.e.,
\begin{align}
    \nonumber
    F^{\mtx{Q}}(x) = \frac{1}{d}\#\{j\leq d: \lambda_j\leq x \},
\end{align}
where $\# S$ denotes the cardinality of the set $S$ and $\lambda_j$ denotes the $j$-th eigenvalue of $\mtx{Q}$. Then
\begin{align}
    \nonumber
    \frac{\|\tilde{\mtx{B}}\|^2}{d} = \int_{\mathbb{R}^+} \frac{(\frac{\alpha}{\eta}x)^2}{(1+\frac{\alpha}{\eta}x)^2}d F^{\mtx{Q}}(x).
\end{align}
By Marchenko-Pastur Law \cite{bai2010spectral}, 
\begin{align}
    \nonumber
    \frac{\|\tilde{\mtx{B}}\|^2}{d} =& \frac{1}{2\pi}\int_{\eta_-}^{\eta_+}\frac{\sqrt{(\eta_+-x)(x-\eta_-)}(\frac{\alpha}{\eta}x)^2}{\eta x (1+\frac{\alpha}{\eta}x)^2} dx \\
    \label{eq: integral}
    = & ~\frac{1}{2\alpha}\left(\frac{-\alpha^2/\eta^2-(3\alpha-2\alpha^2)/\eta-\alpha^2-3\alpha-2}{\sqrt{\alpha^2/\eta^2+(2\alpha-2\alpha^2)/\eta+\alpha^2+2\alpha+1}}+\alpha/\eta+\alpha+2\right).
\end{align}
Combining equation \ref{eq: integral} and corollary \ref{coro: A_M} and substituting $\alpha=\lambda_0^{-1}, \eta=\gamma^{-1}$ into the result completes the proof.

\subsection{Proof of Theorem \ref{theo:pruning}}\label{apdx:proof_prune}
Define the following shorthand
\begin{align}
    \nonumber
    \mtx{F}\coloneqq & \mtx{W}\mtx{X}  \\
    \nonumber
    \mtx{D}_i \coloneqq & \textbf{diag}(\mtx{M}_{i,1},\mtx{M}_{i,2}, \dots, \mtx{M}_{i,p})\\
    \nonumber
    \mu_i\coloneqq & \text{$i$-th entry of $\vct{\mu}$}\\
    \nonumber
    \mtx{V}_i^{\top} \coloneqq & \text{$i$-th row of $\mtx{V}$}\\
    \nonumber
    \mtx{\Sigma} \coloneqq & \sum_{i=1}^q \mu_i^2 \mtx{D}_i\\
     \nonumber
    \mtx{H} \coloneqq & \vct{y}^{\top}\mtx{F}^{\top}\mtx{\Sigma}\mtx{F} (\mtx{F}^{\top}\mtx{\Sigma}\mtx{F}+\lambda \mathbf{I})^{-1}
\end{align}
The parameter of the second layer is given by ridge regression:
\begin{align}
    \nonumber
    \hat{\mtx{V}} = & \argmin_{\mtx{V}\in\mathbbm{R}^{q\times p}} \| \left((\mtx{V}\odot \mtx{M})\mtx{W}\mtx{X}\right)^{\top}\vct{\mu} -\vct{y} \|_F^2 + \lambda\| \mtx{V} \|_F^2 \\
    \nonumber
    = & \argmin_{\mtx{V}\in\mathbbm{R}^{q\times p}} \| \sum_{i=1}^q \mu_i \vct{V}_i^{\top} \mtx{D}_i \mtx{F}  -\vct{y}^{\top} \|_F^2 + \lambda\| \mtx{V} \|_F^2.
\end{align}
Solving the above yields
\begin{align}
    \nonumber
    \hat{\mtx{V}_i}^{\top} = \frac{1}{\lambda}(\mtx{y}^{\top}-\mtx{H}) \mtx{F}^{\top} (\mu_i \mtx{D}_i).
\end{align}
Given a clean test example $(\vct{x},y)$, the expression of the risk is 
\begin{align}
    \nonumber
    \textbf{Risk} = & \mathbb{E} \| \sum_{i=1}^q\mu_i \hat{\mtx{V}}_i^{\top}\mtx{W}\vct{x}-\vct{\theta}^{\top}\vct{x} \|_F^2\\
    \nonumber
     = & \mathbb{E} \| \frac{1}{\lambda}(\vct{y}^{\top}-\mtx{H})\mtx{F}^{\top}\mtx{\Sigma}\mtx{W}\vct{x}-\vct{\theta}^{\top}\vct{x} \|_F^2\\
     \nonumber
     = & \mathbb{E} \| \frac{1}{\lambda}(\vct{y}^{\top}-\vct{y}^{\top}\mtx{F}^{\top}\mtx{\Sigma}\mtx{F} (\mtx{F}^{\top}\mtx{\Sigma}\mtx{F}+\lambda \mathbf{I})^{-1})\mtx{F}^{\top}\mtx{\Sigma}\mtx{W}\vct{x}-\vct{\theta}^{\top}\vct{x} \|_F^2 
\end{align}
Observe that the diagonal matrix $\mtx{\Sigma}$, whose diagonal entries $\sum_{i=1}^q \mu_i^2 \mtx{M}_{i,j}=\frac{1}{q}\sum_{i=1}^q (\sqrt{q}\mu_i)^2 \mtx{M}_{i,j}$ converge in probability to $\alpha$ as $q\rightarrow \infty$, captures all the effects of $\vct{\mu}$ and $\mtx{M}$ on the risk. Thus we can replace $\mtx{\Sigma}$ with $\alpha \mathbf{I}_{p}$
\begin{align}
    \label{eq:risk_alpha}
    \textbf{Risk} = & \mathbb{E} \| \frac{\alpha}{\lambda}\vct{y}^{\top} \mtx{F}\mtx{W}\vct{x}-\frac{\alpha^2}{\lambda^2}\vct{y}^{\top}\mtx{F}^{\top}\mtx{F} (\frac{\alpha}{\lambda}\mtx{F}^{\top}\mtx{F}+\mathbf{I})^{-1}\mtx{F}^{\top}\mtx{W}\vct{x} -\vct{\theta}^\top \vct{x} \|_F^2.
\end{align}
It remains to show that the expression of the risk is exactly the same as the risk in the setting of Section \ref{sec:theory_width} with $\lambda$ replaced by $\lambda/\alpha$. The risk in Section \ref{sec:theory_width} (for convenience denote it by $\textbf{Risk}_0$) can be written as: \looseness=-1
\begin{align}
    \nonumber
    \textbf{Risk}_0 = & \mathbb{E} \| \vct{y}^{\top}\mtx{F}^{\top}(\mtx{F}\mtx{F}^{\top}+\lambda \mathbf{I})^{-1}\mtx{W}\vct{x}-\vct{\theta}^{\top}\vct{x}\|_F^2\\
     \label{eq:identity}
    = & \mathbb{E} \|    \frac{1}{\lambda}\vct{y}^{\top}\mtx{F}\mtx{W}\vct{x}-\frac{1}{\lambda^2}\vct{y}^{\top}\mtx{F}^{\top}\mtx{F}(\frac{1}{\lambda}\mtx{F}^{\top}\mtx{F}+\mathbf{I})^{-1}\mtx{F}^{\top}\mtx{W}\vct{x} -\vct{\theta}^\top \vct{x}\|_F^2.
\end{align}
Equation \ref{eq:identity} is obtained by applying Woodbury matrix identity to $(\mtx{F}\mtx{F}^{\top}+\lambda \mathbf{I})^{-1}$. It is easy to check that replacing $\lambda$ in RHS of equation \ref{eq:identity} with $\lambda/\alpha$ yields the same as RHS of equation \ref{eq:risk_alpha}.

\subsection{$\bias$ and $\varnoise/\kappa$ in Section \ref{sec: theory_density}}\label{apdx:additional_theo_plots}
In Figure \ref{fig:theory_pruning_bias_vnoise} we plot the expressions of $\bias$ and $\varnoise/\kappa$ against both width and density based on Theorem \ref{theo:pruning}. $\bias$ monotonically decreases along both axes and $\varnoise/\kappa$ monotonically increases along both axes. $\varclean$ variance is unimodal along y-axis (width), manifests more complicated behavior along x-axis (density), and decreases along both axes once width is sufficiently large. It is clear that such behavior differs from that in classical bias-variance tradeoff.

\begin{figure}[t!]
    \centering
    \begin{subfigure}{0.24\textwidth}
    \includegraphics[width=\textwidth]{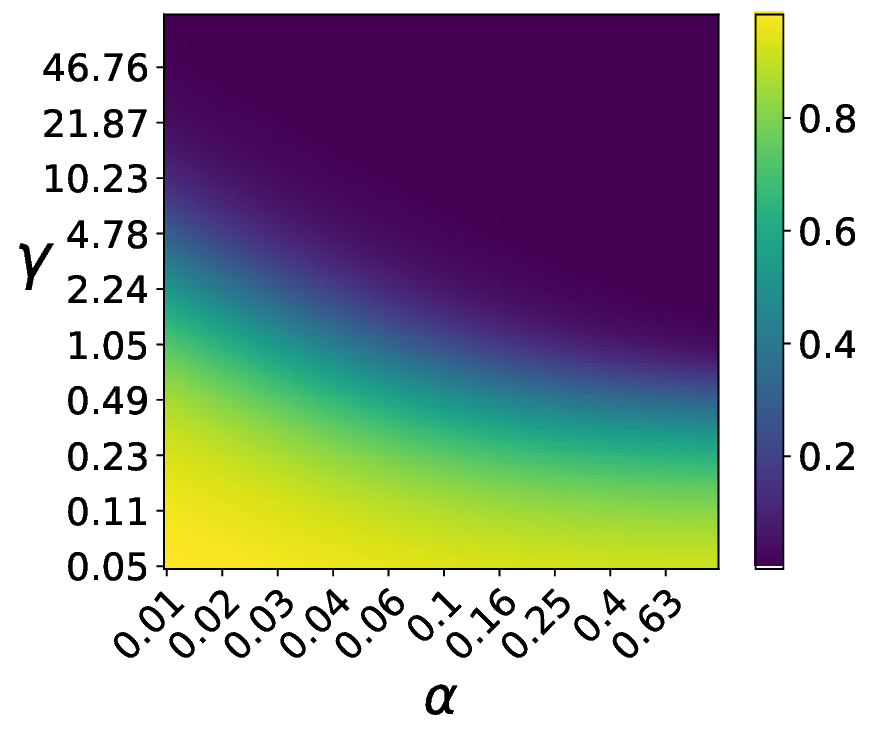}
    \caption{$\bias$}
    \label{subfig:theo_pruning_bias}
    \end{subfigure}
    \begin{subfigure}{0.24\textwidth}
    \includegraphics[width=\textwidth]{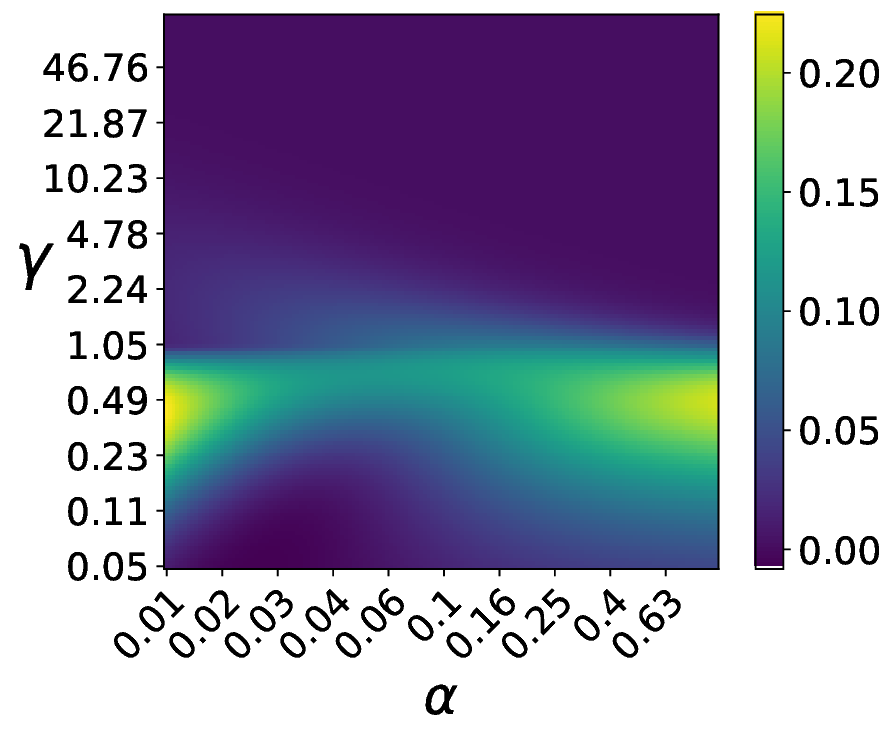}
    \caption{$\varclean$}
    \label{subfig:theo_pruning_var_0}
    \end{subfigure}
    \begin{subfigure}{0.24\textwidth}
    \includegraphics[width=\textwidth]{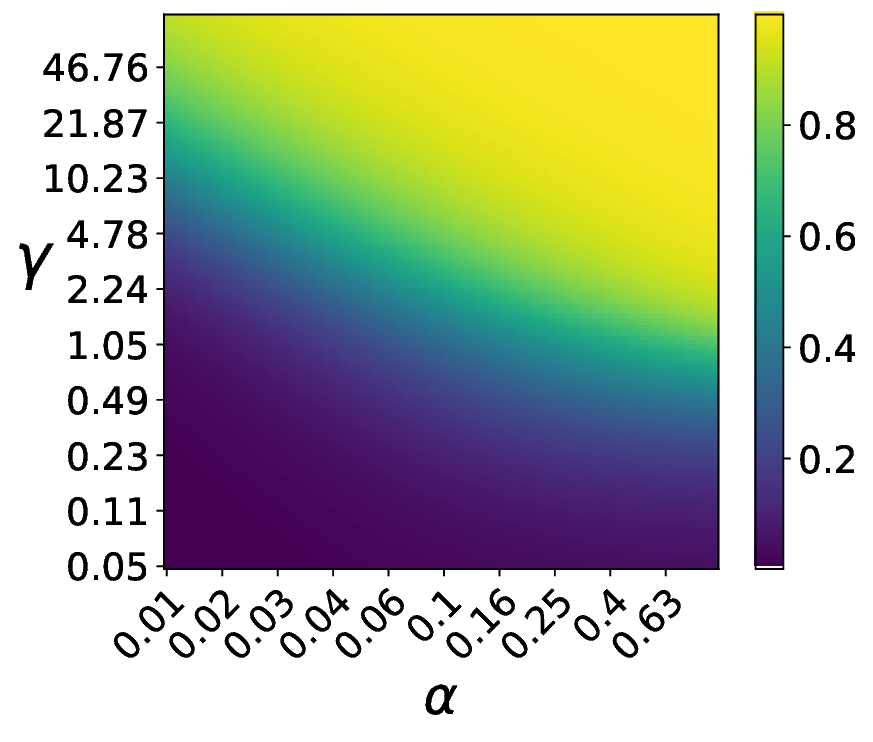}
    \caption{$\varnoise/\kappa$}
    \label{subfig:theo_pruning_vnoise_sigma}
    \end{subfigure}
    \caption{Expressions of $\bias$, $\varclean$ and $\varnoise/\kappa$ in Theorem \ref{theo:pruning}}
    \label{fig:theory_pruning_bias_vnoise}
\end{figure}

\subsection{Variants of Theorem \ref{theo:pruning}}\label{apdx:variants}
\begin{figure}[t!]
    \centering
    \begin{subfigure}{0.245\textwidth}
    \includegraphics[width=\textwidth]{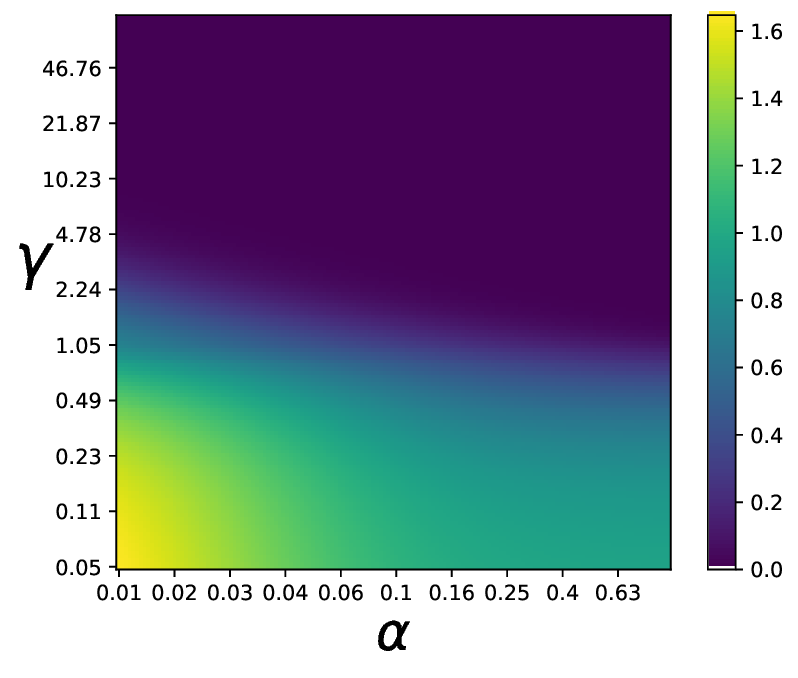}
    \caption{\textbf{Risk} ($\kappa=0$)}
    \end{subfigure}
    \begin{subfigure}{0.245\textwidth}
    \includegraphics[width=\textwidth]{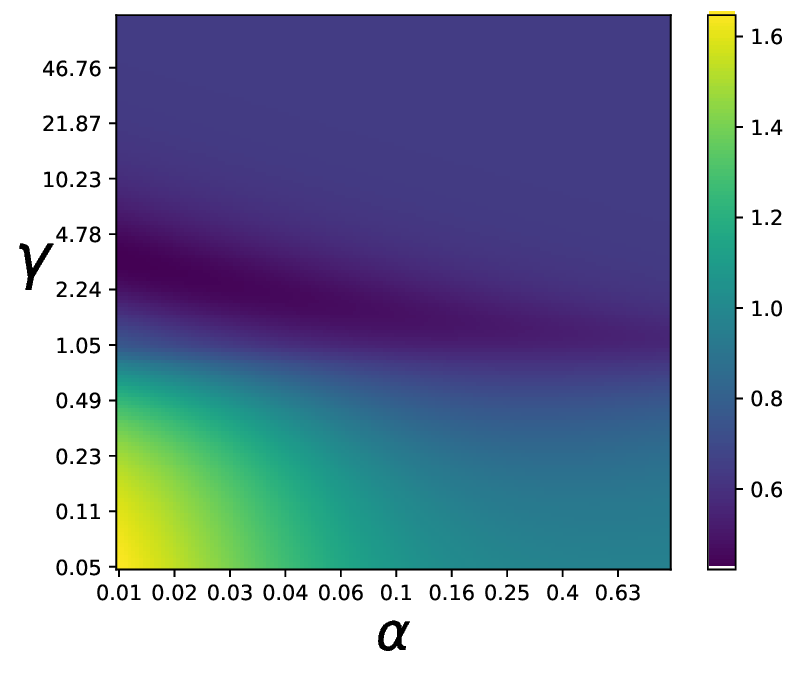}
    \caption{\textbf{Risk} ($\kappa=0.64$)}
    \end{subfigure}
    \begin{subfigure}{0.245\textwidth}
    \includegraphics[width=\textwidth]{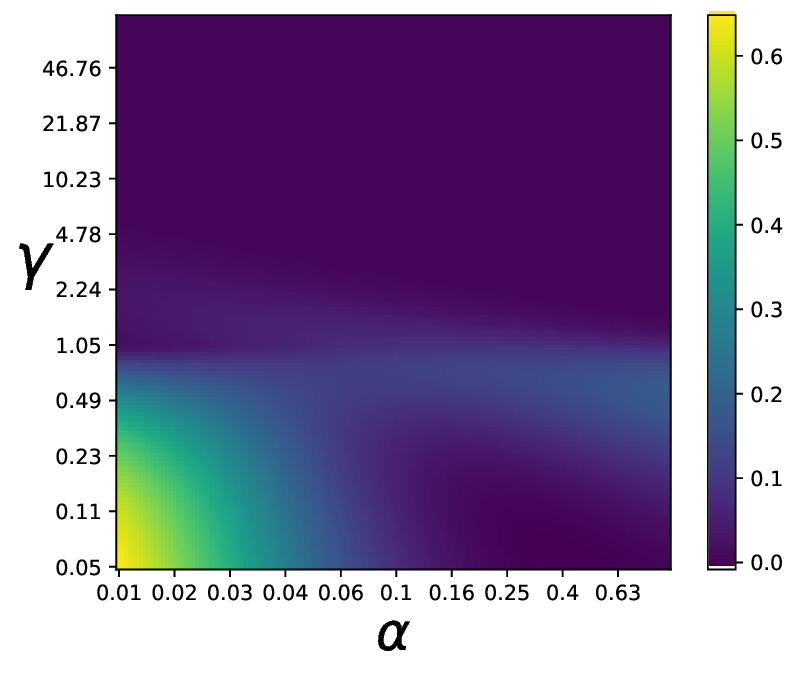}
    \caption{$\var$ ($\kappa=0$)}
    \end{subfigure}
    \begin{subfigure}{0.245\textwidth}
    \includegraphics[width=\textwidth]{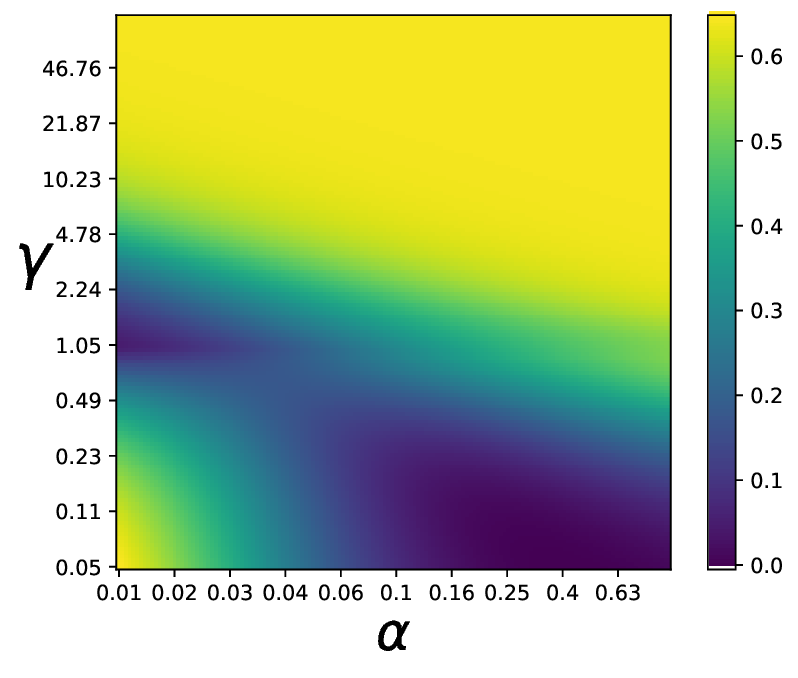}
    \caption{$\var$ ($\kappa=0.64$)}
    \end{subfigure}
    \caption{Expressions of the risk and variance under different noise levels with $\tau=\gamma$. We let $\lambda_0=0.05$.}
    \label{fig:theory_variants_risk_var}
\end{figure}

\begin{figure}[t]
    \centering
    \begin{subfigure}{0.24\textwidth}
    \includegraphics[width=\textwidth]{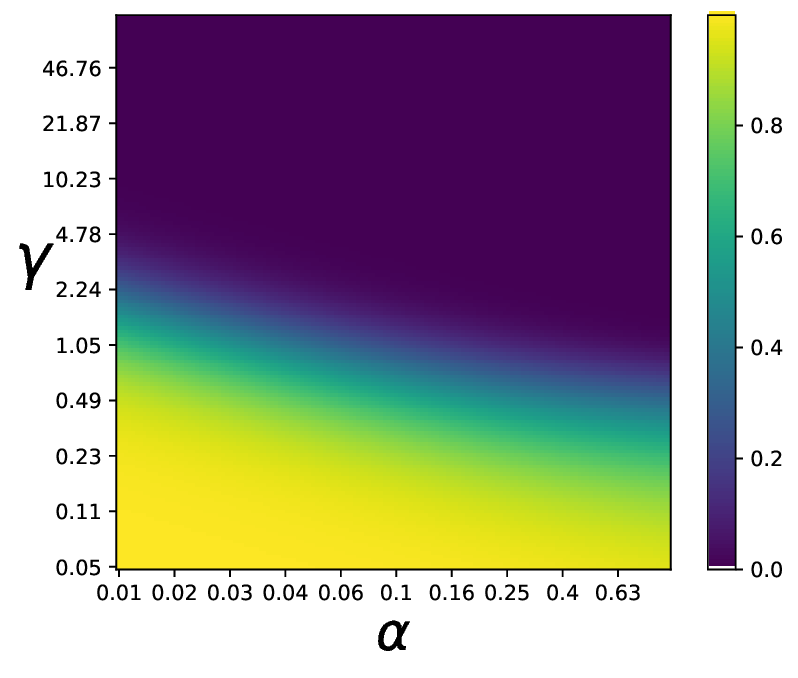}
    \caption{$\bias$}
    \end{subfigure}
    \begin{subfigure}{0.24\textwidth}
    \includegraphics[width=\textwidth]{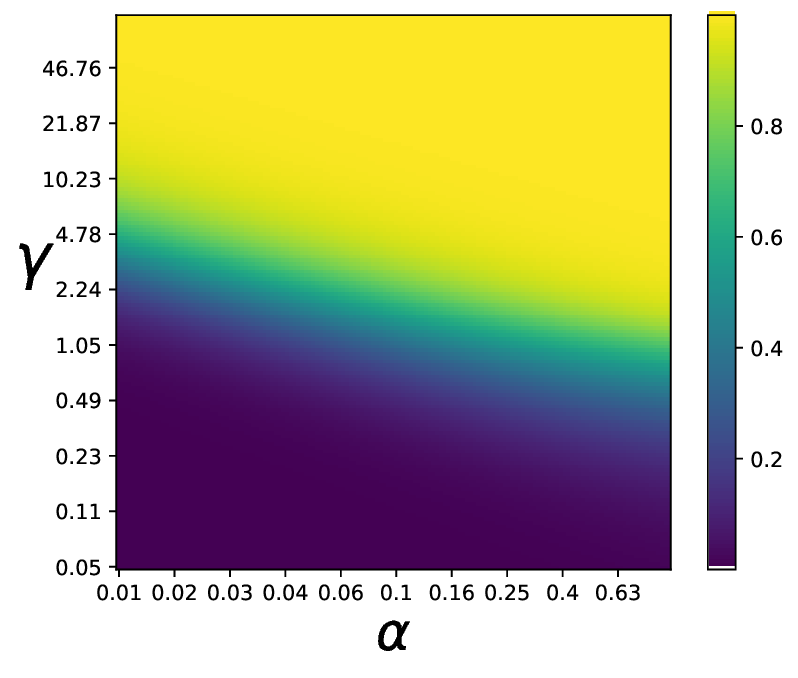}
    \caption{$\varnoise/\kappa$}
    \end{subfigure}
    \caption{Expressions of $\bias$ and $\varnoise/\kappa$ with $\tau=\gamma$.}
    \label{fig:theory_variants_bias_vnoise}
\end{figure}
In Theorem \ref{theo:pruning} we let $\vct{\mu}$'s entries be drawn from $\mathcal{N}(0,1/q)$ so that $q$ does not appear in the expression of the risk. Alternatively we can let $\vct{\mu}$'s entries be drawn from $\mathcal{N}(0,1/d)$ and assume $q/d=\tau$. Then the risk and its decomposition are dependent on $\gamma, \tau, \alpha$. Similar to the proof in \ref{apdx:proof_prune}, we can show that in this case we only have to replace $\lambda_0$ in the setting of Theorem \ref{theo:vnoise} with $\lambda_0/(\tau\alpha)$ to get the expressions of risk. We further let $\tau=\gamma$ and plot the risk, $\var$, $\bias$, $\varclean$ (i.e., $\var$ with $\kappa=0$), $\varnoise$ in Figures \ref{fig:theory_variants_risk_var} and \ref{fig:theory_variants_bias_vnoise}. 

\subsection{Connection to the observation made by \cite{golubeva2020wider}}\label{apdx:repro}
\begin{figure}[t!]
    \centering
    \begin{subfigure}{0.245\textwidth}
    \includegraphics[width=\textwidth]{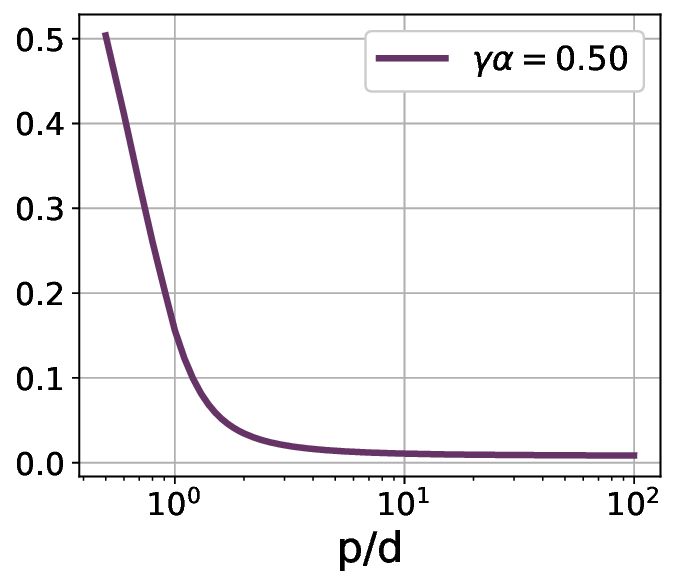}
    \end{subfigure}
    \begin{subfigure}{0.245\textwidth}
    \includegraphics[width=\textwidth]{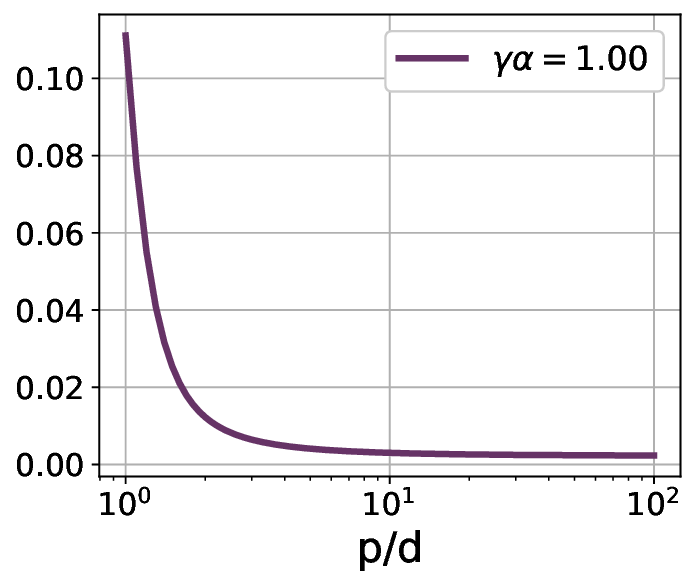}
    \end{subfigure}
    \begin{subfigure}{0.245\textwidth}
    \includegraphics[width=\textwidth]{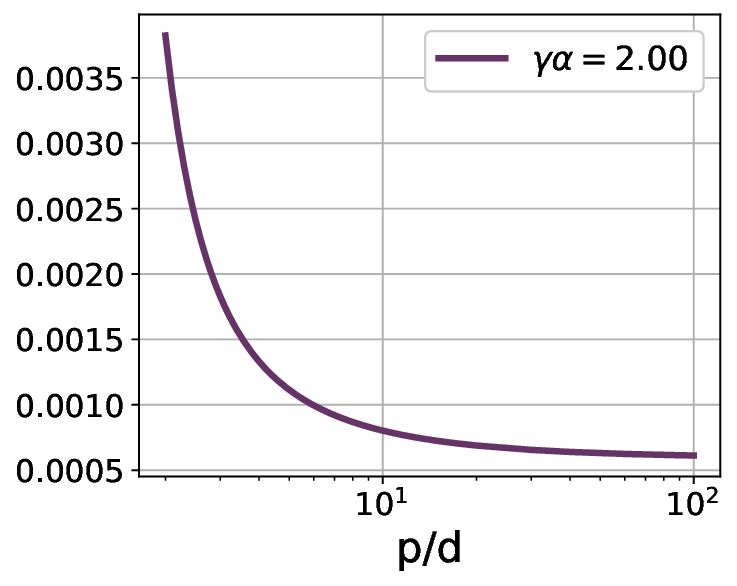}
    \end{subfigure}
    \begin{subfigure}{0.245\textwidth}
    \includegraphics[width=\textwidth]{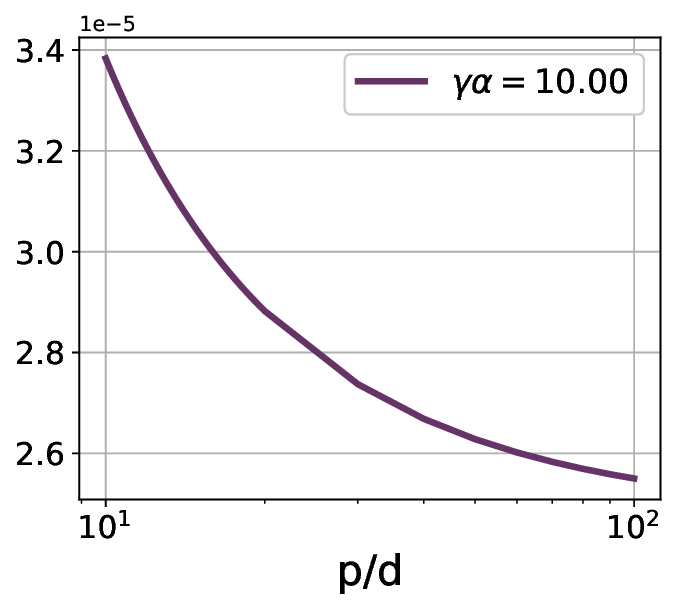}
    \end{subfigure}
    
    \begin{subfigure}{0.245\textwidth}
    \includegraphics[width=\textwidth]{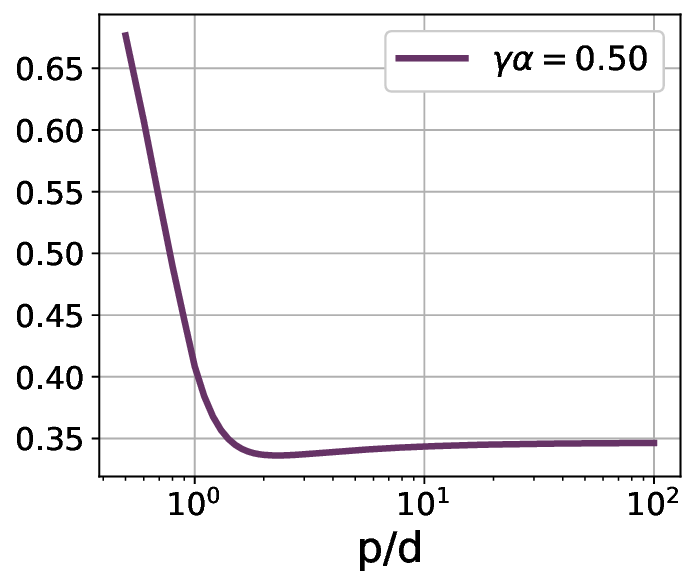}
    \end{subfigure}
    \begin{subfigure}{0.245\textwidth}
    \includegraphics[width=\textwidth]{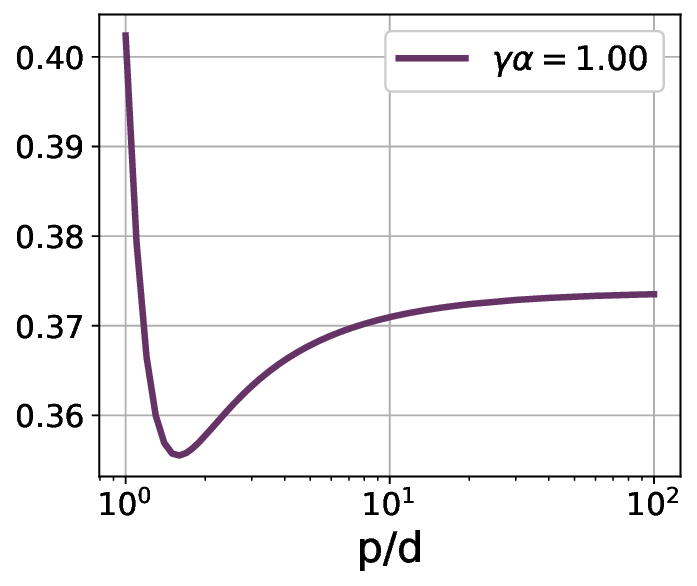}
    \end{subfigure}
    \begin{subfigure}{0.245\textwidth}
    \includegraphics[width=\textwidth]{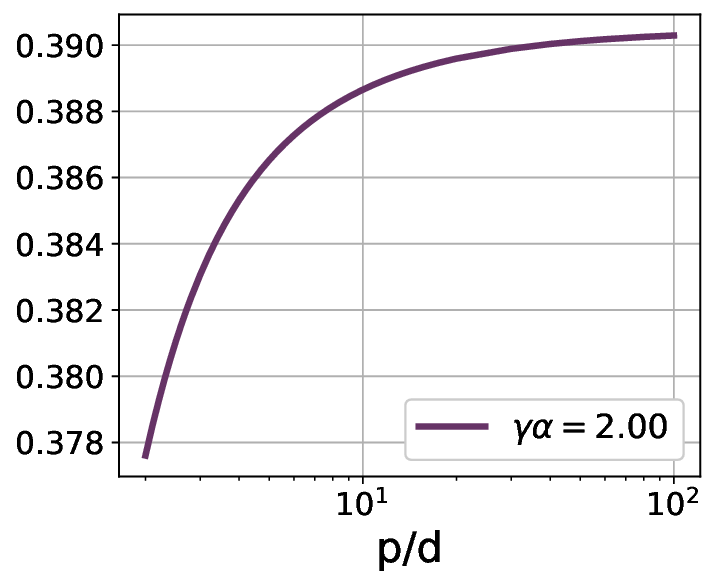}
    \end{subfigure}
    \begin{subfigure}{0.245\textwidth}
    \includegraphics[width=\textwidth]{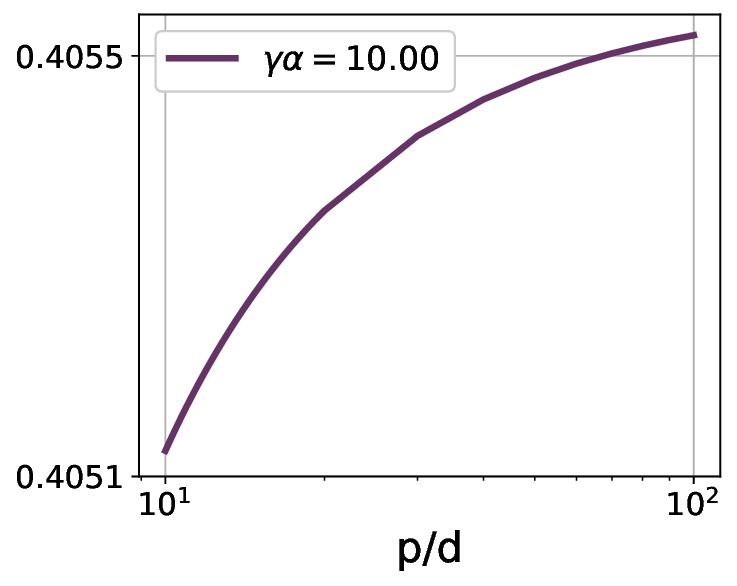}
    \end{subfigure}
    \caption{Risk curve with fixed $\gamma\alpha$. \textbf{Top:} $\kappa=0$, the test loss decreases with width. \textbf{Bottom:} $\kappa=0.64$, the test is either U-shaped or increasing.  }
    \label{fig:repro}
\end{figure}
Our Theorem \ref{theo:pruning} supports the empirical observation in \cite{golubeva2020wider} that increasing width while fixing the number of parameters (by reducing density) improves generalization. This distinguishes the impact of width from the effect of increasing model capacity. In our experiments, we fix $\gamma \alpha$ and plot the risk curve with varying $\gamma$ (subject to $\alpha \leq 1$) in Figure \ref{fig:repro}. For $\kappa=0$, the test loss decreases with width. However, in the presence of large noise, the curve's shape can be altered. For $\kappa=0.64$, the test loss exhibits either a U-shaped curve or an increasing trend.

\looseness=-1

\section{Additional Results for Random Feature Ridge Regression with Different $n/d$ Ratios}\label{apdx: numerical}

Figure \ref{fig: numerical_vartotal_lmbd_0.2} shows the shape of total variance when $\lambda=0.2$. Figures \ref{fig: numerical_varnoise_lmbd_0.2} and \ref{fig: numerical_varnoise_lmbd_0.01} show the shape of $\frac{1}{\sigma^2}\varnoise$ with $\lambda=0.2$ and $0.01$, respectively.

\begin{figure}[!t]
    \centering
\includegraphics[width=.19\textwidth]{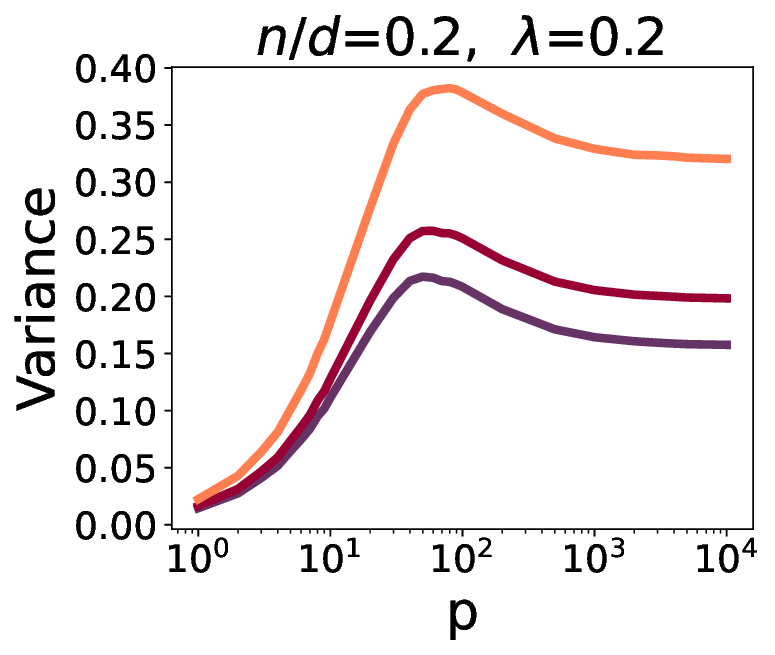}
\includegraphics[width=.19\textwidth]{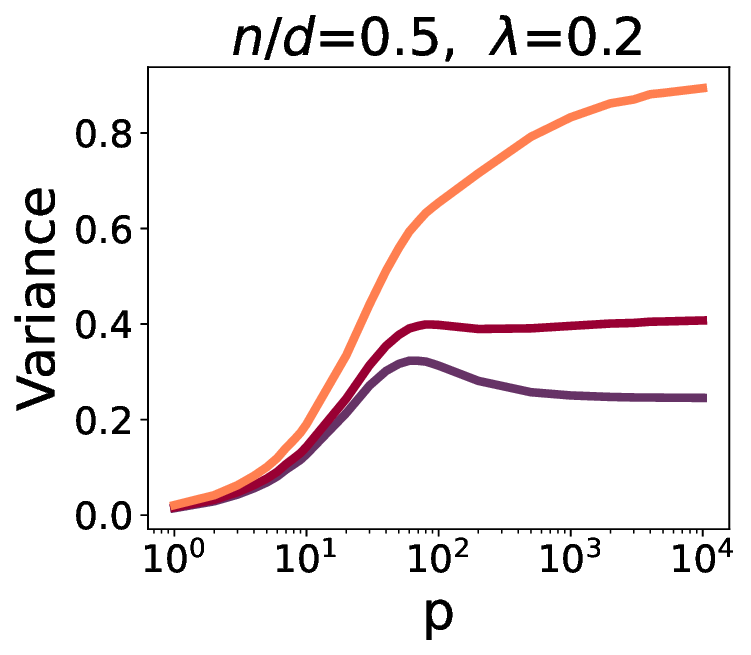}
\includegraphics[width=.19\textwidth]{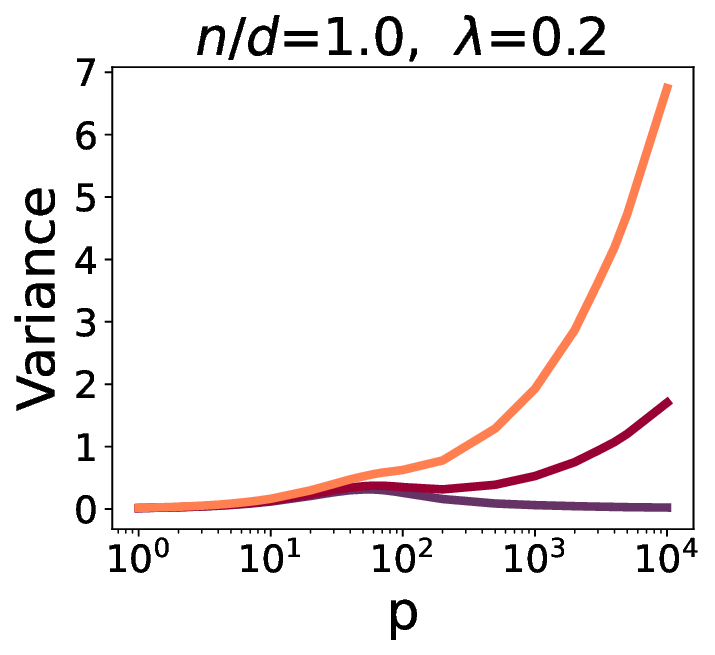}
\includegraphics[width=.19\textwidth]{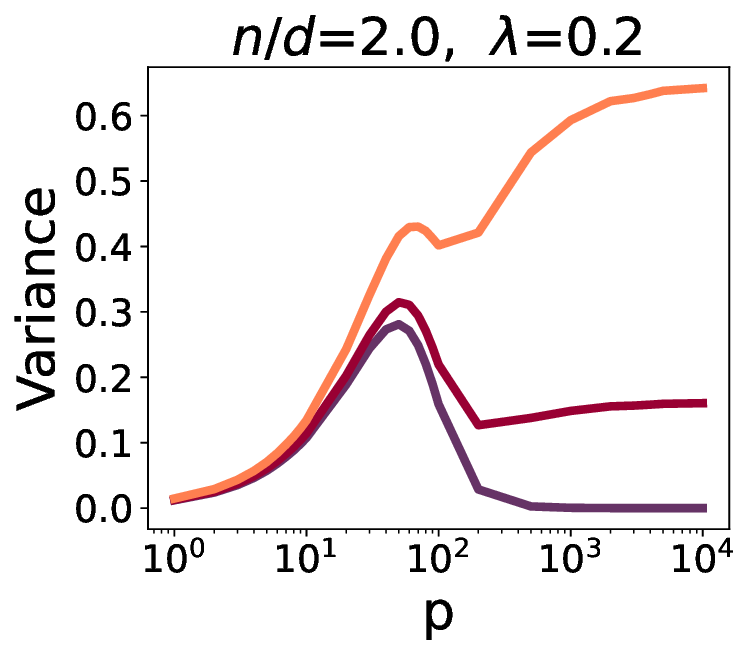}
\includegraphics[width=.19\textwidth]{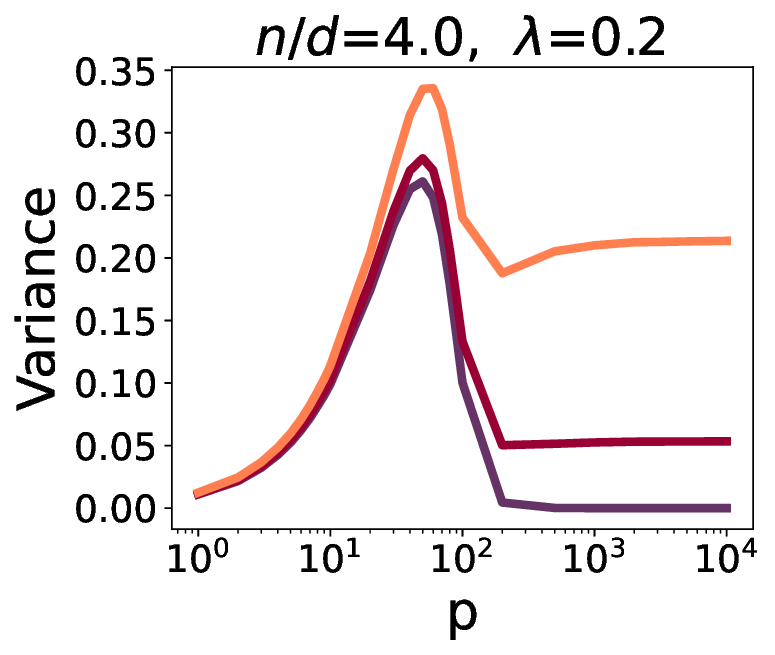}
    \caption{We plot the total variance in random feature ridge regression while fixing $d=100$ and $\lambda=0.2$, and varying $\sigma^2$. Legends show the values of $\sigma^2$, and titles show the values of $n/d$.}
    \label{fig: numerical_vartotal_lmbd_0.2}
\end{figure}

\begin{figure}
    \centering
\includegraphics[width=.19\textwidth]{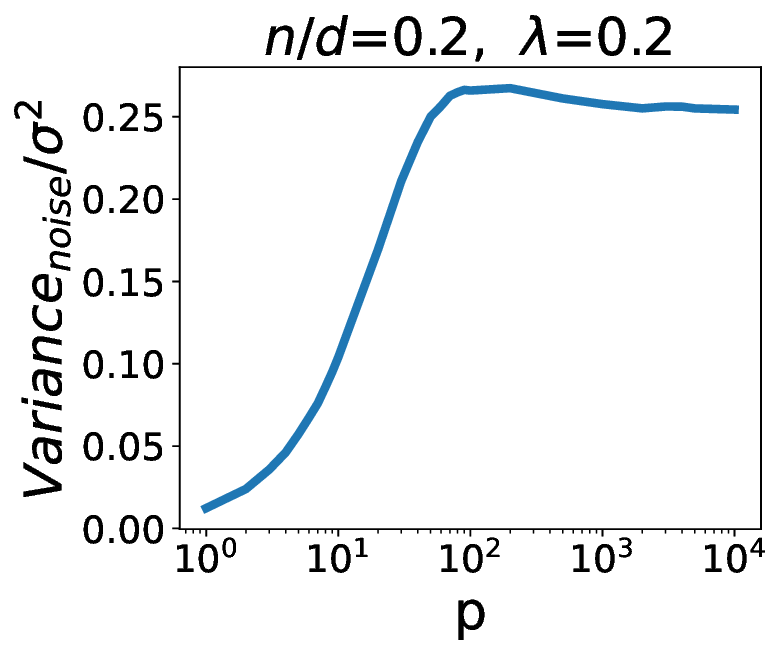}
\includegraphics[width=.19\textwidth]{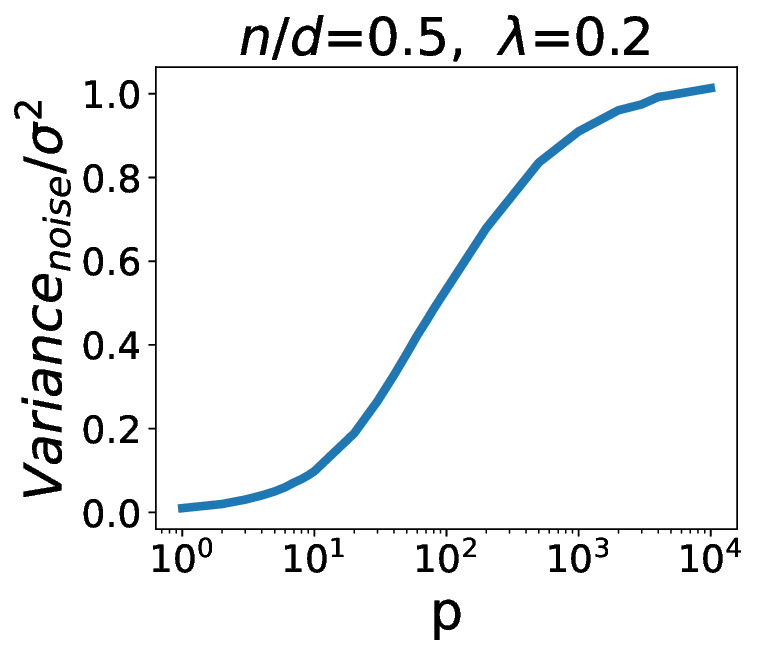}
\includegraphics[width=.19\textwidth]{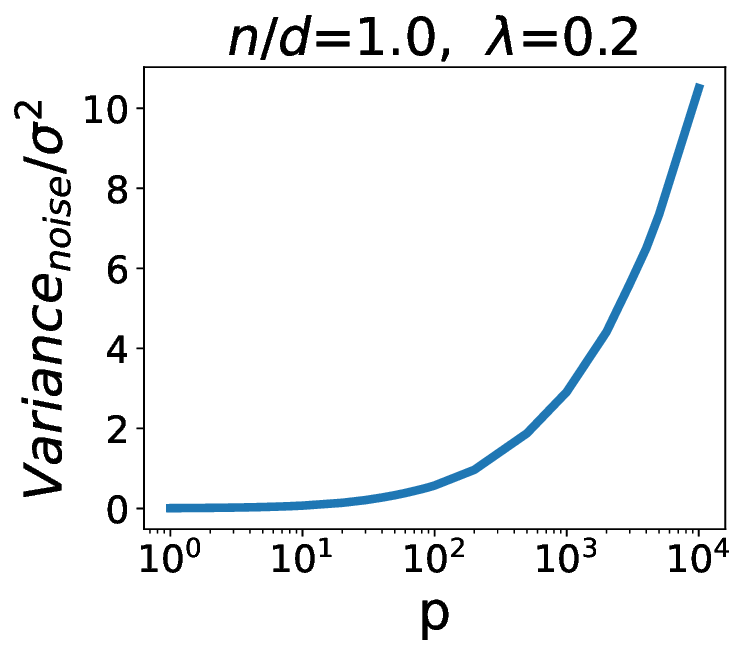}
\includegraphics[width=.19\textwidth]{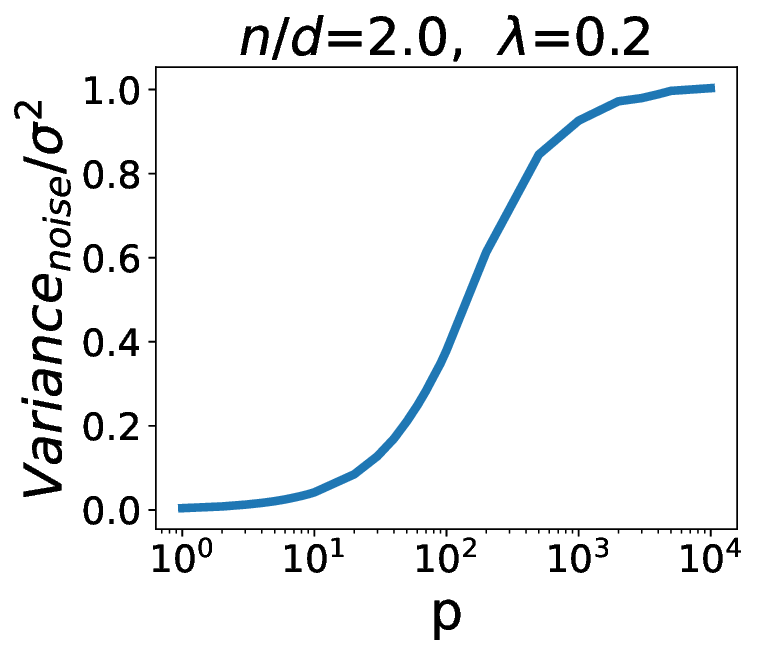}
\includegraphics[width=.19\textwidth]{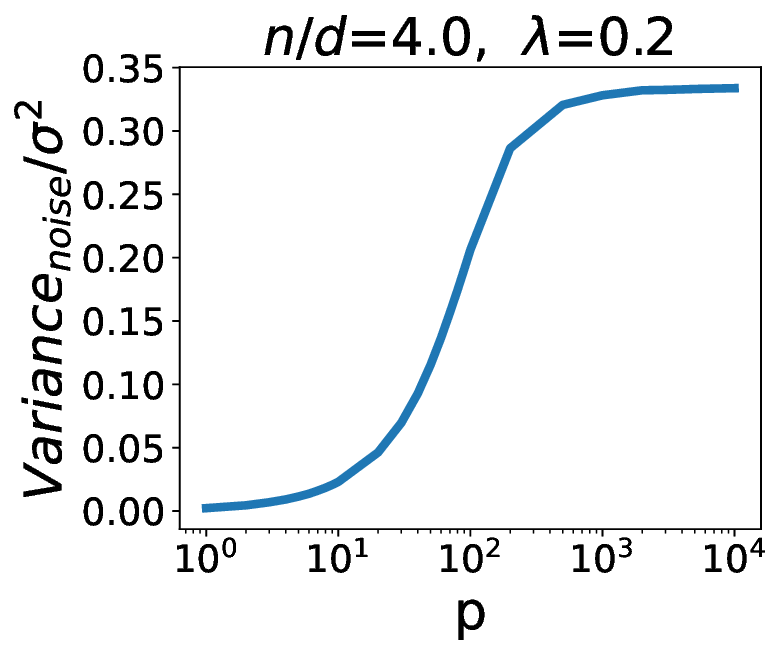}
    \caption{$\frac{1}{\sigma^2}\varnoise$ under different values of $n/d$ with $\lambda=0.2$.}
    \label{fig: numerical_varnoise_lmbd_0.2}
\end{figure}

\begin{figure}[!t]
    \centering
\includegraphics[width=.19\textwidth]{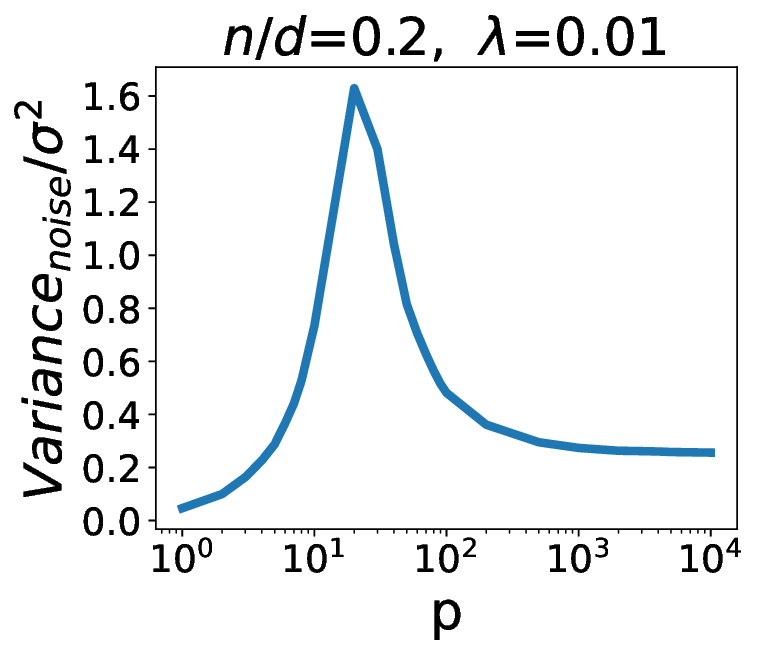}
\includegraphics[width=.19\textwidth]{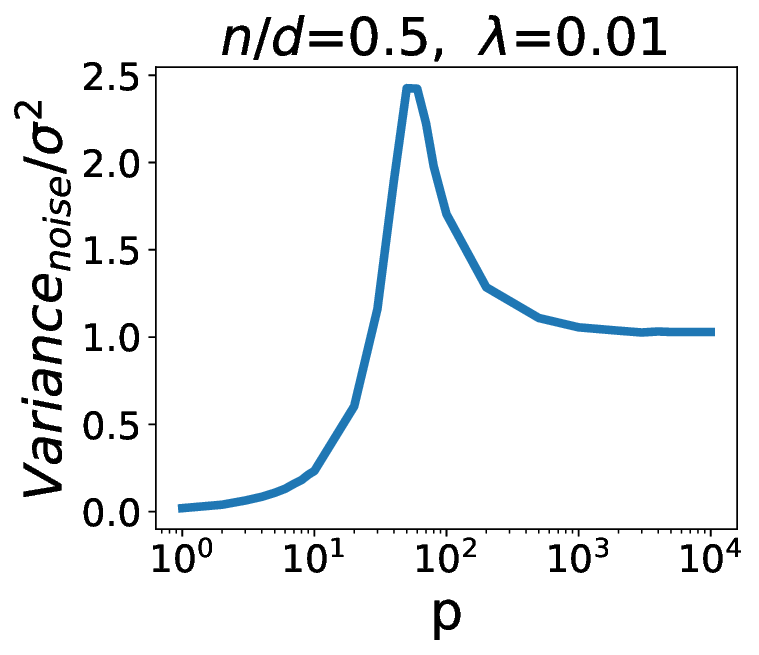}
\includegraphics[width=.19\textwidth]{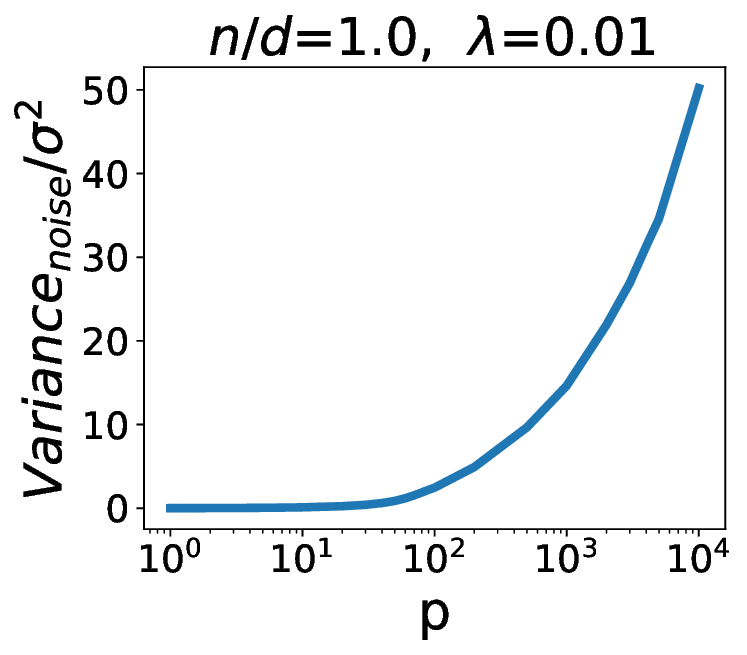}
\includegraphics[width=.19\textwidth]{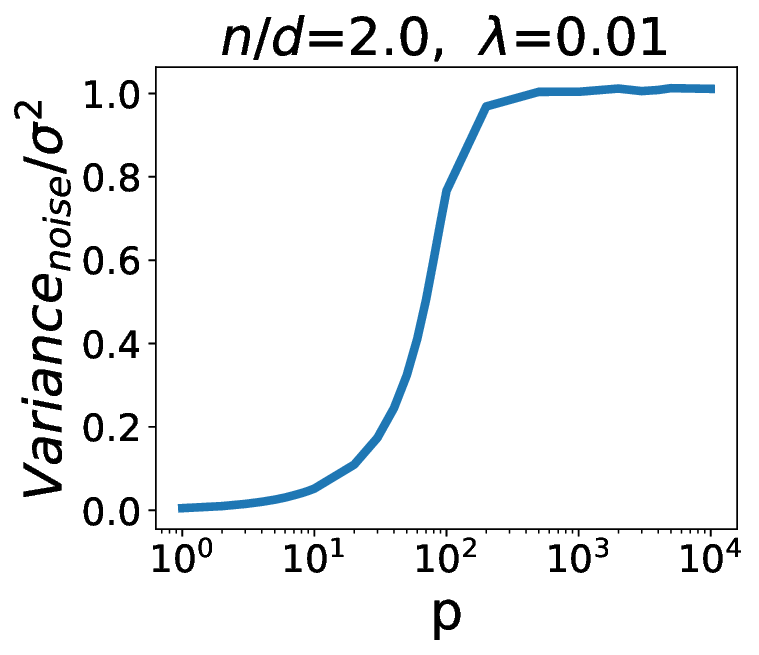}
\includegraphics[width=.19\textwidth]{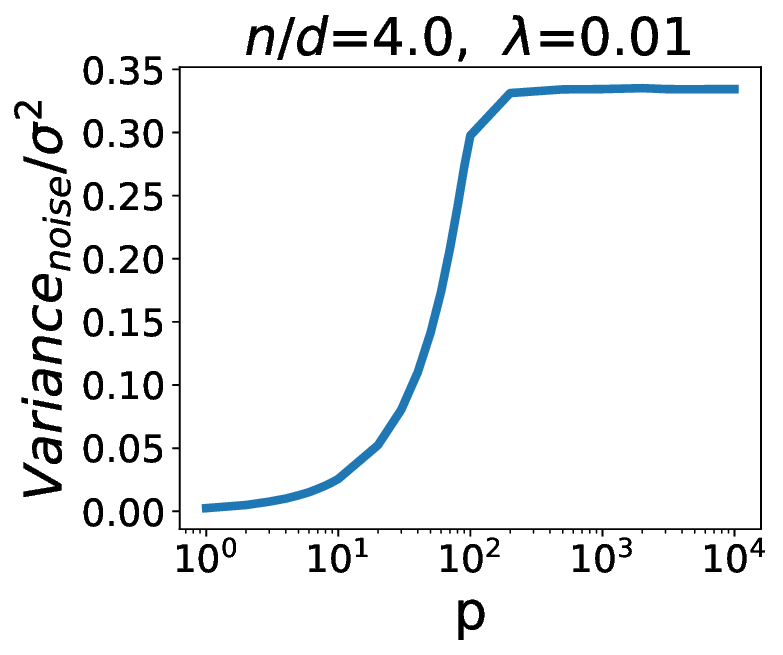}
    \caption{$\frac{1}{\sigma^2}\varnoise$ under different values of $n/d$ with $\lambda=0.01$.}
    \label{fig: numerical_varnoise_lmbd_0.01}
\end{figure}

\section{Experimental Setting Details for Neural Networks}\label{apdx:details_exp}
All experiments are implemented using PyTorch. We use eight Nvidia A40 to run the experiments. 

\subsection{Noise Types}\label{apdx:noise_gen}
\paragraph{Symmetric noise} Symmetric noise is generated by randomly shuffling the labels of certain fraction of examples.
\paragraph{Asymmetric noise} Asymmetric noise is class-dependent. We follow the scheme proposed in \cite{patrini2017making} which is also widely used in robust method papers (e.g., ELR \cite{liu2020early} and DivideMix \cite{li2020dividemix}). For CIFAR-10, labels are randomly flipped according to the following map: TRUCK$\rightarrow$AUTOMOBILE, BIRD$\rightarrow$AIRPLANE, DEER$\rightarrow$HORSE, CAT$\rightarrow$DOG, DOG$\rightarrow$CAT. For CIFAR-100, since the 100 classes are grouped into 20 superclasses, e.g. AQUATIC MAMMALS contains BEAVER, DOLPHIN, OTTER, SEAL and WHALE, we flip labels of each class into the next one circularly within super-classes. For both datasets, the fraction of mislabeled examples in the training set is the noise level.
\paragraph{Web Noise} Red Stanford Car \cite{jiang2020beyond} contains images crawled from web, with label noise introduced through text-to-image and image-to-image search. There are 10 different noise levels \{0\%, 5\% 10\%, 15\%, 20\%, 30\%, 40\%, 60\%, 80\%\} for this dataset and we choose 40\% and 80\%.
For each noise level, the mislabeled web images can only be downloaded from the provided URLs. The training splits and labels are in provided files \footnote{See their webpage for details \url{https://google.github.io/controlled-noisy-web-labels/download.html}}. The original dataset size is 8144. For 80\% noise, there are 6469 web images. However, 590 of the URL links are not functional and among the downloaded JPG files 1871 are corrupted/unopenable, hence we end up with 1675 clean examples and 4008 noisy examples, i.e., the actual noise level is 70.53\%.  For 40\% noise, there are 3241 web images with 313 non-downloadable and 963 not unopenable. Therefore the actual noise level is 29.03\%.

\subsection{Empirically Measuring Bias and Variance}\label{apdx: measure_variance}

To empirically examine the transition in variance shape as suggested by Theorem \ref{theo:vnoise}, we adopt the unbiased estimator proposed in \cite{yang2020rethinking}. Specifically, we randomly divide the training set for each model into $N$ subsets (with $N=20$ for MNIST and $N=5$ for CIFAR-10). For each subset, we train a separate model and compute the variance of the network output across these subsets. We use the mean squared error (MSE) loss on both datasets, as the bias-variance decomposition is only well-defined for MSE (refer to Eq. \ref{eq:decomposition_mse}). While \cite{yang2020rethinking} also proposed an estimator for cross-entropy loss, it is biased and may introduce skewed results.\looseness=-1

\subsection{Training Details for Sections \ref{subsec:exp_width_nn} and \ref{sec:exp_density}}

We see weight decay and $\ell_2$ regularization as equivalent terms. Thus, when we specify $\lambda=0.001$, it indicates that we employ a weight decay value of 0.001.  The width of a two-layer network is controlled by the number of hidden neurons. The width of a ResNet is controlled by the number of convolutional layer filters: for width $w$, there are $w$, $2w$, $4w$, $8w$ filters in each layer of the four Residual Blocks, respectively. When reducing the density of a model, we randomly select a certain fraction of its weights and then keep them zero throughout the training.

\textbf{MNIST and CIFAR-10 with MSE loss and Symmetric noise.} We train two-layer ReLU networks on MNIST and ResNet34 on CIFAR-10. On MNIST, we train each model for 200 epochs using SGD with batch size 64, momentum 0.9, initial learning rate 0.1, learning rate decay of 0.1 every 50 epochs. On CIFAR-10 we train each model for 1000 epochs using SGD with batch size 128, momentum 0.9, initial learning rate 0.1, learning rate decay of 0.1 every 400 epochs.

\textbf{CIFAR-10/100 with CE loss and Asymmetric/Symmetric noise} On both datasets, we train ResNet34 for 500 epochs using SGD with batch size 128, momentum 0.9, initial learning rate 0.1, learning rate decay of 0.1 every 100 epochs. We train ViT for 200 epochs using Adam with batch size 512, weight decay 0.001, and learning rate 0.0001.

\textbf{InceptionResNet-v2 on Red Stanford Car} We train each model for 160 epochs with an initial learning rate of 0.1, and a weight decay of $1 \times 10^{-5}$ using SGD and momentum of 0.9 with a batch size of 32. We anneal the learning rate by a factor of 10 at epochs 80 and 120, respectively. We use Cross Entropy loss.\looseness=-1

\subsection{Training Details for Section \ref{sec:robust_methods}}\label{apdx:robust_alg_setting}
ELR leverages the early learning phenomenon where the network fits clean examples first and then mislabeled examples. It hinders learning wrong labels by regularizing the loss with a term that encourages the alignment between the model's prediction and the running average of the predictions in previous rounds. DivideMix dynamically discards labels that are highly likely to be noisy and trains the model in a semi-supervised manner. For both ELR and DivideMix we use the same setup as in the original papers \cite{liu2020early,li2020dividemix}.

{\bf ELR} We train ResNet-34 using SGD with momentum 0.9, a weight decay of 0.001, and a batch size of 128. The network is trained for 120 epochs on CIFAR-10 and 150 epochs on CIFAR-100. The learning rate is 0.02 at initialization, and decayed by a factor of 100 at epochs 40 and 80 for CIFAR-10 and at epochs 80 and 120 for CIFAR-100.  We use $0.7$ for the temporal ensembling parameter, and $3$ for the ELR regularization coefficient.

{\bf DivideMix} We train ResNet-34 for 200 epochs using SGD with batch size $64$, momentum $0.9$, initial learning rate $0.02$, a weight decay of $5 \times 10^{-4}$, and a learning rate decay of $0.1$ at epoch 150.  We use $150$ for the unsupervised loss weight.

\section{Additional Experimental Results for Section \ref{sec: exp}}\label{apdx:additional_plots}

\subsection{Results for ViT}\label{apdx: vit}
Figure \ref{fig:vit} showcases the results of training ViT on CIFAR-10 across varying sample sizes. It is observed that ViT exhibits final ascent and the phenomenon is mitigated by increasing the sample size. These findings are consistent with observations in other scenarios discussed in Section \ref{subsec:exp_width_nn}.

\begin{figure}
    \centering
    \includegraphics[width=.4\textwidth]{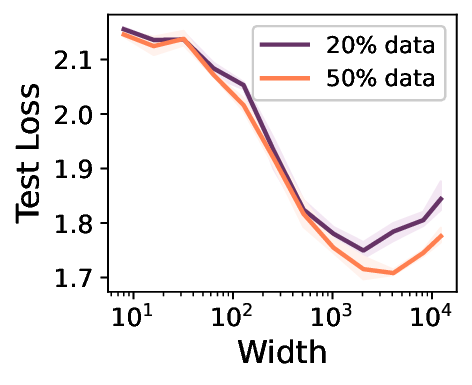}
    \caption{Test loss vs. width for ViT trained on CIFAR-10, with different sample sizes. We observe that the final ascent occurs for ViT, and that a larger sample size mitigates this final ascent. This aligns with our observations in other settings in Section \ref{subsec:exp_width_nn}. }
    \label{fig:vit}
\end{figure} 

\subsection{Effect of Width}\label{apdx:varying_width}

\textbf{Additional results for Figures \ref{fig:mnist_mse} and \ref{fig:cifar10_mse}.} Test accuracy, $\bias$ and $\var$ are shown in Figures \ref{fig:apdx_cifar10_wd0.0005} to \ref{fig:apdx_mnist_wd0.001}.\looseness=-1

\textbf{Effect of Sample Size.} We use MSE loss on MNIST and CE loss on CIFAR-10. We consider $\lambda=0.001$ and 60\% noise for both datasets. Plots of test accuracy are shown in Figure \ref{fig:apdx_sample_size}.\looseness=-1

\begin{figure}[!t]
    \centering
    \subfloat[Test Accuracy,  $\lambda=0.0005$] 
    {
    \centering
    \includegraphics[width=.32\textwidth]{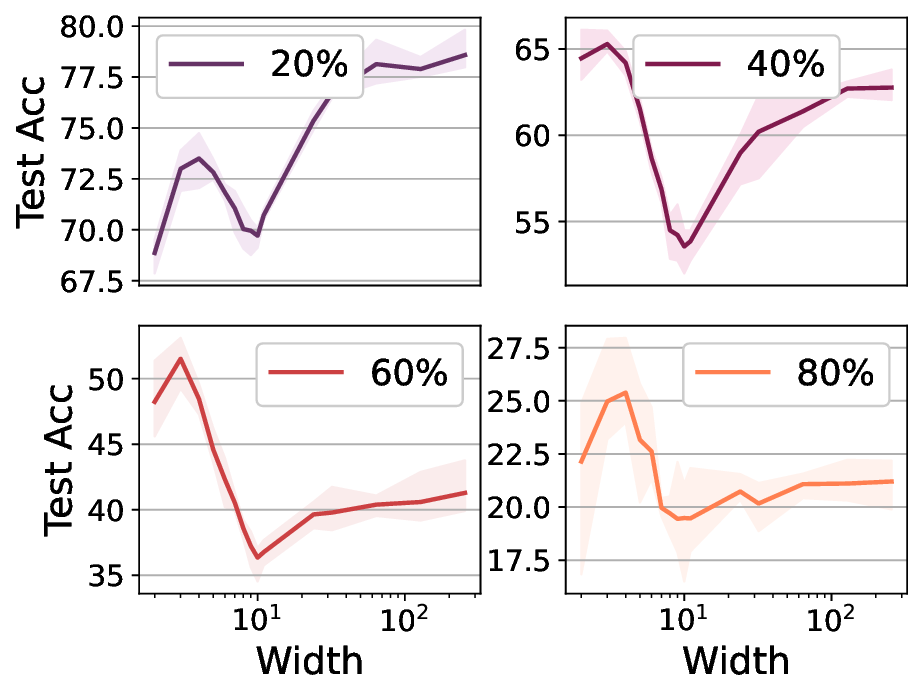}}
    \subfloat[$\bias$, $\lambda=0.0005$ ]{
    \centering
    \includegraphics[width=.32\textwidth]{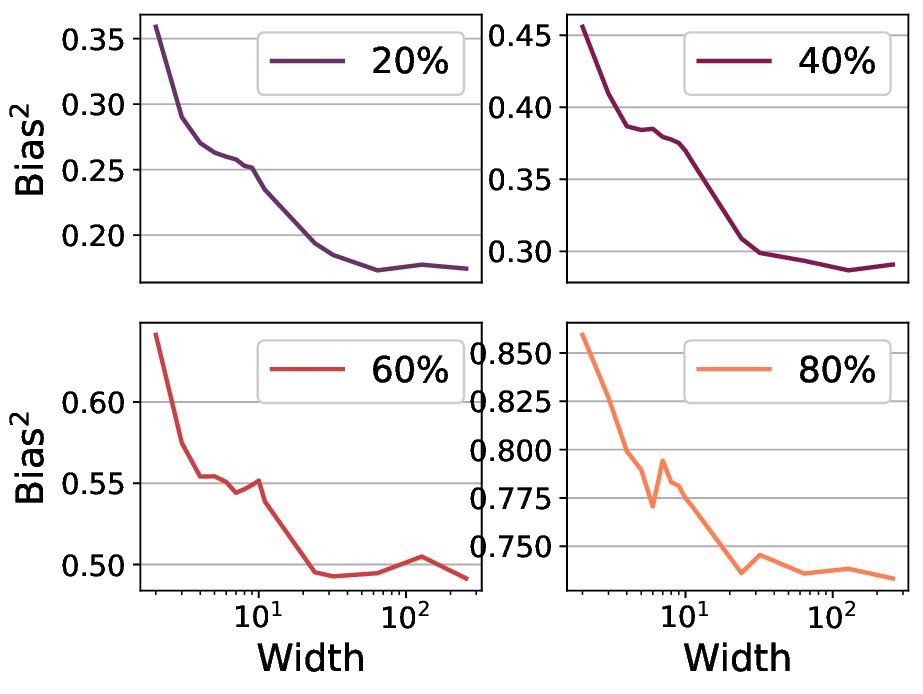}}
    \subfloat[$\var$, $\lambda=0.0005$ ]{
    \centering
    \includegraphics[width=.32\textwidth]{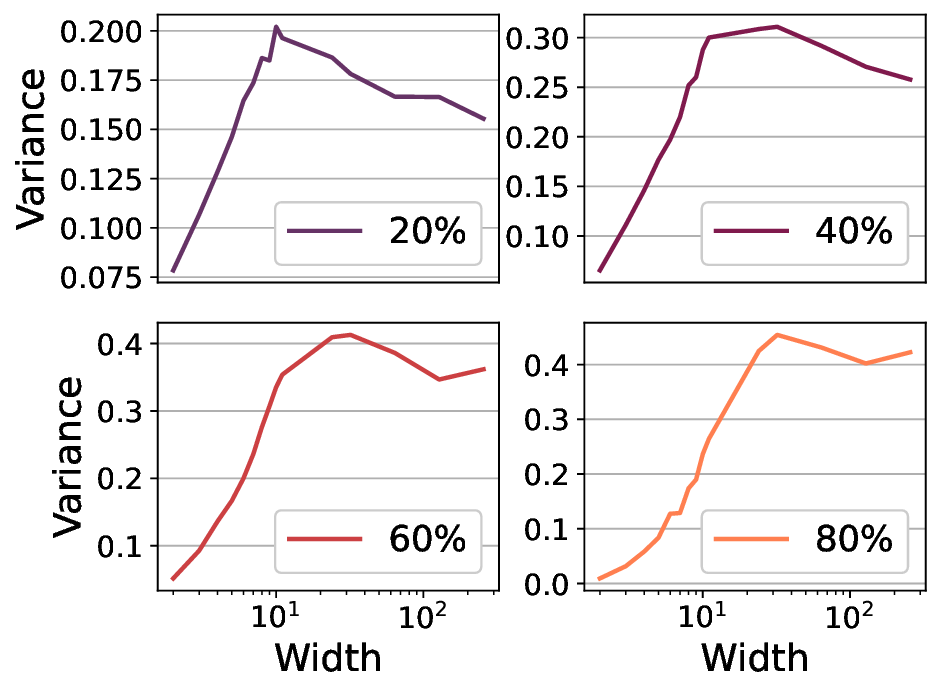}}
    \caption{CIFAR-10, MSE loss, $\lambda=0.0005$}
    \label{fig:apdx_cifar10_wd0.0005}
    \vspace{-3mm}
\end{figure}

\begin{figure}[!t]
    \centering
    \subfloat[Test Accuracy,  $\lambda=0.001$] 
    {
    \centering
    \includegraphics[width=.32\textwidth]{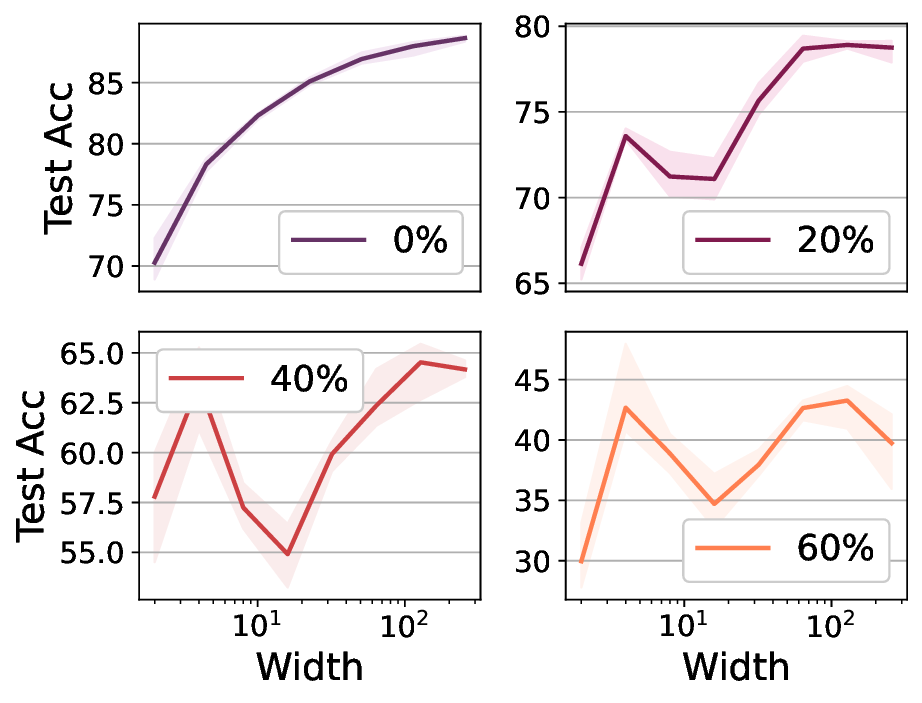}}
    \subfloat[$\bias$, $\lambda=0.001$ ]{
    \centering
    \includegraphics[width=.32\textwidth]{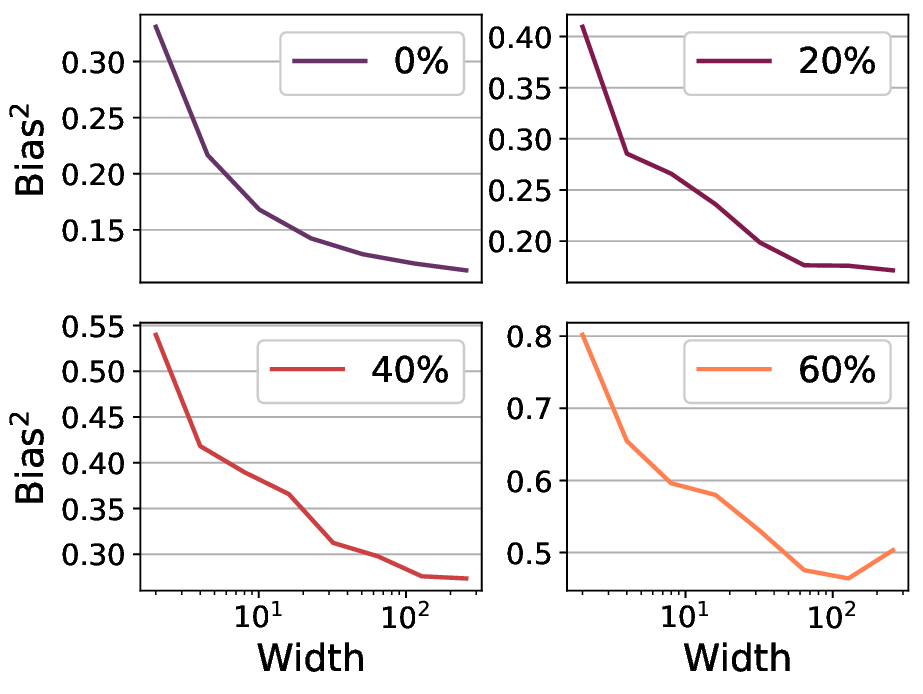}}
    \subfloat[$\var$, $\lambda=0.001$ ]{
    \centering
    \includegraphics[width=.32\textwidth]{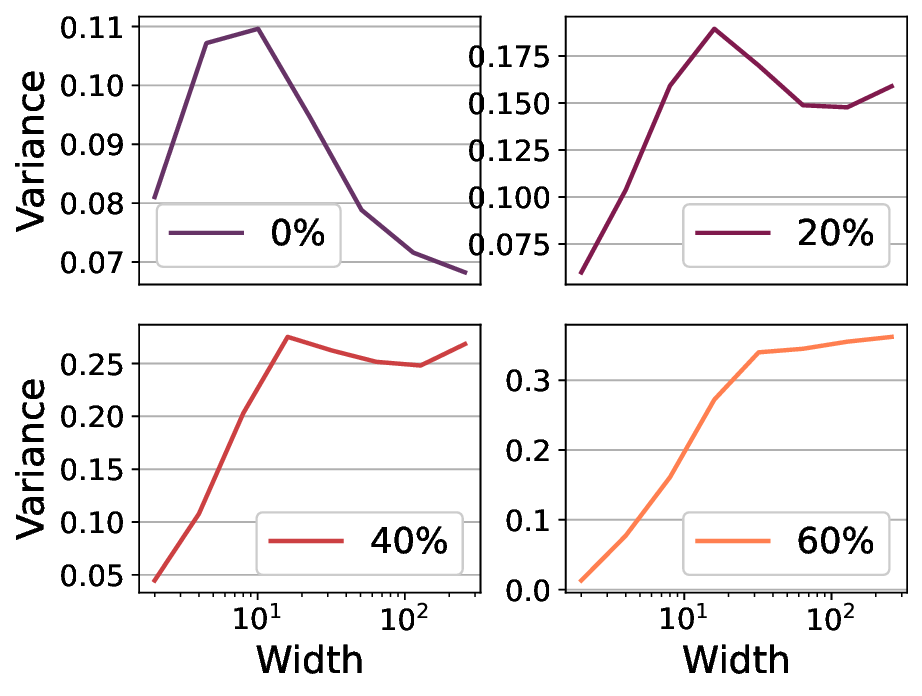}}
    \caption{CIFAR-10, MSE loss, $\lambda=0.001$}
    \label{fig:apdx_cifar10_wd0.001}
    \vspace{-3mm}
\end{figure}

\begin{figure}[!t]
    \centering
    \subfloat[Test Accuracy,  $\lambda=0$] 
    {
    \centering
    \includegraphics[width=.32\textwidth]{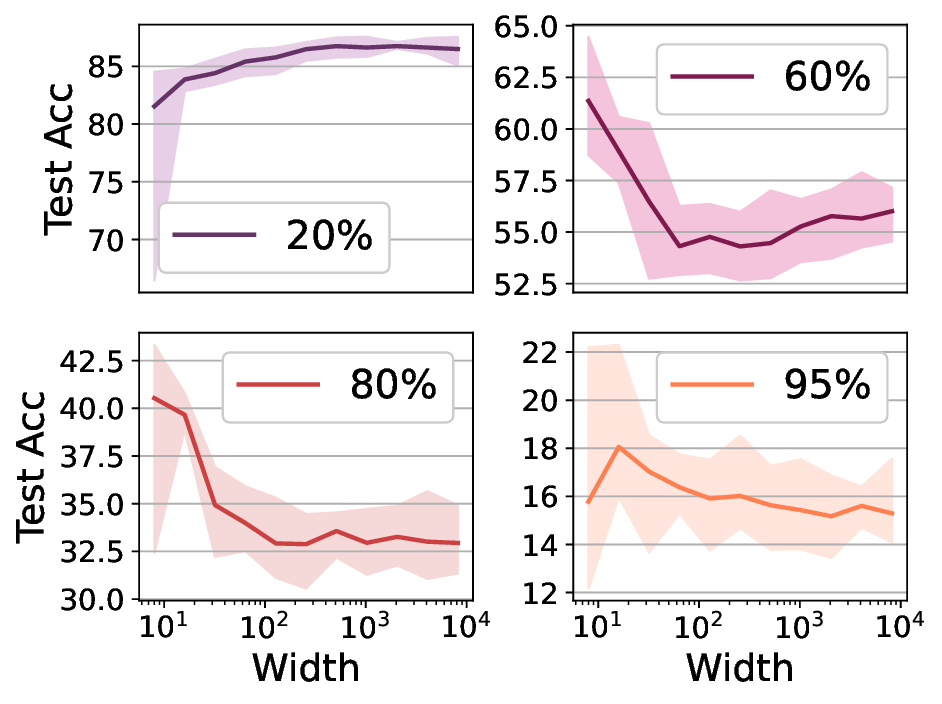}}
    \subfloat[$\bias$, $\lambda=0$ ]{
    \centering
    \includegraphics[width=.32\textwidth]{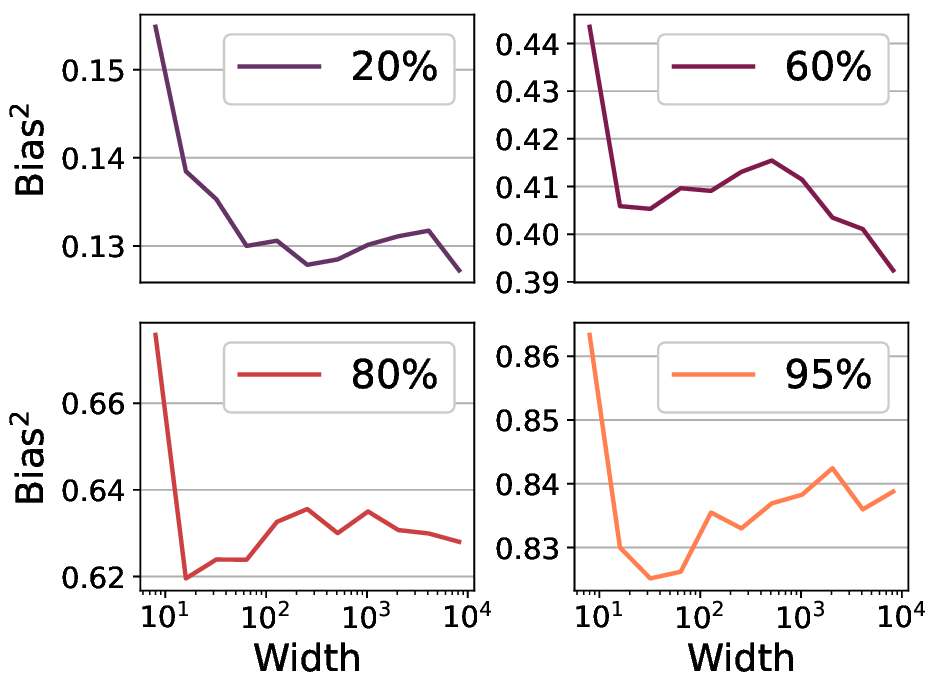}}
    \subfloat[$\var$, $\lambda=0$ ]{
    \centering
    \includegraphics[width=.32\textwidth]{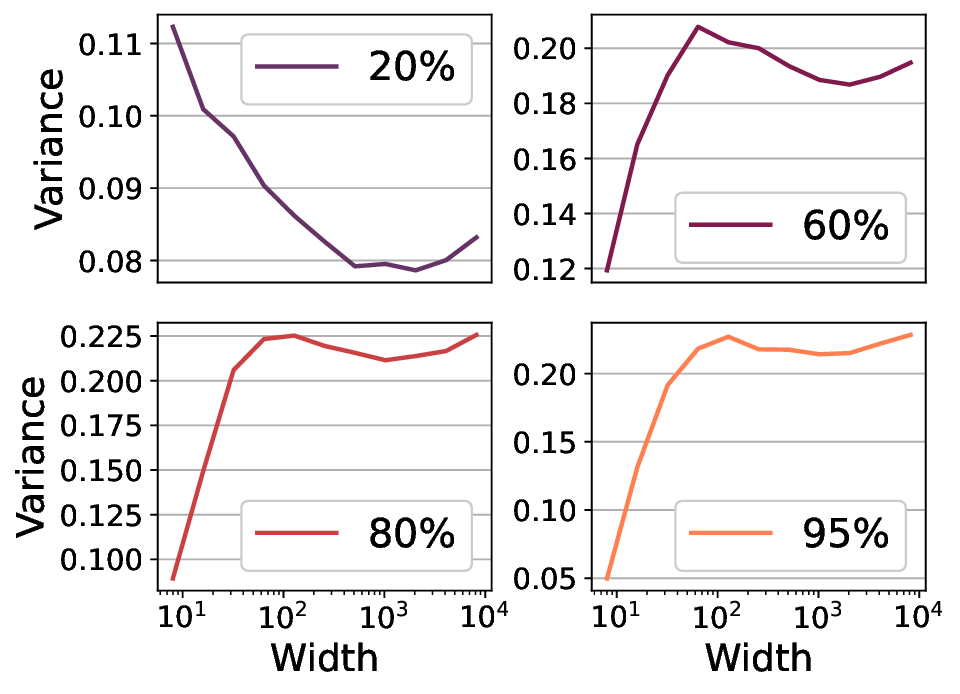}}
    \caption{MNIST, MSE loss, $\lambda=0$}
    \label{fig:apdx_mnist_wd0}
    \vspace{-3mm}
\end{figure}

\begin{figure}[!t]
    \centering
    \subfloat[Test Accuracy,  $\lambda=0.001$] 
    {
    \centering
    \includegraphics[width=.32\textwidth]{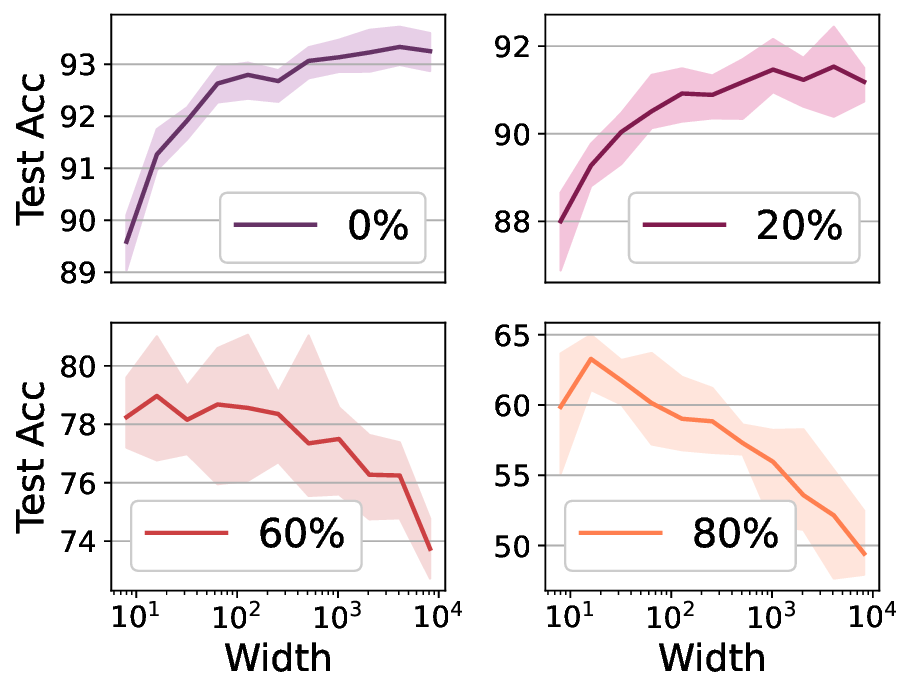}}
    \subfloat[$\bias$, $\lambda=0.001$ ]{
    \centering
    \includegraphics[width=.32\textwidth]{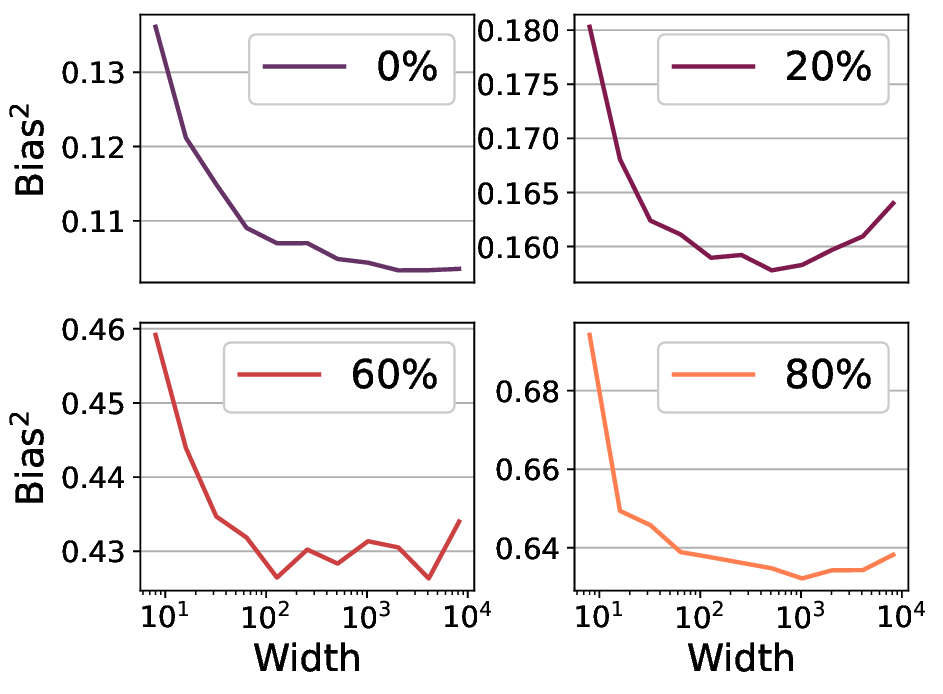}}
    \subfloat[$\var$, $\lambda=0.001$ ]{
    \centering
    \includegraphics[width=.32\textwidth]{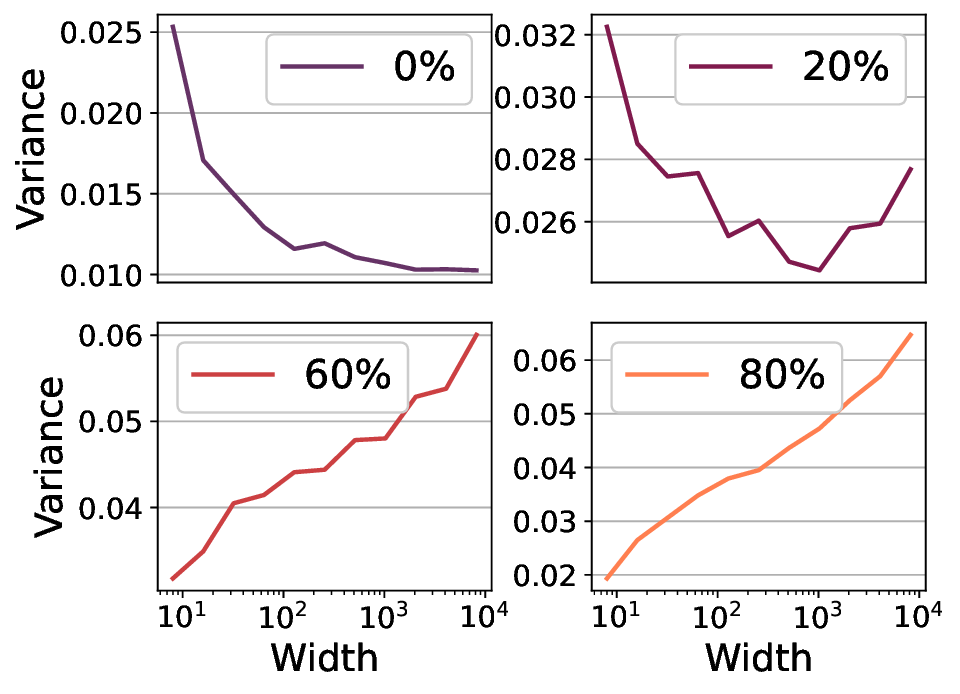}}
    \caption{MNIST, MSE loss, $\lambda=0.001$}
    \label{fig:apdx_mnist_wd0.001}
    \vspace{-1mm}
\end{figure}

\begin{figure}[!t]
    \centering
    \subfloat[MNIST with MSE loss, Test Accuracy] 
    {
    \centering
    \includegraphics[width=.47\textwidth]{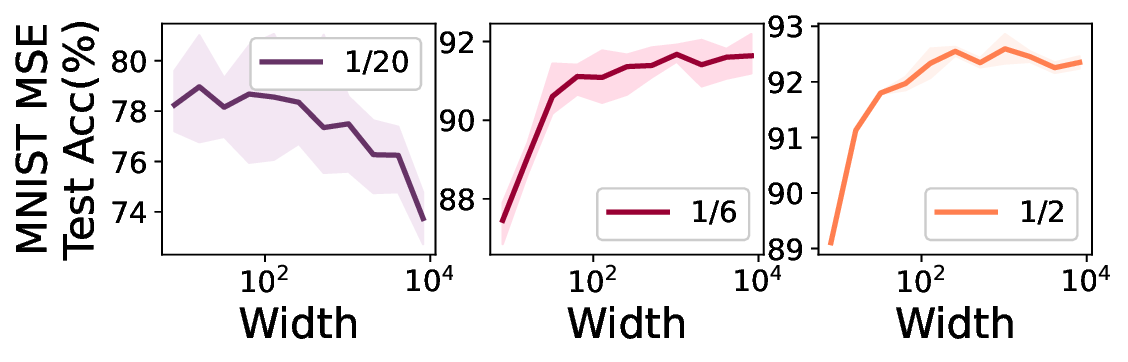}}
    \subfloat[CIFAR-10 with CE loss, Test Accuracy]{
    \centering
    \includegraphics[width=.47\textwidth]{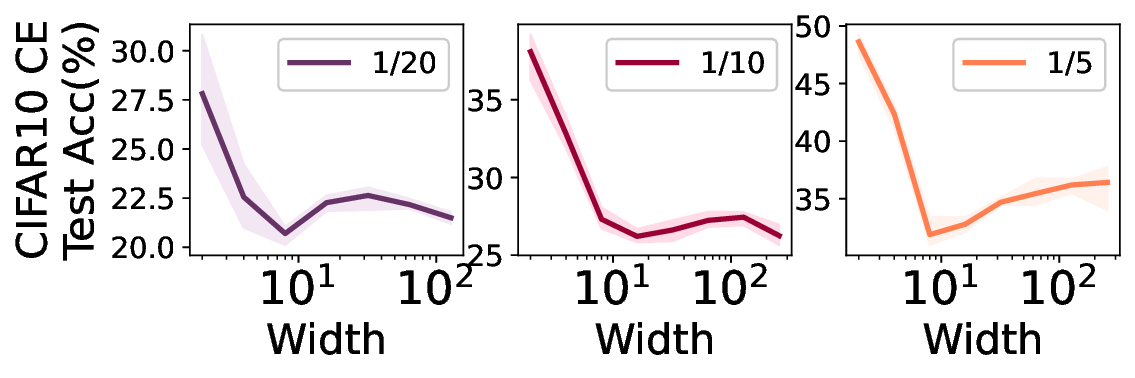}}
    \caption{We plot test accuracy against width while varying sample size. We use MSE loss with $\lambda=0.001$ on MNIST and CE loss with $\lambda=0.001$ on CIFAR-10.}
    \label{fig:apdx_sample_size}
    \vspace{-1mm}
\end{figure}

\subsection{Effect of Density}\label{apdx:density}

\textbf{Joint effect of width and density}
When studying the joint effect of width and density, we utilize MSE loss to measure bias and variance. We set $\lambda=0$ for MNIST and $\lambda=0.0005$ for CIFAR-10. The results are presented in Figures \ref{fig:heatmap_main}, \ref{fig:apdx_heatmap_var_cifar10}, and \ref{fig:heatmap_acc_bias}. In both settings, models with the smallest density achieve the highest accuracy, and models with very small density achieve the optimal loss.

\begin{figure}[t!]
    \centering
    \centering
    \includegraphics[width=.24\textwidth]{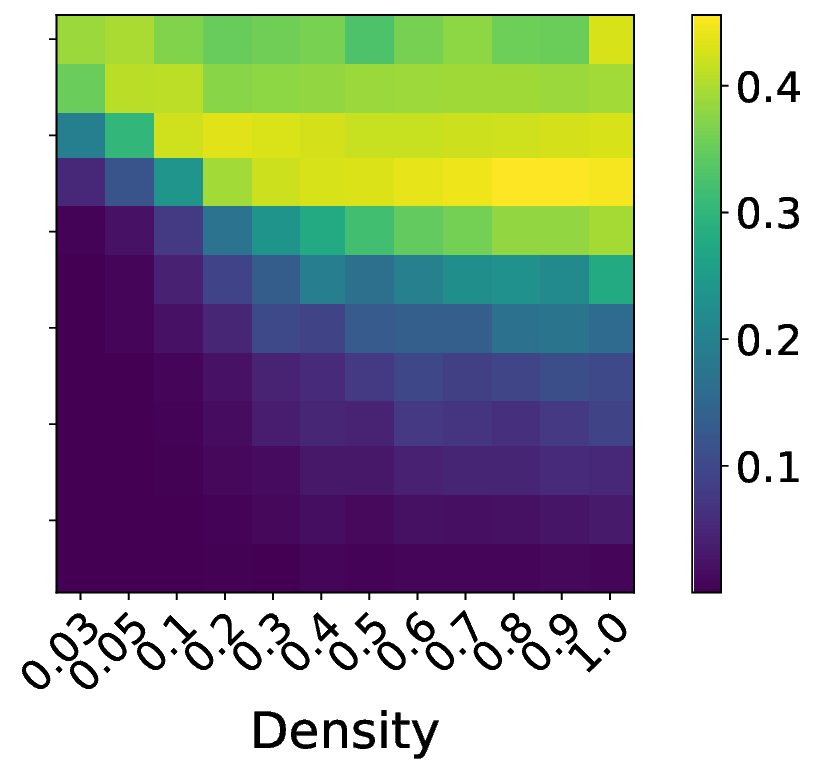}
    \caption{$\var$ under varied width and density on CIFAR-10.}
    \label{fig:apdx_heatmap_var_cifar10}
    \vspace{-.3cm}
\end{figure}

\begin{figure}[!t]
    \centering
    \begin{subfigure}[b]{0.24\textwidth}
    \centering
    \includegraphics[width=\textwidth]{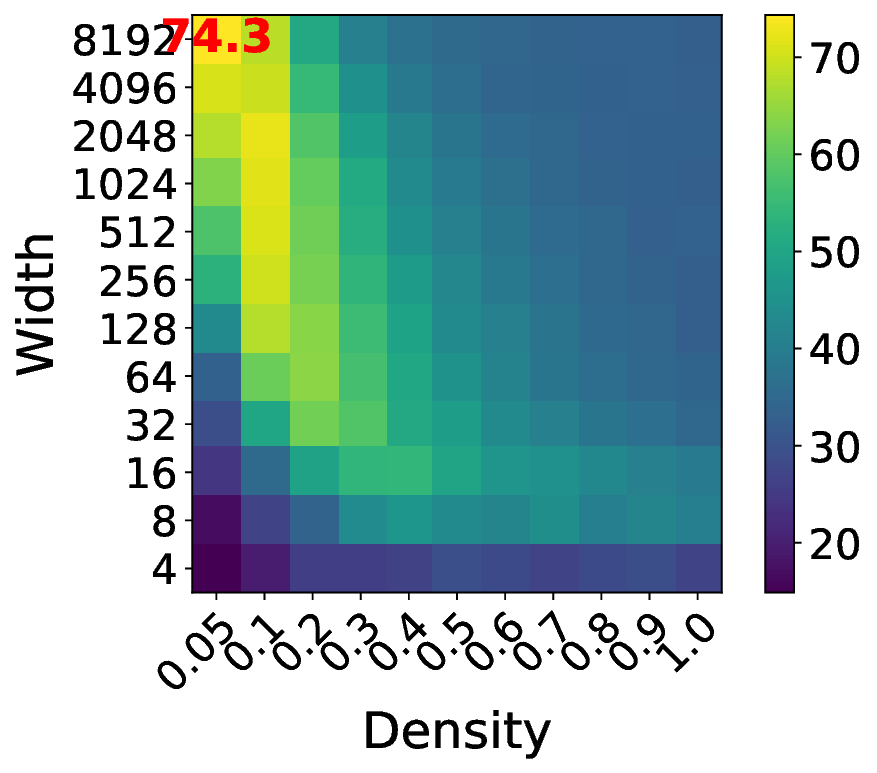}
    \caption{Test Acc. (MNIST)}
    \label{subfig:heatmap_acc_mnist}
    \end{subfigure}
    \begin{subfigure}[b]{0.25\textwidth}
    \centering
    \includegraphics[width=\textwidth]{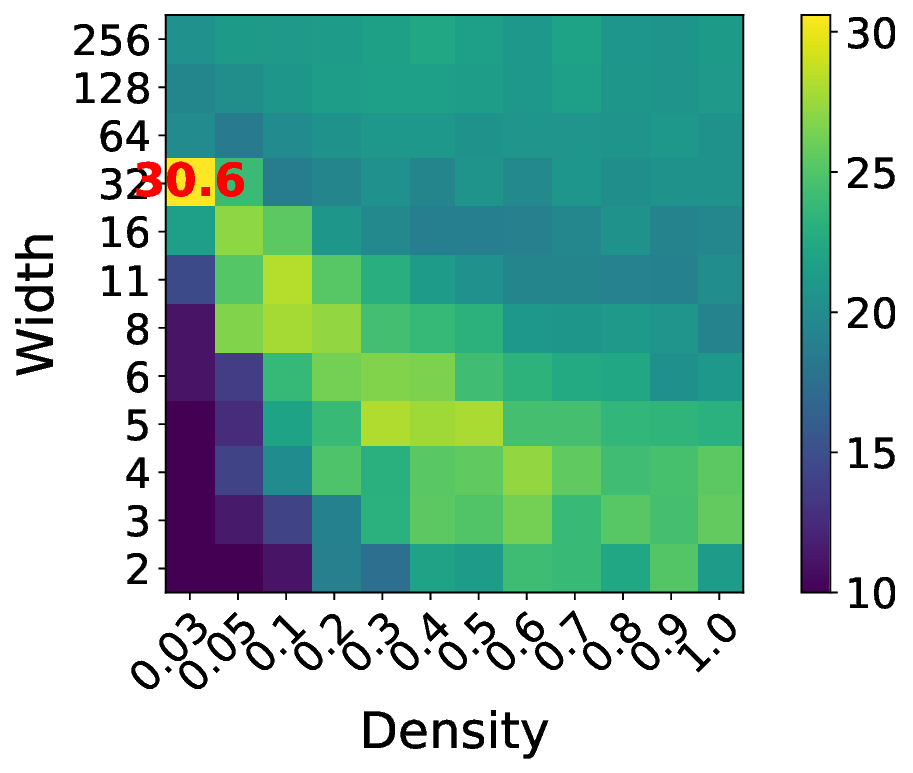}
    \caption{Test Acc. (CIFAR-10)}
    \label{subfig:heatmap_acc_cifar}
    \end{subfigure}
        \begin{subfigure}[b]{0.24\textwidth}
    \centering
    \includegraphics[width=\textwidth]{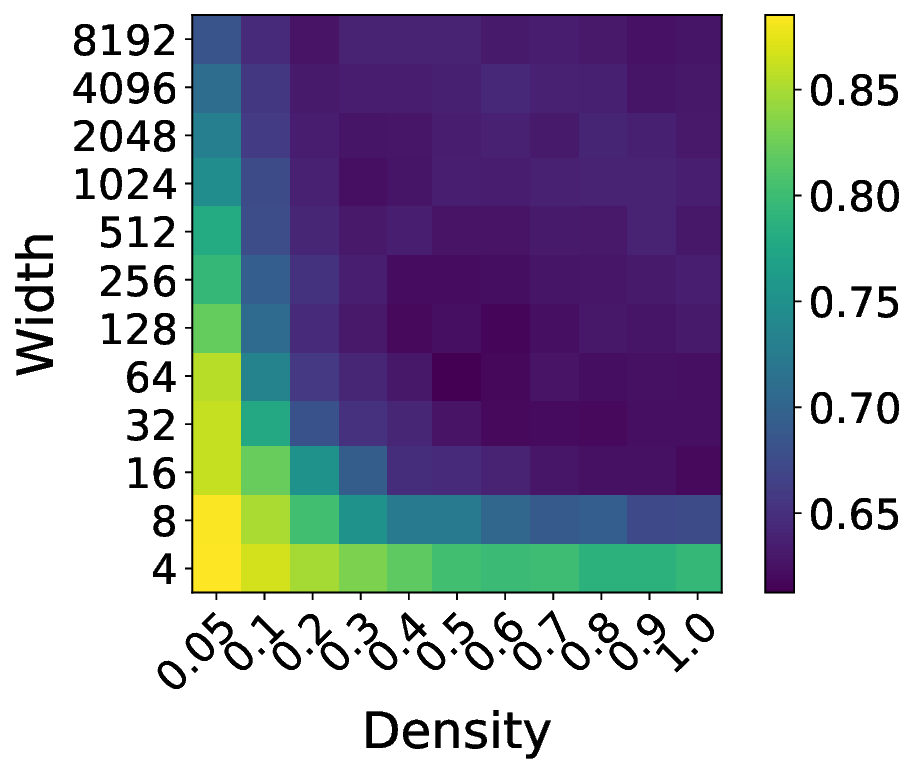}
    \caption{$\textbf{Bias}^2$ (MNIST)}
    \label{subfig:heatmap_bias_mnist}
    \end{subfigure}
    \begin{subfigure}[b]{0.25\textwidth}
    \centering
    \includegraphics[width=\textwidth]{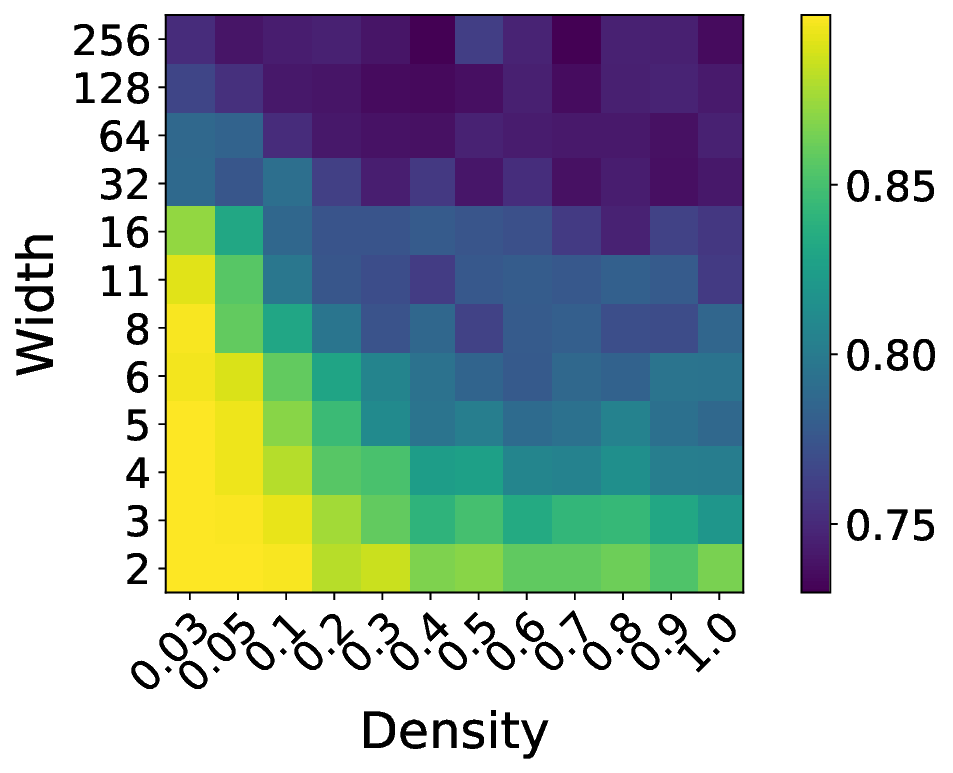}
    \caption{$\textbf{Bias}^2$ (CIFAR-10)}
    \label{subfig:heatmap_bias_cifar}
    \end{subfigure}
    \caption{Test accuracy and bias under varied width and density. Red numbers show the highest accuracy.\looseness=-1}
    \label{fig:heatmap_acc_bias}
\end{figure}

\textbf{Smaller density vs. stronger $l_2$ regularization} We train ResNet34 with a width of 16 on CIFAR-10, using 50\% symmetric noise. We compare the results obtained by varying $\lambda$ (weight decay) and by varying the model density. Figure \ref{fig:density_vs_l2} shows the test loss and \ref{fig:density_vs_l2_acc} shows the test accuracy.

\begin{figure}[!t]
    \centering
    \includegraphics[width=.25\textwidth]{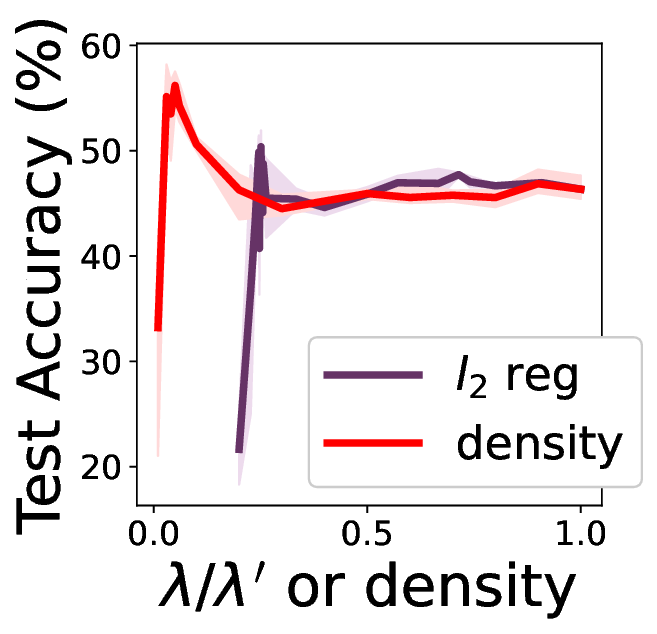}
    \caption{Test accuracy obtained by varying $\lambda$ (weight decay) or by varying model density.}
    \label{fig:density_vs_l2_acc}
\end{figure}

\subsection{When Robust Methods are Applied}\label{apdx:robust_alg}
In addition to the experiments presented in the main paper, we provide results for the following experiments conducted on CIFAR-10/100: width experiments with ELR under 80\% noise (Figure \ref{fig:elr_apdx_width}), density experiments with ELR (Figure \ref{fig:elr_apdx_density}), density experiments with DivideMix under 60\% noise (Figure \ref{fig:dividemix_apdx_density}). Furthermore, we show the results of density experiments on Red Stanford Car with 70\% noise, where InceptionResNet-v2 is trained using ELR in Figure \ref{fig:stanford_car_elr}. In the plots, the purple solid line represents test loss and the blue dashed line represents test accuracy.

\begin{figure}[!t]
\centering
\begin{subfigure}[b]{0.24\textwidth}
\centering
    \includegraphics[width=\textwidth]{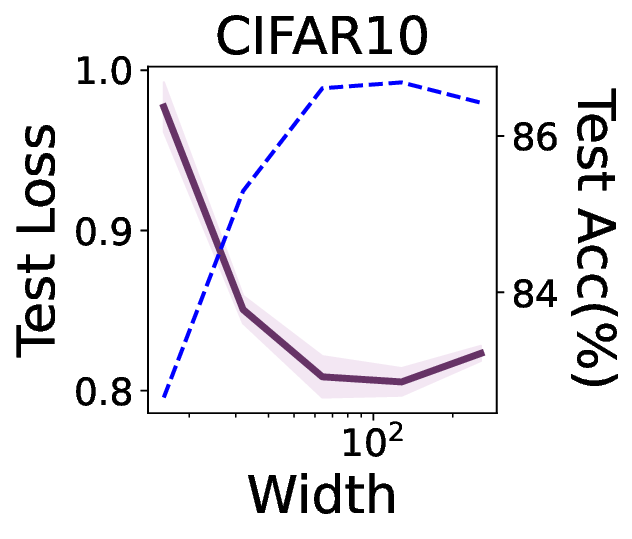}
        \vspace{-6mm}
    \caption{\small ELR, Sym, 60\%}
\end{subfigure}
\begin{subfigure}[b]{0.24\textwidth}
\centering
    \includegraphics[width=\textwidth]{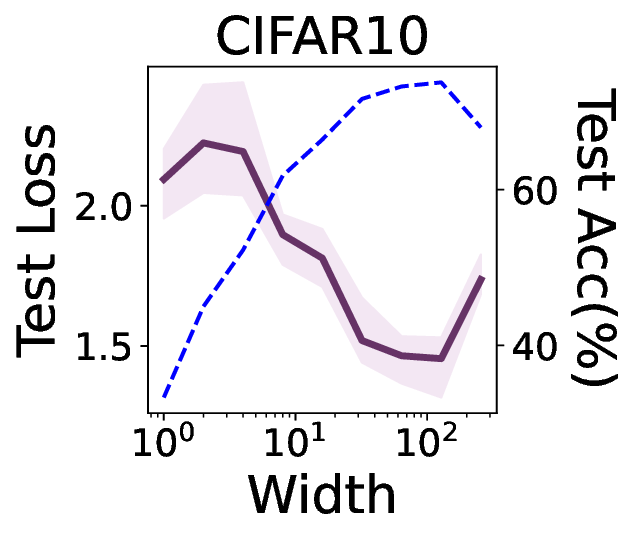}
    \vspace{-6mm}
    \caption{ELR, Sym, 80\%}
    \label{subfig:elr_apdx_width_cifar10}
\end{subfigure}
\begin{subfigure}[b]{0.25\textwidth}
\centering
    \includegraphics[width=\textwidth]{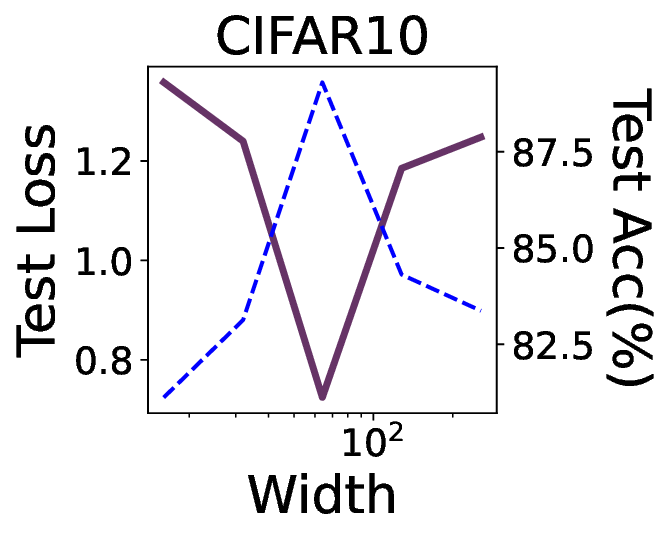}
        \vspace{-6mm}
    \caption{\small ELR, Asym, 35\%}
\end{subfigure}

\begin{subfigure}[b]{0.24\textwidth}
\centering
    \includegraphics[width=\textwidth]{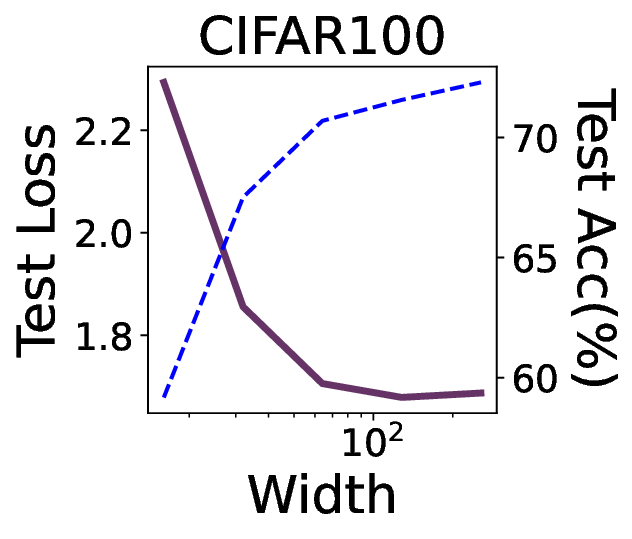}
    \vspace{-6mm}
    \caption{\small ELR, Sym, 30\%}
\end{subfigure}
\begin{subfigure}[b]{0.24\textwidth}
\centering
    \includegraphics[width=\textwidth]{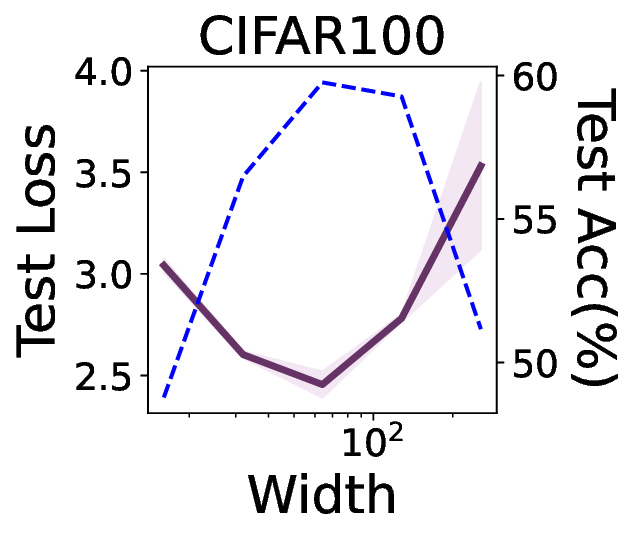}
    \vspace{-6mm}
    \caption{\small ELR, Sym, 60\%}
\end{subfigure}
\begin{subfigure}[b]{0.24\textwidth}
\centering
    \includegraphics[width=\textwidth]{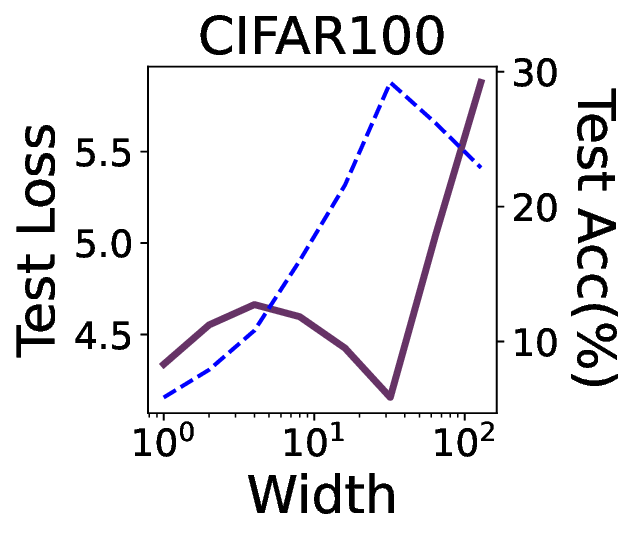}
\vspace{-6mm}
    \caption{ELR, Sym, 80\%}
    \label{subfig:elr_apdx_width_cifar100}
\end{subfigure}
\caption{Additional results for the effect of width when ELR is used. The top row displays the results on CIFAR-10, and the bottom row displays the results on CIFAR-100.}
\label{fig:elr_apdx_width}
\end{figure}

\begin{figure}[!t]
\centering
\begin{subfigure}[b]{0.3\textwidth}
\centering
    \includegraphics[width=.82\textwidth]{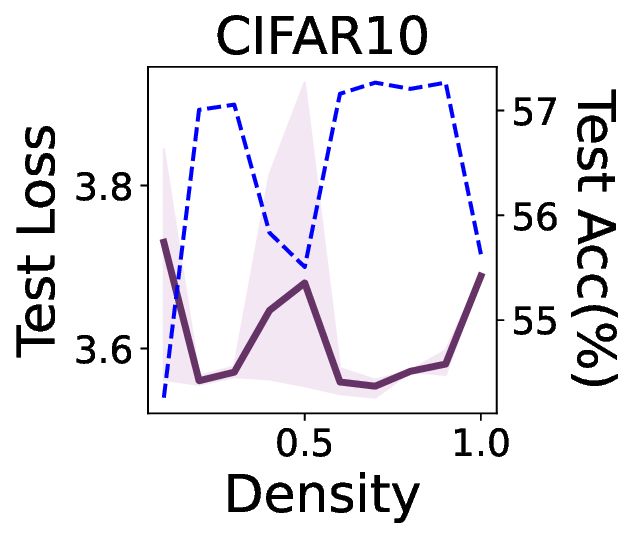}
    \caption{\small C10, Asym 30\%, width 128}
    \label{}
\end{subfigure}
\hspace{0.2cm}
\begin{subfigure}[b]{0.3\textwidth}
\centering
    \includegraphics[width=.86\textwidth]{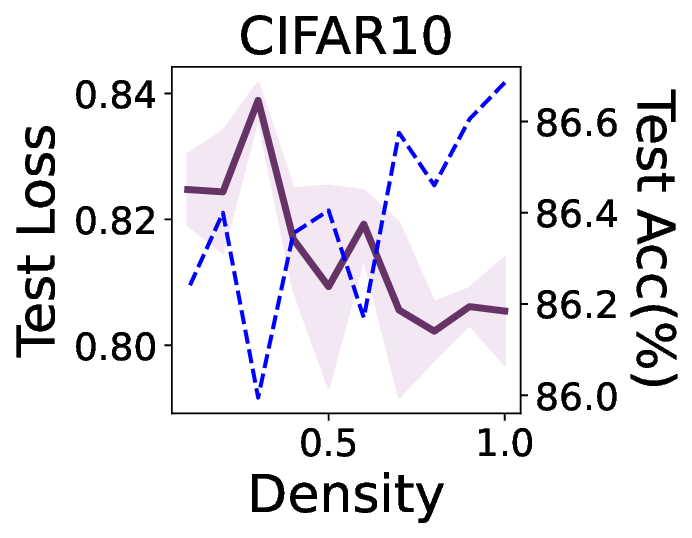}
    \caption{\small C10, Sym 60\%, width 128}
    \label{subfig:elr_apdx_cifar10_60sym_w128}
\end{subfigure}
\hspace{0.2cm}
\begin{subfigure}[b]{0.3\textwidth}
\centering
    \includegraphics[width=.86\textwidth]{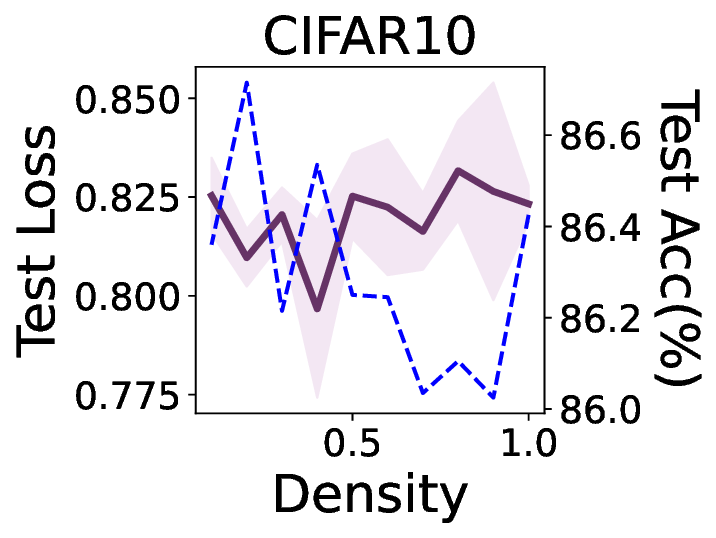}
    \caption{\small C10, Sym 60\%, width 256}
    \label{}
\end{subfigure}

\begin{subfigure}[b]{0.3\textwidth}
\centering
    \includegraphics[width=.86\textwidth]{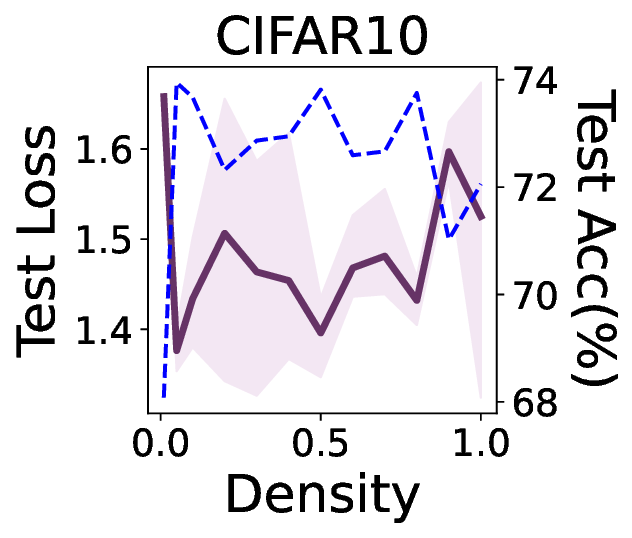}
    \caption{\small C10, Sym 80\%, width 128}
    \label{}
\end{subfigure}
\hspace{0.2cm}
\begin{subfigure}[b]{0.3\textwidth}
\centering
    \includegraphics[width=.86\textwidth]{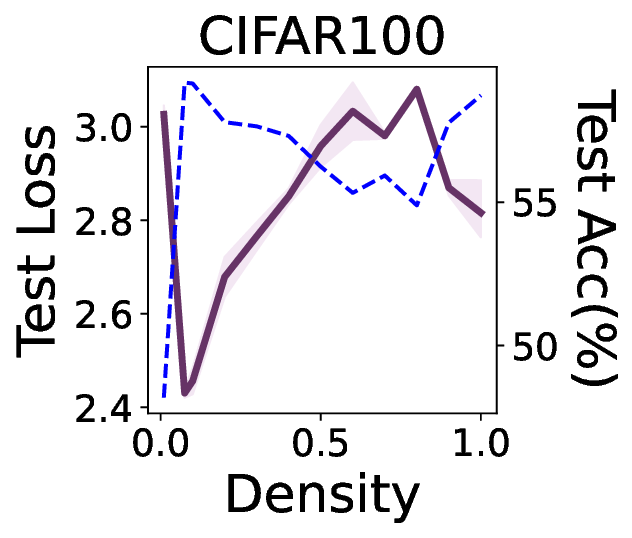}
    \caption{\small C100, Sym 60\%, width 128}
    \label{}
\end{subfigure}
\hspace{0.2cm}
\begin{subfigure}[b]{0.3\textwidth}
\centering
    \includegraphics[width=.86\textwidth]{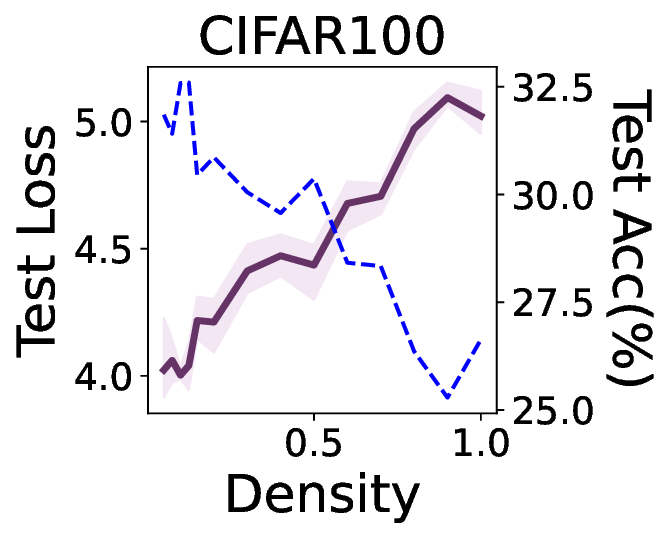}
    \caption{\small C100, Sym 80\%, width 64}
    \label{}
\end{subfigure}
\hspace{0.2cm}
\caption{Effect of density when ELR is used.}
\label{fig:elr_apdx_density}
\vspace{-.2cm}
\end{figure}

\begin{figure}[!t]
\centering
    \includegraphics[width=0.3\textwidth]{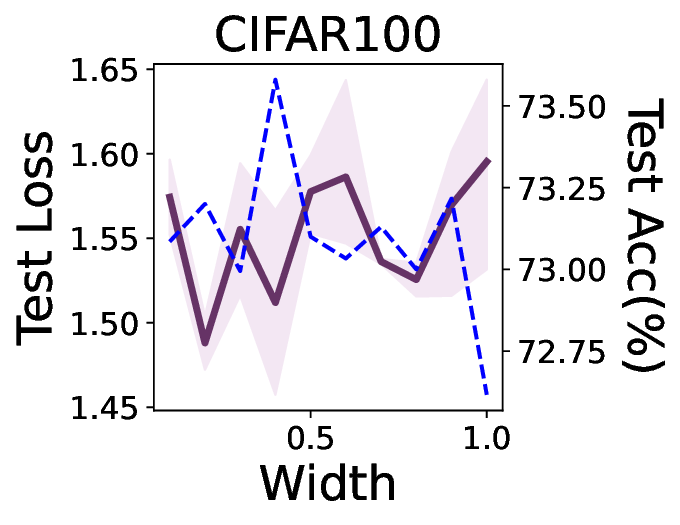}
\caption{Effect of density when DivideMix is used. We train the model on CIFAR-100 and set width to 64.}
\label{fig:dividemix_apdx_density}
\vspace{-.2cm}
\end{figure}

\begin{figure}[!t]
\centering
    \includegraphics[width=0.26\textwidth]{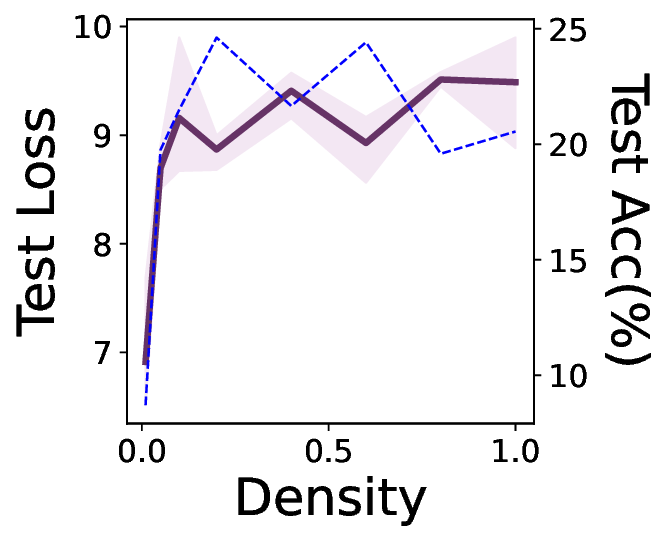}
\caption{Effect of density when we train an InceptionResNetV2 on Red Stanford Cars using ELR.}
\label{fig:stanford_car_elr}
\end{figure}

\section{Investigating the Complexity of Learned Functions}

A hypothesis proposed by \cite{belkin2019reconciling} for double descent suggests that the learning algorithm possesses the right inductive bias towards "low complexity" functions that generalize well while fitting the training set. It posits that larger model sizes, which correspond to richer function classes, provide the algorithm with more choices to discover functions with lower complexity. Given our finding that large noise alters the correlation between model size and generalization, a natural question arises: "Does large noise also change the correlation between model size and the complexity of learned functions?" 
In this section, we present empirical evidence suggesting the possibility of an affirmative answer. Recently, \cite{kalimeris2019sgd} qualitatively measured the complexity of neural networks based on how much their predictions could be explained by a smaller model. However, this approach assumes that "smaller models have lower complexity," which may not hold in our case. Instead, we consider the following three measures:
\begin{itemize}
    \item Trajectory length of the first layers bias. $\sum_{t\in T} \frac{\|\vct{b}_1^{(t+1)} -\vct{b}_1^{t}  \|_2}{\alpha_t \epsilon_{f^{(t)}}}$ where $T$ is a set of iteration indices, $\vct{b}_1^{(t)}$ is the parameter of the first layer bias at iteration $t$, and $\epsilon_{f^{(t)}}$ is the gradient of the loss w.r.t. the network's output at epoch $t$.  \cite{loukas2021training} shows that, under certain conditions, the above can be both upper and lower bounded in terms of the Lipschitz constants of functions represented by the network at iterations in $T$ (see their Theorem 1). This implies that the first layer bias travels longer during training when it is fitting a more complex function.
    
    \item Variance of the first layer bias $\text{avg}_{t\in T}\|\vct{b}_1^{(t)} - \text{avg}_{t\in T} \vct{b}_1^{(t)}\|_2^2$. \cite{loukas2021training} also provides a lower bound for the above in terms of the Lipschitz constant and  $\epsilon_{f^{(t)}}$ (their Corollary 2). Thus a larger Lipschitz constant leads to a higher lower bound, meaning that when fitting a lower complexity function, the network's bias will update more frequently during training.
    \item Product of spectral norms of the layer parameters. This is known as an upper bound for the network's Lipschitz constant \cite{szegedy2013intriguing}. For convolutional layers, the spectral norm is computed using the FFT-based algorithm in \cite{sedghi2018singular}.
\end{itemize}

Our experimental setup is the same as that of \cite{loukas2021training} (Task 2 in Section 6). We train CNNs on CIFAR-10 DOG vs AIRPLANE. The CNN consists of one identity layer, two convolutional layers with a kernel size of 5, a fully connected ReLU layer with a size of 384, and a linear layer. The width of the CNN is controlled by the number of convolutional channels, with $16w$ channels in each convolutional layer for width $w$. We employ binary cross-entropy (BCE) loss and train the models for 200 epochs using vanilla SGD with a batch size of 1. We apply exponential learning rate decay with a factor of $10^{-50}$. The trajectory length and variance are computed at every epoch, reflecting the complexity of the CNN's represented function at each stage. The results are depicted in Figure \ref{fig:traj_apdx}, showing that noise can alter the relative complexity of functions learned by models with increasing width or density. This trend is more pronounced in the case of width (Figures \ref{subfig:apdx_trajb_w} and \ref{subfig:apdx_varb_w}), where both quantities decrease under 0\% noise and increase under 40\% noise as width increases. Figure \ref{fig:spec_norm} presents the product of layer-wise spectral norms of the neural network at the final epoch, showing a similar pattern. These results suggests that label noise can invert the originally negative correlation between size and complexity/smoothness.

\begin{figure}[!t]
\centering
\begin{subfigure}[b]{0.23\textwidth}
\centering
    \includegraphics[width=\textwidth]{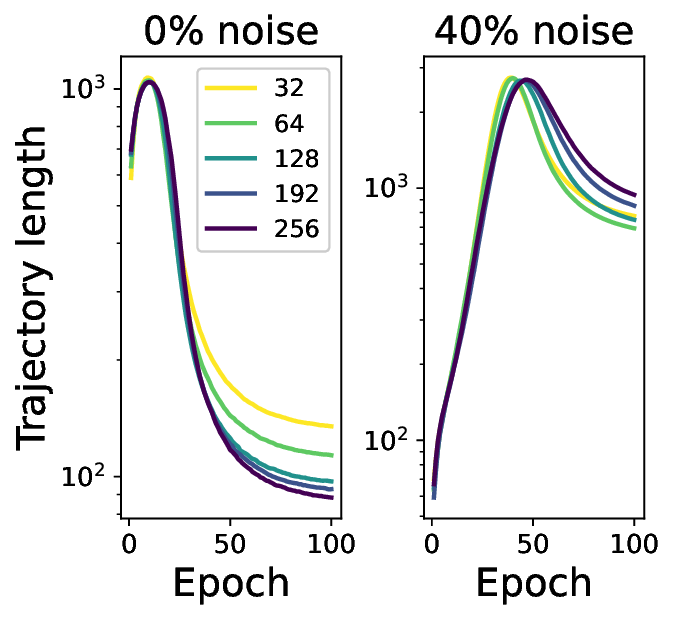}
    \caption{Trajectory, width}
    \label{subfig:apdx_trajb_w}
\end{subfigure}
\centering
\begin{subfigure}[b]{0.23\textwidth}
\centering
    \includegraphics[width=\textwidth]{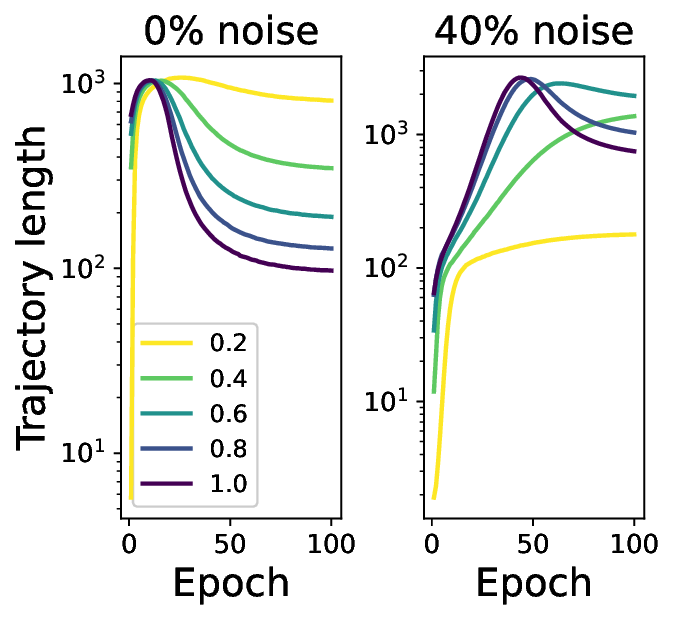}
    \caption{Trajectory, density}
    \label{}
\end{subfigure}
\centering
\begin{subfigure}[b]{0.23\textwidth}
\centering
    \includegraphics[width=\textwidth]{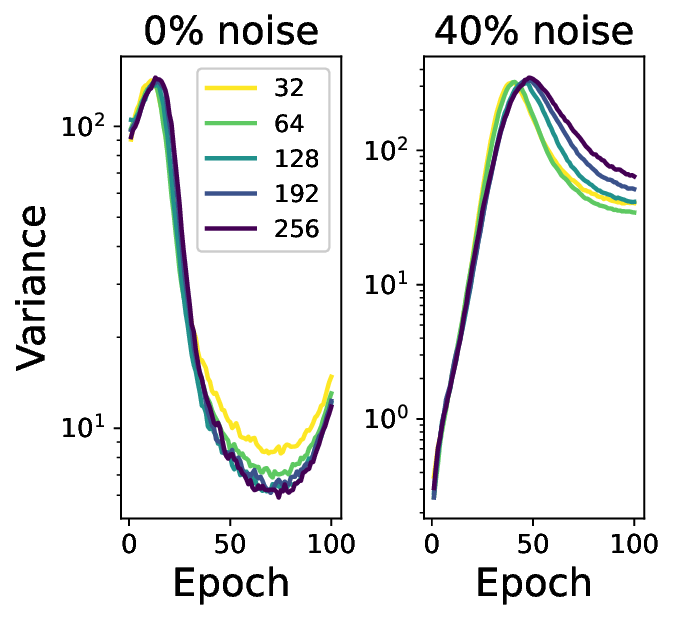}
    \caption{Variance, width}
    \label{subfig:apdx_varb_w}
\end{subfigure}
\centering
\begin{subfigure}[b]{0.23\textwidth}
\centering
    \includegraphics[width=\textwidth]{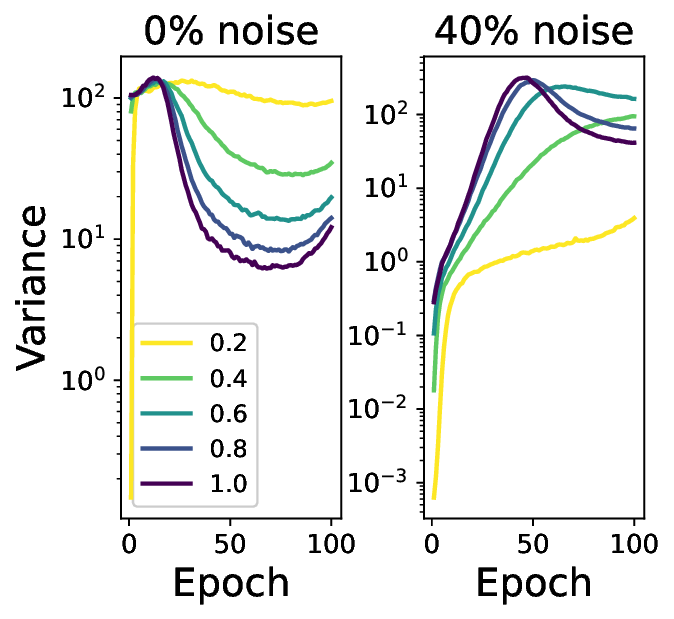}
    \caption{Variance, density}
    \label{}
\end{subfigure}
\caption{Trajectory length and variance of the first layer bias during each epoch. When varying the density we fix the width to 128.}
\label{fig:traj_apdx}
\vspace{-.2cm}
\end{figure}

\begin{figure}[!t]
\centering
\begin{subfigure}[b]{0.23\textwidth}
\centering
    \includegraphics[width=\textwidth]{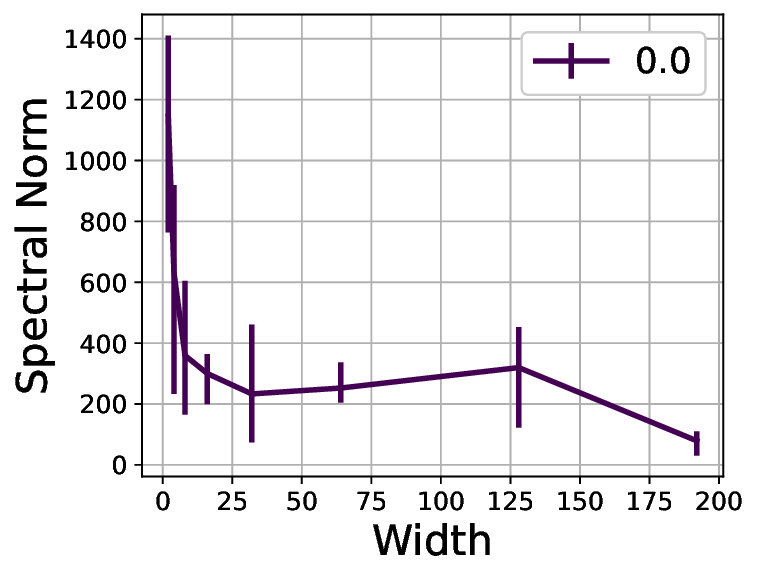}
    \caption{0\% noise, width}
    \label{}
\end{subfigure}
\centering
\begin{subfigure}[b]{0.23\textwidth}
\centering
    \includegraphics[width=\textwidth]{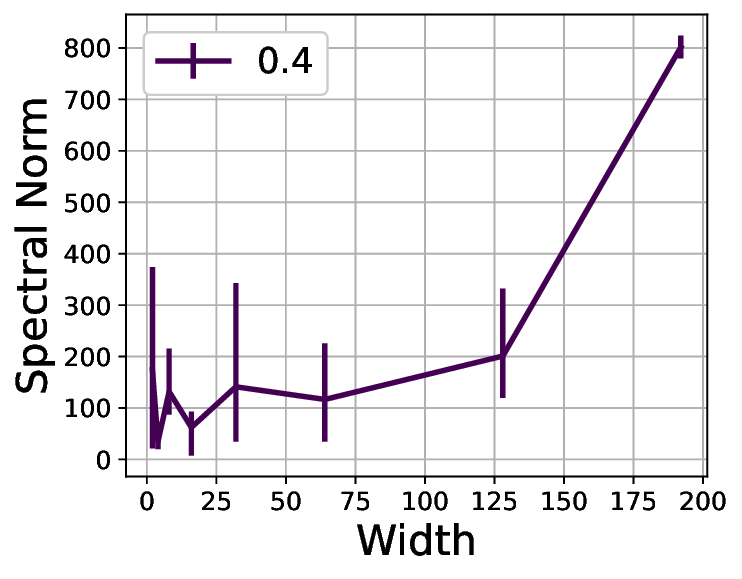}
    \caption{40\% noise, width}
    \label{}
\end{subfigure}
\centering
\begin{subfigure}[b]{0.23\textwidth}
\centering
    \includegraphics[width=\textwidth]{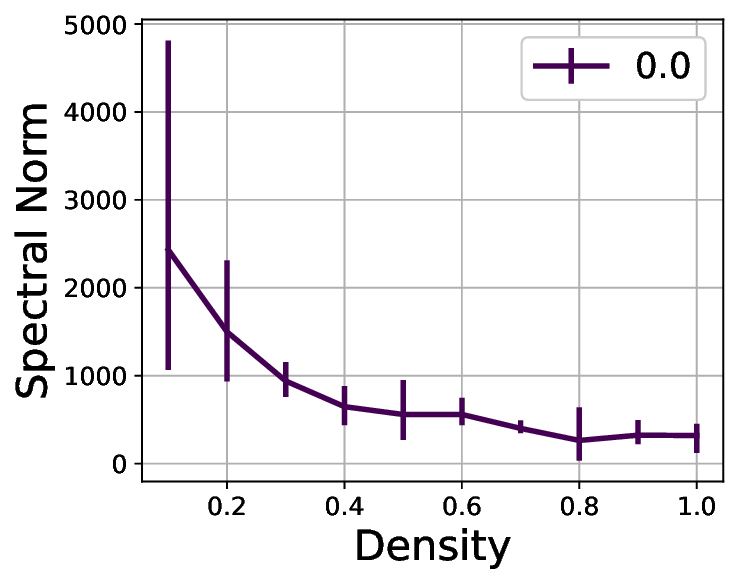}
    \caption{0\% noise, density}
    \label{}
\end{subfigure}
\centering
\begin{subfigure}[b]{0.23\textwidth}
\centering
    \includegraphics[width=\textwidth]{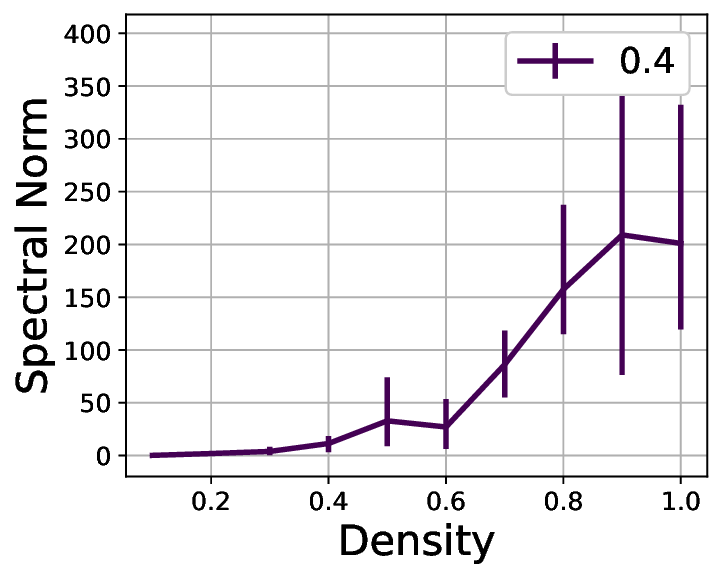}
    \caption{40\% noise, density}
    \label{}
\end{subfigure}
\caption{When width or density increases, spectral norm decreases under 0\% noise but increases under 40\% noise.}
\label{fig:spec_norm}
\vspace{-.2cm}
\end{figure}

\section{Potential Connection to Benign/Catastrophic Overfitting}
The term `benign overfitting' describes the phenomenon where models trained to overfit the training set still achieve nearly optimal generalization performance \cite{bartlett2020benign,tsigler2020benign,chatterji2021interplay,cao2022benign,frei2022benign,mallinar2022benign}. Recently, \cite{cao2022benign} demonstrated that sufficiently large models exhibit benign overfitting when the product of sample size and signal-to-noise ratio (SNR) is large, and catastrophic overfitting occurs otherwise. Notably, this condition coincides with the condition in our theory (Section \ref{sec: theory}) where the final ascent does not occur, since the SNR is represented as $1/\sigma^2$ in our setting. Consequently, our findings can be interpreted in the context of benign/catastrophic overfitting: when the noise-to-sample-size ratio is large, a `sufficiently large' model overfits catastrophically, implying that increasing the model size towards a sufficient extent may worsen generalization. Indeed, neural networks and various other real interpolating methods typically operate in the `tempered overfitting' regime \cite{mallinar2022benign}. Our results suggest that the condition for benign overfitting identified in \cite{cao2022benign} can potentially be extended to assess the relative `benignity' of tempered overfitting across different model sizes. The theoretical establishment of this connection could be a valuable direction for future research. \looseness=-1

\end{document}